\renewcommand\footnotemark{}
\newtheorem{lem}{Lemma}
\newtheorem{thm}{Theorem}
\newtheorem{defn}{Definition}
\newtheorem{coro}{Corollary}
\newtheorem{rem}{Remark}
\newcommand{\E}[1]{\mathbb{E}\left[{#1}\right]}
\newcommand{\Esub}[2]{\mathbb{E}_{#1}\left[{#2}\right]}
\newcommand{\wts}{\mathbf{w}}
\newcommand{\mb}{m}
\newcommand{\iters}{J}
\newcommand{\learners}{P}
\newcommand{\lips}{L}
\crefname{equation}{}{}
\Crefname{equation}{}{}
\crefname{thm}{theorem}{theorems}
\Crefname{thm}{Theorem}{Theorems}
\crefname{clm}{claim}{claims}
\Crefname{clm}{Claim}{Claims}
\Crefname{coro}{Corollary}{Corollaries}
\Crefname{lem}{Lemma}{Lemmas}
\Crefname{sec}{Section}{Sections}
\crefname{app}{appendix}{appendices}
\Crefname{app}{Appendix}{Appendices}
\Crefname{part}{Part}{Parts}
\crefname{prop}{proposition}{propositions}
\Crefname{prop}{Proposition}{Propositions}
\Crefname{propty}{Property}{Properties}
\crefname{figure}{figure}{figures}
\Crefname{figure}{Figure}{Figures}
\crefname{defn}{definition}{definitions}
\Crefname{defn}{Definition}{Definitions}
\crefname{fact}{fact}{facts}
\Crefname{fact}{Fact}{Facts}
\crefname{appendix}{appendix}{appendices}
\Crefname{appendix}{Appendix}{Appendices}
\crefname{algo}{algorithm}{algorithms}
\Crefname{algo}{Algorithm}{Algorithms}
\crefname{algorithm}{algorithm}{algorithms}
\Crefname{algorithm}{Algorithm}{Algorithms}
\crefname{conj}{conjecture}{conjectures}
\Crefname{conj}{Conjecture}{Conjectures}
\crefname{obs}{observation}{observations}
\Crefname{obs}{Observation}{Observations}
\crefname{rem}{remark}{remarks}
\Crefname{rem}{Remark}{Remarks}
\begin{document}
%

%


\title{Slow and Stale Gradients Can Win the Race: Error-Runtime Trade-offs in Distributed SGD}

\author{Sanghamitra Dutta$^1$, Gauri Joshi$^1$,  Soumyadip Ghosh$^2$. \\ Parijat Dube$^2$, and Priya Nagpurkar$^2$ \\
\thanks{S. Dutta and G. Joshi are with the Department of Electrical and Computer Engineering, Carnegie Mellon University. S. Ghosh, P. Dube and P. Nagpurkar are with IBM Research. This work was done when G. Joshi was a research staff member at IBM Research and S. Dutta was an intern. Author Contacts: S. Dutta (sanghamd@andrew.cmu.edu), G. Joshi (gaurij@andrew.cmu.edu), S. Ghosh (ghoshs@us.ibm.com), P. Dube (pdube@us.ibm.com) and P. Nagpurkar (pnagpurkar@us.ibm.com).}
\thanks{Presented at the International Conference on Artificial Intelligence and Statistics (AISTATS) 2018, Lanzarote, Spain.}
\normalsize $^1$Carnegie Mellon University,   $^2$IBM Research
}







\maketitle
\begin{abstract}

Distributed Stochastic Gradient Descent (SGD) when run in a synchronous manner, suffers from delays in waiting for the slowest learners (stragglers). Asynchronous methods can alleviate stragglers, but cause gradient staleness that can adversely affect convergence. In this work we present a novel theoretical characterization of the speed-up offered by asynchronous methods by analyzing the trade-off between the error in the trained model and the actual training runtime (wallclock time). The novelty in our work is that our runtime analysis considers random straggler delays, which helps us design and compare distributed SGD algorithms that strike a balance between stragglers and staleness. We also present a new convergence analysis of asynchronous SGD variants without bounded or exponential delay assumptions, and a novel learning rate schedule to compensate for gradient staleness.
\end{abstract}
%

\section{INTRODUCTION}
%
%
Stochastic gradient descent (SGD) is the backbone of most state-of-the-art machine learning algorithms. Thus, improving the stability and convergence rate of SGD algorithms is critical for making machine learning algorithms fast and efficient.


Traditionally SGD is run serially at a single node. However, for massive datasets, running SGD serially at a single server can be \textit{prohibitively} slow. A solution that has proved successful in recent years is to parallelize the training across many learners (processing units). This method was first used at a large-scale in Google's DistBelief \cite{dean2012large} which used a central parameter server (PS) to aggregate gradients computed by learner nodes. While parallelism dramatically speeds up training, distributed machine learning frameworks face several challenges such as: 

\textbf{Straggling Learners.} In synchronous SGD, the PS waits for all learners to push gradients before it updates the model parameters. Random delays in computation (referred to as straggling) are common in today's distributed systems \cite{dean2013tail}. Waiting for slow and straggling learners can diminish the speed-up offered by parallelizing the training.


\textbf{Gradient Staleness.} To alleviate the problem of stragglers, SGD can be run in an asynchronous manner, where the central parameters are updated without waiting for all learners. However, learners may return \emph{stale} gradients that were evaluated at an older version of the model, and this can make the algorithm unstable. 

The key contributions of this work are: 
\begin{enumerate}[leftmargin=*]
\item Most SGD algorithms optimize the trade-off between training error, and the number of iterations or epochs. However, the wallclock time per iteration is a random variable that depends on the gradient aggregation algorithm. We present a rigorous analysis of the trade-off between error and the actual runtime (instead of iterations), \textit{modelling runtimes as random variables with a general distribution.} This analysis is then used to compare different SGD variants such as $K$-sync SGD, $K$-async SGD and $K$-batch-async SGD, as illustrated in \Cref{fig:error runtime tradeoff}.
\item We present a new convergence analysis of asynchronous SGD and some of its variants, where we relax several commonly made assumptions such as bounded delays and gradients, exponential service times, and independence of the staleness process.
 \item We propose a novel learning rate schedule to compensate for gradient staleness, and improve the stability and convergence of asynchronous SGD, while preserving its fast runtime. 
\end{enumerate}

 \begin{figure}[t]
 \centering
\includegraphics[height=5cm]{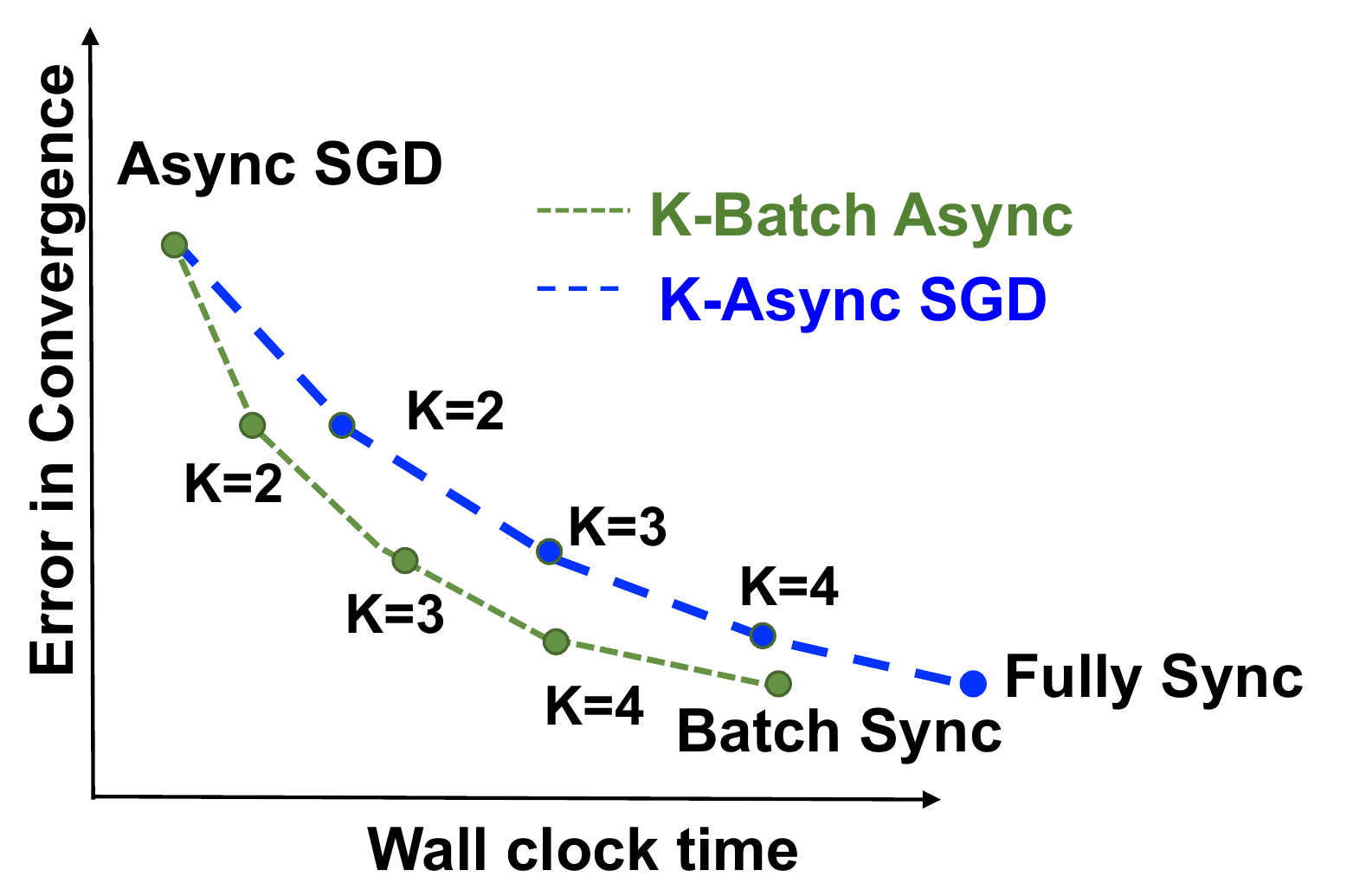}
\caption{SGD variants span the error-runtime trade-off between fully Sync-SGD and fully Async-SGD. $K$ is the number of learners or mini-batches the PS waits for before updating the model parameters, as we elaborate in \Cref{sec:system model}.}
\label{fig:error runtime tradeoff}
\end{figure}

\subsection{RELATED WORKS}

\textbf{Single Node SGD:}
Analysis of gradient descent dates back to classical works \cite{boyd2004convex} in the optimization community. The problem of interest is the minimization of empirical risk of the form:
\begin{equation}
\min_{\wts} \left\{ F(\wts)\overset{\text{def}}{=} \frac{1}{N}\sum_{n=1}^{N} f(\wts, \xi_n) \right\}.
\label{eq:minimization}
\end{equation}
Here, $\xi_n$ denotes the $n-$th data point and its label where $n=1,2,\dots,N$, and $f(\wts, \xi_n)$ denotes the composite loss function. Gradient descent is a way to iteratively minimize this objective function by updating the parameter $\wts$ in the opposite direction of the gradient of $F(\wts)$ at every iteration, as given by: $$\wts_{j+1}=\wts_{j}-\eta \nabla F(\wts_{j}) = \wts_{j} - \frac{\eta}{N} \sum_{n=1}^N \nabla f(\wts_{j}, \xi_n).$$
The computation of $\sum_{n=1}^N \nabla f(\wts_{j}, \xi_n)$ over the entire dataset is expensive. Thus, stochastic gradient descent \cite{robbins1951stochastic} with mini-batching is generally used in practice, where the gradient is evaluated over small, randomly chosen subsets of the data. Smaller mini-batches result in higher variance of the gradients, which affects convergence and error floor \cite{dekel2012optimal, li2014efficient, bottou2016optimization}. Algorithms such as AdaGrad \cite{duchi2011adaptive} and Adam \cite{kingma2015adam} gradually reduce learning rate to achieve a lower error floor. Another class of algorithms includes stochastic variation reduction techniques that include SVRG \cite{johnson2013accelerating}, SAGA \cite{roux2012stochastic} and their variants listed out in \cite{nguyen2017sarah}. For a detailed survey of different SGD variants, refer to \cite{ruder2016overview}.



\textbf{Synchronous SGD and Stragglers:} 
To process large datasets, SGD is parallelized across multiple learners with a central PS. Each learner processes one mini-batch, and the PS aggregates all the gradients. The convergence of synchronous SGD is same as mini-batch SGD, with a $P$-fold larger mini-batch, where $P$ is the number of learners. However, the time per iteration grows with the number of learners, because some straggling learners that slow down randomly \cite{dean2013tail}. Thus, it is important to juxtapose the error reduction per iteration with the runtime per iteration to understand the true convergence speed of distributed SGD. 

To deal with stragglers and speed up machine learning, system designers have proposed several straggler mitigation techniques such as \cite{harlap2016addressing} that try to detect and avoid stragglers. An alternate direction of work is to use redundancy techniques, e.g., replication or erasure codes, as proposed in \cite{joshi2014delay,wang2015using,joshi2015queues, joshi2017efficient, lee2017speeding, tandon2017gradient,dutta2016short,halbawi2017improving,yang2017coded,yang2016fault,yu2017polynomial,karakus2017encoded,karakus2017straggler,charles2017approximate,li2017terasort,fahim2017optimal,ye2018communication,li2018fundamental,NewsletterPaper,DNNPaperISIT,mallick2018rateless} to deal with the stragglers, as also discussed in \Cref{rem:redundancy_techniques}.

\textbf{Asynchronous SGD and Staleness:}
A complementary approach to deal with the issue of straggling is to use asynchronous SGD. In asynchronous SGD, any learner can evaluate the gradient and update the central PS without waiting for the other learners. Asynchronous variants of existing SGD algorithms have also been proposed and implemented in systems \cite{dean2012large, gupta2016model,cipar2013solving, cui2014exploiting,ho2013more}.

In general, analyzing the convergence of asynchronous SGD with the number of iterations is difficult in itself because of the randomness of gradient staleness. There are only a few pioneering works such as \cite{tsitsiklis1986distributed,lian2015asynchronous,mitliagkas2016asynchrony,recht2011hogwild,agarwal2011distributed, mania2017perturbed,chaturapruek2015asynchronous,zhang2016staleness,peng2016arock, hannah2017more, hannah2016unbounded,sun2017asynchronous,leblond2017asaga} in this direction. In \cite{tsitsiklis1986distributed}, a fully decentralized analysis was proposed that considers no central PS. In \cite{recht2011hogwild}, a new asynchronous algorithm called Hogwild was proposed and analyzed under bounded gradient and bounded delay assumptions. This direction of research has been followed upon by several interesting works such as \cite{lian2015asynchronous} which proposed novel theoretical analysis under bounded delay assumption for other asynchronous SGD variants. In \cite{peng2016arock, hannah2017more, hannah2016unbounded,sun2017asynchronous}, the framework of ARock was proposed for parallel co-ordinate descent and analyzed using Lyapunov functions, relaxing several existing assumptions such as bounded delay assumption and the independence of the delays and the index of the blocks being updated. In algorithms such as Hogwild, ARock etc. every learner only updates a part of the central parameter vector $\wts$ at every iteration and are thus essentially different in spirit from conventional asynchronous SGD settings \cite{lian2015asynchronous,agarwal2011distributed} where every learner updates the entire $\bm{\wts}$. In an alternate direction of work \cite{mania2017perturbed}, asynchrony is modelled as a perturbation. 
\subsection{OUR CONTRIBUTIONS}
Existing machine learning algorithms mostly try to optimize the trade-off of error with the number of iterations, epochs or ``work complexity'' \cite{bottou2016optimization}. Time to complete a task has traditionally been calculated in terms of work complexity measures \cite{sedgewick2011algorithms}, where the time taken to complete a task is a deterministic function of the size of the task (number of operations). However, due to straggling and synchronization bottle-necks in the system, the same task can often take different time to compute across different learners or iterations. We bring statistical perspective to the traditional work complexity analysis that incorporates the randomness introduced due to straggling. In this paper, we provide a systematic approach to analyze the expected error with runtime for both synchronous and asynchronous SGD, and some variants like $K$-sync, $K$-batch-sync, $K$-async and $K$-batch-async SGD by modelling the runtimes at each learner as i.i.d.\ random variables with a general distribution. 

We also propose a new error convergence analysis for Async and $K$-async SGD that holds for strongly convex objectives and can also be extended to non-convex formulations. In this analysis we relax the bounded delay assumption in \cite{lian2015asynchronous} and the bounded gradient assumption in \cite{recht2011hogwild}. We also remove the assumption of exponential computation time and the staleness process being independent of the parameter values \cite{mitliagkas2016asynchrony} as we will elaborate in \Cref{subsec:main_async_fixed}. Interestingly, our analysis also brings out the regimes where asynchrony can be better or worse than synchrony in terms of speed of convergence. Further, we propose a new learning rate schedule to compensate for staleness, and stabilize asynchronous SGD \textcolor{black}{that is related but different from momentum tuning in \cite{mitliagkas2016asynchrony, zhang2017yellowfin} as we clarify in \Cref{rem:variable}.}

The rest of the paper is organized as follows.  \Cref{sec:system model} describes our problem formulation introducing the system model and assumptions. \Cref{sec:main results} provides the main results of the paper -- analytical characterization of expected runtime and new convergence analysis for Async and $K$-async SGD and the proposed learning rate schedule to compensate for staleness. 
The analysis of expected runtime is elaborated further in \Cref{sec:runtime}. Proofs and detailed discussions are presented in the Appendix.  
%
\section{PROBLEM FORMULATION}
\label{sec:system model}
Our objective is to minimize the risk function of the parameter vector $\wts$ as mentioned in \cref{eq:minimization} given $N$ training samples. Let $S$ denote the total set of $N$ training samples, \textit{i.e.}, a collection of some data points with their corresponding labels or values. We use the notation $\xi$ to denote a random seed $ \in S$ which consists of either a single data and its label or a single mini-batch ($\mb$ samples) of data and their labels. 
\subsection{SYSTEM MODEL}
We assume that there is a central parameter server (PS) with $P$ parallel learners as shown in \Cref{fig:param_server}. The learners fetch the current parameter vector $\wts_j$ from the PS as and when instructed in the algorithm. Then they compute gradients using one mini-batch and push their gradients back to the PS as and when instructed in the algorithm. At each iteration, the PS aggregates the gradients computed by the learners and updates the parameter $\wts$. Based on how these gradients are fetched and aggregated, we have different variants of synchronous or asynchronous SGD.

\begin{figure}[t]
\centerline{\includegraphics[width=5cm]{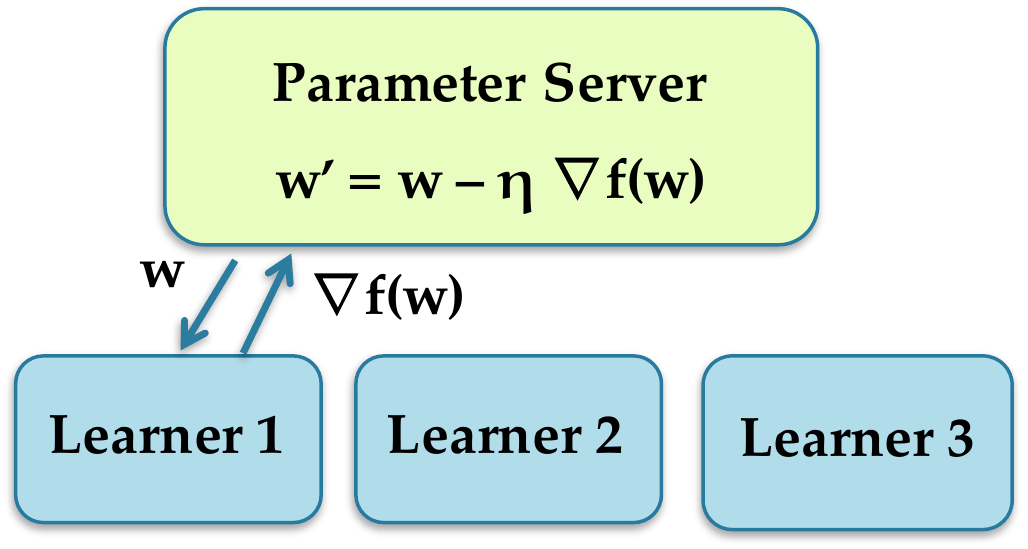}}
\caption{Parameter Server Model}
\label{fig:param_server}
\end{figure}

The time taken by a learner to compute gradient of one mini-batch is denoted by random variable $X_i$ for $i=1,2,\dots,P$. We assume that the $X_i$s are i.i.d.\ across mini-batches and learners.    

\subsection{PERFORMANCE METRICS}
There are two metrics of interest: \textcolor{black}{Expected Runtime and Error.}
\begin{defn}[Expected Runtime per iteration]
The expected runtime per iteration is the expected time (average time) taken to perform each iteration, \textit{i.e.}, the expected time between two consecutive updates of the parameter $\wts$ at the central PS.
\end{defn}

\begin{defn}[Expected Error]
The expected error after $j$ iterations is defined as $\E{F(\wts_j) - F^*}$, the expected gap of the risk function from its optimal value.
\end{defn}

Our aim is to determine the trade-off between the expected error (measures the accuracy of the algorithm) and the expected runtime \textbf{after a total of $J$ iterations} for the different SGD variants. 
\subsection{VARIANTS OF SGD}
We now describe the SGD variants considered in this paper. Please refer to \Cref{fig:ksync} and \Cref{fig:kasync} for a pictorial illustration.

\textbf{$K$-sync SGD}: This is a generalized form of synchronous SGD, also suggested in \cite{gupta2016model,chen2016revisiting} to offer some resilience to straggling as the PS does not wait for all the learners to finish. The PS only waits for the first $K$ out of $P$ learners to push their gradients. Once it receives $K$ gradients, it updates $ \wts_{j}$ and cancels the remaining learners. The updated parameter vector $\wts_{j+1}$ is sent to all $P$ learners for the next iteration. The update rule is given by:
\begin{equation}
\wts_{j+1} = \wts_j - \frac{\eta}{K}\sum_{l=1}^K g(\wts_{j}, \xi_{l,j}).
\label{eq:ksync}
\end{equation}
Here $l=1,2,\ldots,K$ denotes the index of the $K$ learners that finish first, $\xi_{l,j}$ denotes the mini-batch of $m$ samples used by the $l$-th learner at the $j$-th iteration and  $g(\wts_{j}, \xi_{l,j})= \frac{1}{m} \sum_{\xi \in \xi_{l,j} }  \nabla f(\wts_{j}, \xi)$ denotes the average gradient of the loss function evaluated over the mini-batch  $\xi_{l,j}$ of size $m$. For $K=P$, the algorithm is exactly equivalent to a fully synchronous SGD with $P$ learners. 

\textbf{$K$-batch-sync:} In $K$-batch-sync, all the $P$ learners start computing gradients with the same $\wts_j$. Whenever any learner finishes, it pushes its update to the PS and evaluates the gradient on the next mini-batch at the same $\wts_j$. The PS updates using the first $K$ mini-batches that finish and cancels the remaining learners. Theoretically, the update rule is still the same as \cref{eq:ksync} but here $l$ now denotes the index of the mini-batch (out of the $K$ mini-batches that finished first) instead of the learner. However $K$-batch-sync will offer advantages over $K$-sync in runtime per iteration as no learner is idle. 
 \begin{figure}[t]
\centerline{\includegraphics[height=3cm]{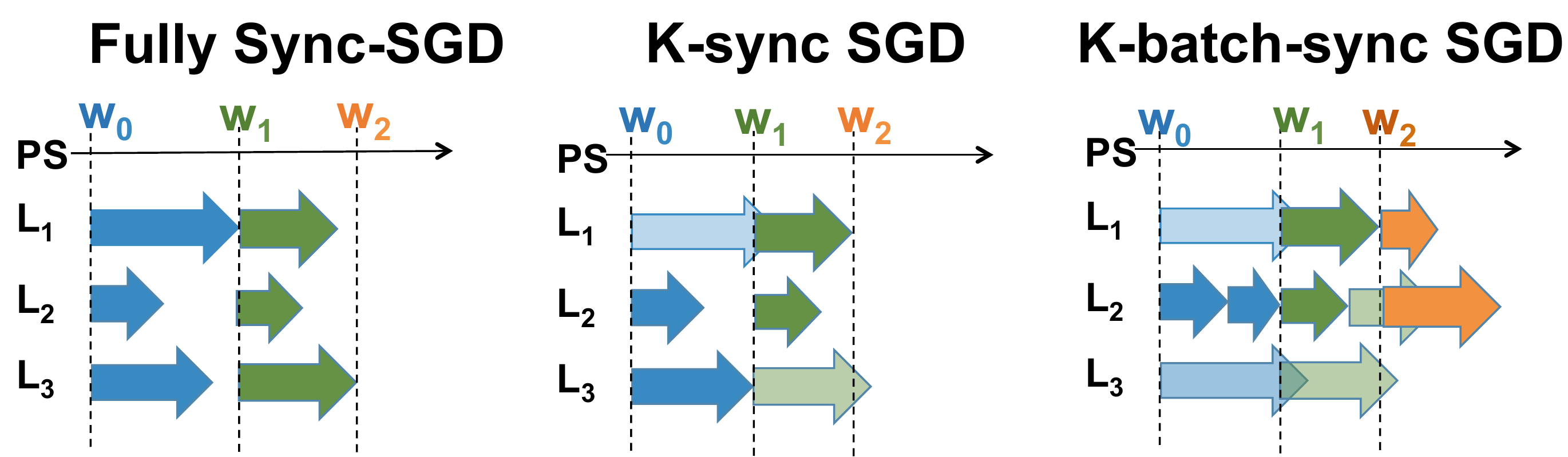}}
\caption{For $K=2$ and $P=3$, we illustrate the $K$-sync and $K$-batch-sync SGD in comparison with fully synchronous SGD. Lightly shaded arrows indicate straggling gradient computations that are cancelled.}
\label{fig:ksync}
\end{figure}
\begin{figure}[t]
\centerline{\includegraphics[height=3cm]{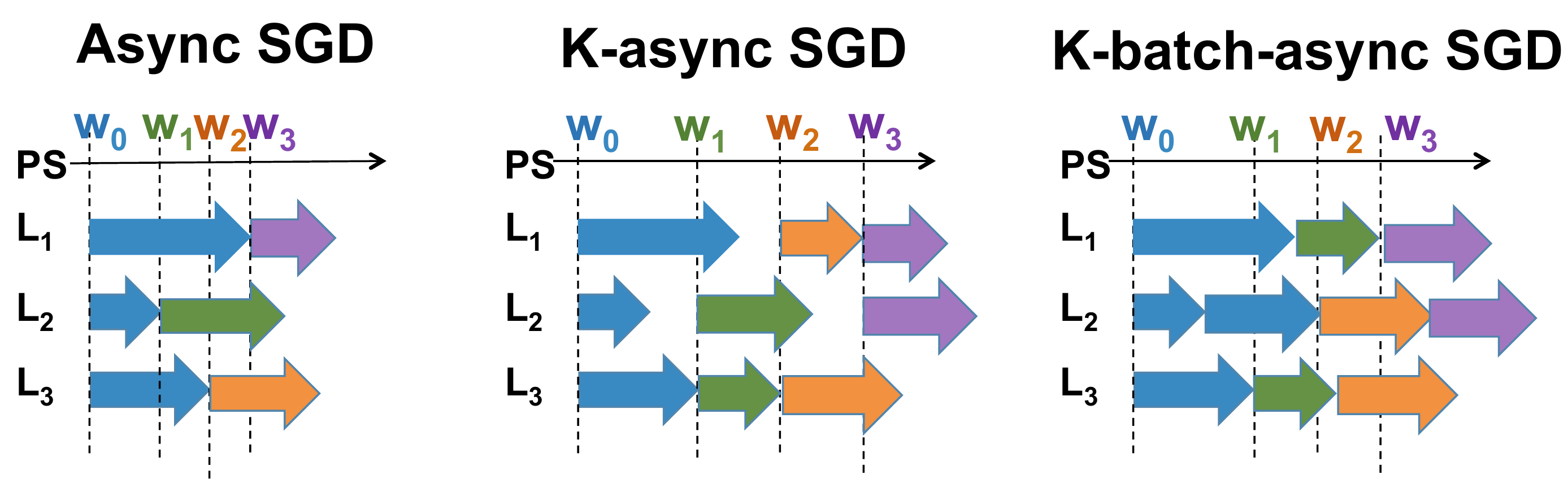}}
\caption{For $K=2$ and $P=3$, we illustrate the $K$-async and $K$-batch-async algorithms in comparison with fully asynchronous SGD.}
\label{fig:kasync}
\end{figure}

\textbf{$K$-async SGD}:
This is a generalized version of asynchronous SGD, also suggested in \cite{gupta2016model}. In $K$-async SGD, all the $P$ learners compute their respective gradients on a single mini-batch. The PS waits for the first $K$ out of $P$ that finish first, but it does not cancel the remaining learners. As a result, for every update the gradients returned by each learner might be computed at a stale or older value of the parameter $\wts$. The update rule is thus given by:
\begin{equation}
\wts_{j+1} = \wts_j - \frac{\eta}{K} \sum_{l=1}^K g(\wts_{\tau(l,j)}, \xi_{l,j}).
\label{eq:kasync}
\end{equation}
Here $l=1,2,\ldots,K$ denotes the index of the $K$ learners that contribute to the update at the corresponding iteration,  $\xi_{l,j}$ is one mini-batch of $m$ samples used by the $l$-th learner at the $j$-th iteration and $\tau(l,j)$ denotes the iteration index when the $l$-th learner last read from the central PS where $\tau(l,j) \leq j $.  Also, $g(\wts_{\tau(l,j)}, \xi_{l,j})= \frac{1}{m} \sum_{\xi \in \xi_{l,j} }  \nabla f(\wts_{\tau(l,j)}, \xi_{l,j})$ is the average gradient of the loss function evaluated over the mini-batch $\xi_{l,j}$ based on the stale value of the parameter $\wts_{\tau(l,j)}$.  For $K=1$, the algorithm is exactly equivalent to fully asynchronous SGD, and the update rule can be simplified as:
\begin{equation}
\wts_{j+1} = \wts_j - \eta g(\wts_{\tau(j)}, \xi_{j}).
\end{equation}
Here $\xi_{j}$ denotes the set of samples used by the learner that updates at the $j$-th iteration such that $|\xi_{j}| = m$ and $\tau(l,j)$ denotes the iteration index when that particular learner last read from the central PS. Note that $\tau(j) \leq j $.  

\textbf{$K$-batch-async:} Observe in \Cref{fig:kasync} that $K$-async also suffers from some learners being idle while others are still working on their gradients until any $K$ finish. In $K$-batch-async (proposed in \cite{lian2015asynchronous}), the PS waits for $K$ mini-batches before updating itself but irrespective of which learner they come from.  So wherever any learner finishes, it pushes its gradient to the PS, fetches current parameter at PS and starts computing gradient on the next mini-batch based on the current value of the PS. Surprisingly, the update rule is again similar to \cref{eq:kasync} theoretically except that now $l$ denotes the indices of the $K$ mini-batches that finish first instead of the learners and $\wts_{\tau(l,j)}$ denotes the version of the parameter when the learner computing the $l-$th mini-batch last read from the PS.  While the error convergence of $K$-batch-async is similar to $K$-async, it reduces the runtime per iteration as no learner is idle.

\begin{rem} 
\label{rem:redundancy_techniques}
Recent works such as \cite{tandon2017gradient} propose erasure coding techniques to overcome straggling learners. Instead, the SGD variants considered in this paper such as $K$-sync and $K$-batch-sync SGD exploit the inherent redundancy in the data itself, and ignore the gradients returned by straggling learners. If the data is well-shuffled such that it can be assumed to be i.i.d.\ across learners, then for the same effective batch-size, ignoring straggling gradients will give equivalent error scaling as coded strategies, and at a lower computing cost. However, coding strategies may be useful in the non i.i.d.\ case, when the gradients supplied by each learner provide diverse information that is important to capture in the trained model.
\end{rem}

\subsection{ASSUMPTIONS}
Closely following \cite{bottou2016optimization}, we also make the following assumptions:
\begin{enumerate}[leftmargin=*]
\item $F(\wts)$ is an $\lips-$ smooth function. Thus,
\begin{equation}
||\nabla F(\wts_1)- \nabla F(\wts_2)||_2 \leq \lips ||\wts_1 -\wts_2 ||_2. 
\end{equation}
\item $F(\wts)$ is strongly convex with parameter $c$. Thus,
\begin{equation}
\label{eq:strong-convexity}
2c(F(\wts)-F^* ) \leq ||\nabla F(\wts)||_2^2 \ \ \forall \  \wts.
\end{equation}
Refer to \Cref{sec:strong_convexity} for discussion on strong convexity. Our results also extend to non-convex objectives, as discussed in \Cref{sec:main results}.
\item The stochastic gradient is an unbiased estimate of the true gradient:
\begin{equation}
\Esub{\xi_{j}|
\wts_k}{g(\wts_k,\xi_{j})}= \nabla F(\wts_k) \ \ \forall \  k \leq j.
\end{equation}
Observe that this is slightly different from the common assumption that says $\Esub{\xi_{j}}
{g(\wts,\xi_{j})}= \nabla F(\wts)$ for all $\wts$.
Observe that all $\wts_j$ for $j>k$ is actually not independent of the data $\xi_{j}$.  We thus make the assumption more rigorous by conditioning on $\wts_k$ for $k \leq j$. Our requirement $k \leq j$ means that $\wts_k$ is the value of the parameter at the PS before the data $\xi_{j}$ was accessed and can thus be assumed to be independent of the data $\xi_{j}$.

\item Similar to the previous assumption, we also assume that the variance of the stochastic update given $\wts_k$ at iteration $k$ before the data point was accessed is also bounded as follows:
\begin{align}
&\Esub{\xi_{j}|\wts_k}{||g(\wts_k,\xi_{j})- \nabla F(\wts_k) ||_2^2} 
\leq \frac{\sigma^2}{m} + \frac{M_G}{m} ||\nabla F(\wts_k) ||_2^2 \ \forall \ k \leq j.
\end{align}
\end{enumerate}

In the following \Cref{sample-table}, we provide a list of the notations used in this paper for referencing.

\begin{table}[!ht]
\begin{center}
\begin{tabular}{llll}
CONSTANTS &  & RANDOM VARIABLES\\
\hline 
Mini-batch Size         & $m$ & Runtime of a learner for one mini-batch & $X_i$\\
Total Iterations            & $J$  & Runtime per iteration & $T$\\
Number of learners (Processors) & $P$ \\
Number of learners to wait for & $K$ \\
Learning rate & $\eta$ \\
Lipschitz Constant & $L$ \\
Strong-convexity parameter & $c$\\
\hline
\end{tabular}
\end{center}
\caption{LIST OF NOTATIONS}
 \label{sample-table}
\end{table}

\section{MAIN RESULTS}
\label{sec:main results}
\subsection{RUNTIME ANALYSIS}
\label{subsec:main_runtime}
We compare the theoretical wall clock runtime of the different SGD variants to illustrate the speed-up offered by different asynchronous and batch variants. A detailed discussion is provided in \Cref{sec:runtime}.  

\begin{thm} Let the wall clock time of each learner to process a single mini-batch be i.i.d.\ random variables $X_1, X_2,\dots,X_P$. Then the ratio of the expected runtimes per iteration for synchronous and asynchronous SGD is
$$ \frac{\E{T_{Sync}}}{\E{T_{Async}}}=P \frac{\E{X_{P:P}}}{\E{X}}$$
where $X_{(P:\learners)}$ is the $P^{th}$ order statistic of $P$ i.i.d.\ random variables $X_1, X_2, \dots , X_{\learners}$. 
\label{thm:runtime1}
\end{thm}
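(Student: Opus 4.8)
The plan is to evaluate $\E{T_{Sync}}$ and $\E{T_{Async}}$ separately and then divide. For fully synchronous SGD (the $K=P$ case), at the start of an iteration the PS hands the current parameter $\wts_j$ to all $P$ learners and updates only after the \emph{slowest} of them returns its gradient. Since the per-mini-batch computation times $X_1,\dots,X_P$ are i.i.d., the duration of one iteration is exactly $\max_i X_i = X_{P:P}$, the $P$-th order statistic, so $\E{T_{Sync}} = \E{X_{P:P}}$.

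For fully asynchronous SGD I would argue via a counting-process / renewal argument. In the asynchronous scheme no learner is ever idle: the instant a learner pushes a gradient it fetches the current PS parameter and immediately begins a fresh mini-batch, whose computation time is a new independent copy of $X$ by the i.i.d.-across-mini-batches assumption. Hence each learner $i$ generates a renewal process $N_i(t)$ of mini-batch completions with inter-renewal times distributed as $X$, and the total number of PS updates up to wall-clock time $t$ is $N(t)=\sum_{i=1}^{P} N_i(t)$. By the elementary renewal theorem $\E{N_i(t)}/t \to 1/\E{X}$, so $\E{N(t)}/t \to P/\E{X}$; equivalently, the long-run expected time between two consecutive updates is $\E{T_{Async}} = \E{X}/P$. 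Dividing gives $\E{T_{Sync}}/\E{T_{Async}} = \E{X_{P:P}}\,/\,(\E{X}/P) = P\,\E{X_{P:P}}/\E{X}$, as claimed.

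The main obstacle is pinning down the correct notion of ``expected runtime per iteration'' in the asynchronous case. The superposition of $P$ independent renewal processes is not itself a renewal process unless $X$ is exponential, so the successive inter-update gaps are neither independent nor identically distributed; $\E{T_{Async}}$ must therefore be read as the time-average limit $\lim_{J\to\infty}\frac{1}{J}\,\E{\text{(time to complete $J$ updates)}}$, and the elementary renewal theorem then yields the stated value up to an $o(1)$ transient correction. A secondary point worth stating explicitly is that the staleness of the parameter values on which learners compute is irrelevant to the runtime: the wall-clock duration depends only on the $X_i$, which is precisely what lets us decouple the runtime analysis from the optimization dynamics. Finally, I would remark that the same bookkeeping extends to the $K$-sync, $K$-async, and $K$-batch variants by replacing $X_{P:P}$ with the appropriate order statistic and $P$ with $K$ where relevant, as elaborated in \Cref{sec:runtime}.
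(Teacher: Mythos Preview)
Your proposal is correct and mirrors the paper's proof almost exactly: the paper computes $\E{T_{Sync}}=\E{X_{P:P}}$ via the order-statistic observation (\Cref{lem:runtime ksync} with $K=P$) and $\E{T_{Async}}=\E{X}/P$ via the same elementary-renewal-theorem argument applied to the superposition of $P$ learner processes (\Cref{lem:runtime kbatch} with $K=1$), then divides. Your added caveat that $\E{T_{Async}}$ must be read as a long-run time average because the superposition is not itself a renewal process is a welcome clarification that the paper leaves implicit.
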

This result analytically characterizes the speed-up offered by asynchronous SGD for \textit{any general distribution on the wall clock time of each learner}. To prove this result, we use ideas from renewal theory, as we discuss in \Cref{sec:runtime}. In the following corollary, we highlight this speed-up for the special case of exponential computation time.
\begin{coro}
\label{coro:syncAsync}
Let the wall clock time of each learner to process a single mini-batch be i.i.d.\ exponential random variables $X_1, X_2,\dots,X_P \sim \exp({\mu})$. Then the ratio of the expected runtimes per iteration for synchronous and asynchronous SGD is approximately given by $P \log{P}$.
\end{coro}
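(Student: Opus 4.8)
The plan is to invoke \Cref{thm:runtime1} directly and then evaluate the two expectations appearing in the ratio for the exponential distribution. By \Cref{thm:runtime1},
$$\frac{\E{T_{Sync}}}{\E{T_{Async}}} = P\,\frac{\E{X_{P:P}}}{\E{X}},$$
so it suffices to compute $\E{X}$ and $\E{X_{P:P}}$ when $X \sim \exp(\mu)$. The first is immediate: $\E{X} = 1/\mu$.

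For the expected maximum, the key step is the classical order-statistics decomposition of i.i.d.\ exponentials. By the memoryless property, the spacings between consecutive order statistics $X_{(k:P)} - X_{(k-1:P)}$ are independent and exponentially distributed with rate $(P-k+1)\mu$; equivalently, $X_{P:P}$ has the same distribution as $\sum_{k=1}^{P} E_k/(k\mu)$ with $E_1,\dots,E_P$ i.i.d.\ standard exponentials. Taking expectations,
$$\E{X_{P:P}} = \frac{1}{\mu}\sum_{k=1}^{P}\frac{1}{k} = \frac{H_P}{\mu},$$
where $H_P$ denotes the $P$-th harmonic number. (Alternatively, one integrates the tail $\Pr\!\left[X_{P:P} > t\right] = 1 - (1-e^{-\mu t})^P$ over $t \ge 0$ and expands the binomial.) Substituting both expressions into the ratio gives $\E{T_{Sync}}/\E{T_{Async}} = P H_P$, and the standard asymptotic $H_P = \log P + \gamma + O(1/P)$ (with $\gamma$ the Euler--Mascheroni constant) yields $P H_P = P\log P + \gamma P + O(1) \approx P\log P$ for large $P$, which is the claimed approximation.

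I do not expect any real obstacle here: the statement is a direct specialization of \Cref{thm:runtime1}, and the only mildly non-routine ingredient is recalling the exponential order-statistics decomposition needed to evaluate $\E{X_{P:P}}$; everything else is elementary arithmetic together with the well-known harmonic-number asymptotic.
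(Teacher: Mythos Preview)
Your proposal is correct and follows essentially the same approach as the paper: invoke \Cref{thm:runtime1}, compute $\E{X_{P:P}} = \sum_{i=1}^{P} \frac{1}{i\mu} \approx \frac{\log P}{\mu}$ for i.i.d.\ exponentials, and substitute. Your version is simply more explicit about the order-statistics spacing decomposition and the harmonic-number asymptotic, which the paper summarizes by citing a reference.
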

Thus, the speed-up scales with $P$ and can diverge to infinity for large $P$. We illustrate the speed-up for different distributions in \Cref{fig:syncAsync}. It might be noted that a similar speed-up as \Cref{coro:syncAsync} has also been obtained in a recent work \cite{hannah2017more} under exponential assumptions.

 \begin{figure}[t]
\centerline{\includegraphics[height=4.5cm]{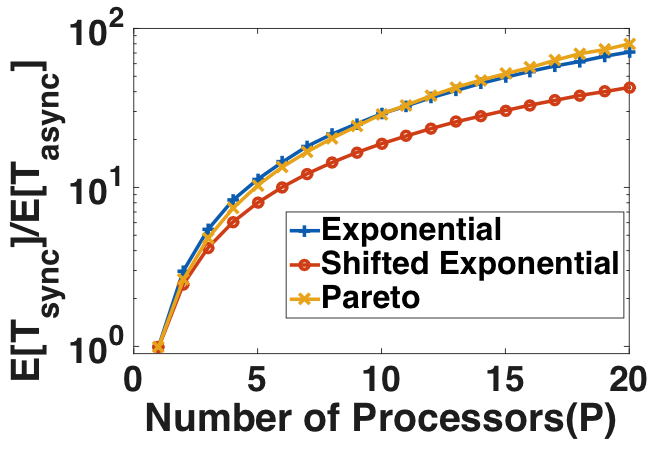}}
\caption{Plot of the speed-up using asynchronous over synchronous: $\log{\frac{\E{T_{Sync}}}{\E{T_{Async}}}}$ with $P$ for different distributions - $ \exp(1)$, $1+ \exp(1)$ and $Pareto(2,1)$.}
\label{fig:syncAsync}
\end{figure}

The next result illustrates the advantages offered by $K$-batch-sync and async over their corresponding counterparts $K$-sync and $K$-async respectively.
\begin{thm}
\label{thm:runtime2}
Let the wall clock time of each learner to process a single mini-batch be i.i.d.\  exponential random variables $X_1, X_2,\dots,X_P \sim \exp({\mu})$. Then the ratio of the expected runtimes per iteration for $K$-async (or sync) SGD and $K$-batch-async (or sync) SGD is 
$$ \frac{\E{T_{K-async}}}{\E{T_{K-batch-async}}} =\frac{P\E{X_{K:P} } }{K\E{X} }  \approx \frac{P \log{\frac{P}{P-K} } }{K}  $$
where $X_{K:P}$ is the $K^{th}$ order statistic of i.i.d.\ random variables $X_1,X_2,\dots,X_P$.
\end{thm}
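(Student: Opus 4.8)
The plan is to compute $\E{T_{K-async}}$ and $\E{T_{K-batch-async}}$ separately and then take their ratio, using the memorylessness of the exponential distribution throughout. First I would analyze $K$-async SGD — and simultaneously $K$-sync SGD, whose per-iteration runtime is identical. Immediately after a PS update, all $P$ learners begin a fresh mini-batch (the $K$ contributors restart on $\wts_{j+1}$, and in $K$-async the $P-K$ ``stale'' learners keep running on their older parameter); either way, by the memoryless property every learner's residual computation time at this instant is $\exp(\mu)$. The iteration ends when $K$ of these $P$ clocks ring, and since a finished learner idles until the update (contributing exactly one gradient per round), the number of busy learners decreases $P \to P-1 \to \cdots \to P-K+1$. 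Hence the successive inter-completion gaps are independent $\exp(P\mu), \exp((P-1)\mu), \dots, \exp((P-K+1)\mu)$ variables, so the inter-update interval equals the order statistic $X_{K:P}$ in distribution, giving $\E{T_{K-async}} = \E{X_{K:P}} = \frac{1}{\mu}\sum_{i=P-K+1}^{P}\frac{1}{i} = \frac{1}{\mu}\left(H_P - H_{P-K}\right)$, where $H_n$ is the $n$-th harmonic number.

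Next I would handle $K$-batch-async SGD — and likewise $K$-batch-sync, again with the same runtime. Here no learner ever idles: the moment one finishes a mini-batch it fetches the current parameter and draws a fresh $\exp(\mu)$, and the PS updates after every $K$ completions. So exactly $P$ learners are always busy, and by memorylessness the gaps between successive completions are i.i.d.\ $\exp(P\mu)$; the inter-update interval is a sum of $K$ of them, yielding $\E{T_{K-batch-async}} = \frac{K}{P\mu} = \frac{K}{P}\E{X}$. This is the same renewal-reward reasoning behind \Cref{thm:runtime1}: a bank of $P$ always-busy i.i.d.\ servers completes work at long-run rate $P/\E{X}$.

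Dividing the two expectations gives the exact ratio $\E{T_{K-async}}/\E{T_{K-batch-async}} = \E{X_{K:P}} \big/ \left(\tfrac{K}{P}\E{X}\right) = P\,\E{X_{K:P}}/(K\,\E{X})$; substituting $\E{X_{K:P}} = (H_P - H_{P-K})/\mu$ and $\E{X} = 1/\mu$ turns this into $P(H_P - H_{P-K})/K$, and the estimate $H_n = \log n + \gamma + o(1)$ gives $H_P - H_{P-K} \approx \log\frac{P}{P-K}$ (sharp when $P$ and $P-K$ are both large), producing the claimed approximation. The step that needs the most care is the reduction of the $K$-async inter-update interval to the clean order statistic $X_{K:P}$: one must check that the stale learners carried over from earlier rounds do not corrupt the count — memorylessness makes their residuals fresh $\exp(\mu)$ — and that it is precisely the ``idle after finishing'' behavior (one gradient per update) that forces the busy-server count to decay from $P$ to $P-K+1$, which is what distinguishes the resulting sum of exponentials from the constant-rate completion process of the batch variants. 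The remaining computations (exponential order-statistic moments, the harmonic estimate) are routine.
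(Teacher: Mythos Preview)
Your proposal is correct and follows essentially the same route as the paper: for $K$-async you invoke memorylessness so that all $P$ residual clocks are fresh $\exp(\mu)$ and the inter-update time is $X_{K:P}$ (exactly the paper's argument that equality holds in \Cref{lem:runtime kasync} for exponentials), and for $K$-batch-async you compute $\E{T}=K/(P\mu)=K\E{X}/P$. The only cosmetic difference is that the paper obtains the $K$-batch-async runtime by citing its general renewal result (\Cref{lem:runtime kbatch}), whereas you derive it directly via the exponential-specific Poisson/Erlang argument --- which is in fact the same computation the paper gives for $K$-batch-\emph{sync} in the appendix; both routes coincide under the exponential assumption.
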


To prove this, we derive an exact expression (see \Cref{lem:runtime kbatch} in \Cref{sec:runtime}) for the expected runtime of $K$-batch-async SGD, for \textit{any} given i.i.d.\ distribution of $X_i$s, not necessarily exponential. The expected runtime per iteration is obtained as $\frac{K\E{X}}{P}$, using ideas from renewal theory. The full proof of \Cref{thm:runtime2} is also provided in \Cref{sec:runtime}.

\Cref{thm:runtime2} shows that as $\frac{K}{P}$ increases, the speed-up using $K$-batch-async increases and can be upto $\log{P}$ times higher. For non-exponential distributions, we simulate the behaviour of expected runtime in \Cref{fig:runtime2} for $K$-sync, $K$-async and $K$-batch-async respectively for Pareto and Shifted Exponential.

\begin{figure}[t]
\centering
\includegraphics[width=13cm]{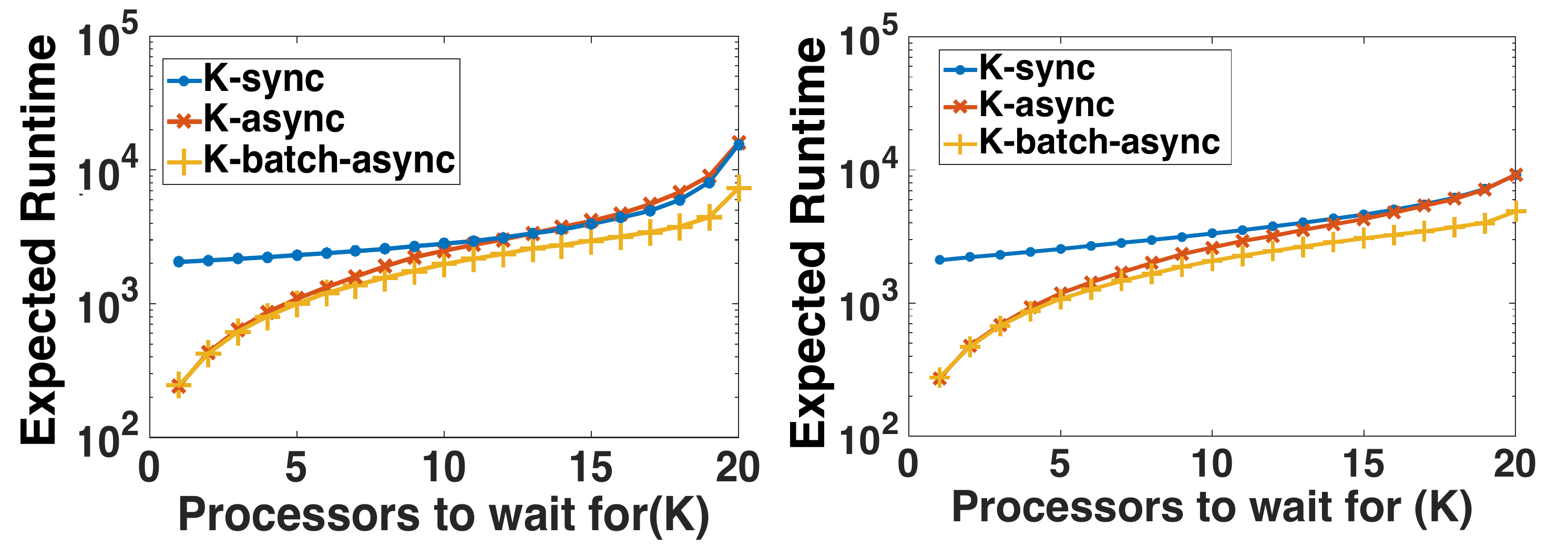}
\caption{Plot of expected runtime for $2000$ iterations: (Left) Pareto distribution $Pareto(2,1)$ and (Right) Shifted exponential distribution $1+ \exp(1)$.}
\label{fig:runtime2}
\end{figure}

\subsection{ERROR ANALYSIS UNDER FIXED LEARNING RATE}
\label{subsec:main_async_fixed}
\Cref{thm:error kasync} below gives a convergence analysis of $K$-async SGD for fixed $\eta$, relaxing the following assumptions in existing literature.
\begin{itemize}[leftmargin=*]
\item In several prior works such as \cite{mitliagkas2016asynchrony,lee2017speeding,dutta2016short,hannah2017more}, it is often assumed, for the ease of analysis, that runtimes are exponentially distributed. In this paper, we extend our analysis for any general service time $X_i$.
\item In \cite{mitliagkas2016asynchrony}, it is also assumed that the staleness process is independent of $\wts$. While this assumption simplifies the analysis greatly, it is not true in practice. For instance, for a two learner case, the parameter $\wts_2$ after $2$ iterations depends on whether the update from $\wts_1$ to $\wts_2$ was based on a stale gradient at $\wts_0$ or the current gradient at $\wts_1$, depending on which learner finished first. In this work, we remove this independence assumption.
\item Instead of the bounded delay assumption in \cite{lian2015asynchronous}, we use a general staleness bound $$\E{|| \nabla F(\wts_{j}) - \nabla F(\wts_{\tau(l,j)})||_2^2 } \leq \gamma \E{|| \nabla F(\wts_{j}) ||_2^2 }$$ which allows for large, but rare delays.
\item In \cite{recht2011hogwild}, the norm of the gradient is assumed to be bounded. However, if we assume that $||\nabla F(\wts) ||_2^2 \leq M$ for some constant $M$, then using \cref{eq:strong-convexity} we obtain $ ||\wts-\wts^*||_2^2 \leq \frac{2}{c} (F(\wts)-F^*) \leq \frac{M}{c^2} $ implying that $\wts$ itself is bounded which is a very strong and restrictive assumption, that we relax in this result.
\end{itemize}

\noindent Some of these assumptions have been addressed in the context of alternative asynchronous SGD variants in the recent works of \cite{hannah2017more,hannah2016unbounded,sun2017asynchronous,leblond2017asaga}.

\begin{thm}
\label{thm:error kasync}
Suppose the objective $F(\wts)$ is $c$-strongly convex and the learning rate $\eta \leq \frac{1}{2\lips\left(\frac{M_G}{Km}+ \frac{1}{K}\right)}$. Also assume that for some $\gamma \leq 1$, $$\E{|| \nabla F(\wts_{j}) - \nabla F(\wts_{\tau(l,j)})||_2^2 } \leq \gamma \E{|| \nabla F(\wts_{j}) ||_2^2 }.$$ Then, the error of $K$-async SGD after $J$ iterations is,
\begin{align}
& \E{F(\wts_{J})}-F^* 
\leq \frac{\eta L\sigma^2}{2c\gamma'K m} 
+ (1 - \eta c\gamma')^{J} \left(\E{F(\wts_{0})}-F^* - \frac{\eta L\sigma^2}{2c \gamma' Km}  \right)
\end{align}
where $\gamma'= 1-\gamma + \frac{p_0}{2}$ and $p_0$ is a lower bound on the conditional probability that $\tau(l,j)=j$, given all the past delays and parameters.
\end{thm}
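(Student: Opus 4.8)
The plan is to establish a one-step contraction of the form
$\E{F(\wts_{j+1})}-F^{*}\le(1-\eta c\gamma')\bigl(\E{F(\wts_{j})}-F^{*}\bigr)+\tfrac{\eta^{2}L\sigma^{2}}{2Km}$
and then unroll it over the $J$ iterations; summing the resulting geometric series with ratio $1-\eta c\gamma'$ gives exactly the stated closed form, whose fixed point is $\tfrac{\eta L\sigma^{2}}{2c\gamma' Km}$. To obtain the one-step bound I would start from $L$-smoothness, $F(\wts_{j+1})\le F(\wts_{j})+\langle\nabla F(\wts_{j}),\wts_{j+1}-\wts_{j}\rangle+\tfrac{L}{2}\norm{\wts_{j+1}-\wts_{j}}^{2}$, and substitute the update $\wts_{j+1}-\wts_{j}=-\tfrac{\eta}{K}\sum_{l=1}^{K}g(\wts_{\tau(l,j)},\xi_{l,j})$.

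The next step is to take expectations with the right conditioning. Let $\mathcal H_{j}$ collect all parameters and all delays up to, but not including, the $j$-th update; conditioned on $\mathcal H_{j}$ together with the realized staleness indices $\tau(l,j)$, the stale parameters $\wts_{\tau(l,j)}$ are fixed and the fresh mini-batch draws $\xi_{l,j}$ are independent across learners. Assumption~3 (unbiasedness at the read-time parameter) then turns the linear term into $-\tfrac{\eta}{K}\sum_{l}\E{\langle\nabla F(\wts_{j}),\nabla F(\wts_{\tau(l,j)})\rangle}$, while Assumption~4 (variance bound) together with the across-learner independence of the noise makes the second-order term at most $\tfrac{\eta^{2}L\sigma^{2}}{2Km}$ plus a sum of terms proportional to $\E{\norm{\nabla F(\wts_{\tau(l,j)})}^{2}}$; keeping the per-learner noise terms separate is precisely what keeps the $\sigma^{2}$ contribution at scale $1/(Km)$ rather than $1/m$.

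The heart of the argument is managing the staleness in the linear term. I would apply the polarization identity $\langle a,b\rangle=\tfrac12\norm{a}^{2}+\tfrac12\norm{b}^{2}-\tfrac12\norm{a-b}^{2}$ with $a=\nabla F(\wts_{j})$ and $b=\nabla F(\wts_{\tau(l,j)})$, and bound the cross-difference via the staleness hypothesis by $\tfrac{\gamma}{2}\E{\norm{\nabla F(\wts_{j})}^{2}}$; this already produces a descent coefficient $\tfrac{1-\gamma}{2}$ on $\E{\norm{\nabla F(\wts_{j})}^{2}}$ together with a leftover negative multiple of $\sum_{l}\E{\norm{\nabla F(\wts_{\tau(l,j)})}^{2}}$. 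The extra $p_0/2$ comes from this leftover: the learning-rate bound $\eta\le\tfrac{1}{2L(M_{G}/(Km)+1/K)}$ forces the net coefficient of $\sum_{l}\E{\norm{\nabla F(\wts_{\tau(l,j)})}^{2}}$, after adding in the positive second-order pieces, to remain non-positive, and on the event $\{\tau(l,j)=j\}$ the stale gradient coincides with the current one, so by the tower property $\E{\norm{\nabla F(\wts_{\tau(l,j)})}^{2}}\ge p_{0}\,\E{\norm{\nabla F(\wts_{j})}^{2}}$, since $\norm{\nabla F(\wts_{j})}^{2}$ is $\mathcal H_{j}$-measurable. Substituting this lower bound into the non-positive coefficient converts it into an additional $-\tfrac{\eta p_0}{4}\E{\norm{\nabla F(\wts_{j})}^{2}}$, so the aggregate coefficient becomes $-\tfrac{\eta}{2}\bigl(1-\gamma+\tfrac{p_0}{2}\bigr)=-\tfrac{\eta\gamma'}{2}$. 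Strong convexity, $\norm{\nabla F(\wts_{j})}^{2}\ge 2c\,(F(\wts_{j})-F^{*})$, then upgrades this to $-\eta c\gamma'\bigl(\E{F(\wts_{j})}-F^{*}\bigr)$, finishing the recursion. For $K=1$ one can check these constants come out exactly; for $K>1$ one additionally bounds the cross-learner inner products $\E{\langle\nabla F(\wts_{\tau(l,j)}),\nabla F(\wts_{\tau(l',j)})\rangle}$, which the variance bound and the learning-rate condition again keep dominated.

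The step I expect to be the main obstacle is exactly this conditioning-and-$p_0$ mechanism, since it is where the customary ``staleness process independent of the iterates'' assumption is dispensed with: one must set up the filtration so that unbiasedness and the variance bound can legitimately be invoked at the random, iterate-correlated read-times $\tau(l,j)$, and so that the conditional no-staleness probability can be factored out past the $\mathcal H_{j}$-measurable quantity $\norm{\nabla F(\wts_{j})}^{2}$. Obtaining the clean constant $\gamma'=1-\gamma+p_0/2$ (rather than merely some $\gamma'>0$) hinges on matching the slack in the learning-rate bound against the size of the $O(\eta^{2})$ remainder, which is tedious but purely mechanical.
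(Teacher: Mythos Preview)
Your proposal is correct and follows essentially the same route as the paper's own proof: smoothness expansion, polarization to split the inner product, the staleness assumption to control $\E{\norm{\nabla F(\wts_{j})-\nabla F(\wts_{\tau(l,j)})}^{2}}$, the variance bound on the aggregated stochastic gradient to get the $\sigma^{2}/(Km)$ scaling, the learning-rate condition to make the residual coefficient on $\sum_{l}\E{\norm{\nabla F(\wts_{\tau(l,j)})}^{2}}$ at most $-\eta/(4K)$, the $p_{0}$ lower bound (the paper's \Cref{lem:delay}) to convert that into $-\tfrac{\eta p_{0}}{4}\E{\norm{\nabla F(\wts_{j})}^{2}}$, and finally strong convexity and unrolling. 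The only cosmetic difference is that the paper applies the polarization identity to $\langle\nabla F(\wts_{j}),g(\wts_{\tau(l,j)},\xi_{l,j})\rangle$ \emph{before} taking expectation and then invokes a bias--variance lemma (their \Cref{lem:bias-variance}) to reduce $\E{\norm{\nabla F(\wts_{j})-g}^{2}}$ to $\E{\norm{g}^{2}}-\E{\norm{\nabla F(\wts_{\tau(l,j)})}^{2}}+\E{\norm{\nabla F(\wts_{j})-\nabla F(\wts_{\tau(l,j)})}^{2}}$, whereas you take the conditional expectation first and apply polarization directly to $\langle\nabla F(\wts_{j}),\nabla F(\wts_{\tau(l,j)})\rangle$; the two orderings collapse to the same inequality after one line of algebra.
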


Here, $\gamma$ is a measure of staleness of the gradients returned by learners; smaller $\gamma$ indicates less staleness. 

The full proof is provided in \Cref{sec:async_proof}. We first prove the result for $K=1$ in \Cref{subsec:async_proof} for ease of understanding, and then provide the more general proof for any $K$ in \Cref{subsec:K_async_proof}.
We use \Cref{lem:delay} below to prove \Cref{thm:error kasync}. 
\begin{lem} 
\label{lem:delay}
Suppose that $p_0^{(l,j)}$ is the conditional probability that $\tau(l,j)=j$ given all the past delays and all the previous $\wts$, and $p_0 \leq p_0^{(j)}$ for all $j$. Then,
\begin{equation}
\E{||\nabla F(\wts_{\tau(l,j)})||_2^2} \geq p_0 \E{||\nabla F(\wts_{j})||_2^2}.
\end{equation}
\end{lem}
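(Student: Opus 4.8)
The plan is to condition on the history that fixes all the iterates up to time $j$ and then use only the nonnegativity of the squared gradient norm together with the hypothesised lower bound on the conditional probability of a ``fresh'' read. Concretely, I would fix the iteration index $j$ and let $\mathcal{H}$ denote the conditioning information in the hypothesis, namely all past staleness values together with all the iterates $\wts_0,\wts_1,\dots,\wts_j$. The key point is that each of $\wts_0,\dots,\wts_j$ is determined by $\mathcal{H}$: indeed $\wts_j$ is built only from the gradients, mini-batches and delays of iterations $0,\dots,j-1$ and in particular does not depend on $\tau(l,j)$, so once the integer value of $\tau(l,j)\in\{0,1,\dots,j\}$ is revealed, the vector $\wts_{\tau(l,j)}$ is simply one of these known iterates. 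The only relevant remaining randomness is which index $\tau(l,j)$ equals, and by assumption $\Pr(\tau(l,j)=j \mid \mathcal{H})=p_0^{(l,j)}\ge p_0$.

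Next I would expand the conditional expectation over the possible values of the delay,
\[
\E{\norm{\nabla F(\wts_{\tau(l,j)})}_2^2 \mid \mathcal{H}}
= \sum_{t=0}^{j}\Pr(\tau(l,j)=t \mid \mathcal{H})\,\norm{\nabla F(\wts_t)}_2^2,
\]
and discard every summand except $t=j$; since every term is nonnegative this gives
\[
\E{\norm{\nabla F(\wts_{\tau(l,j)})}_2^2 \mid \mathcal{H}} \ \ge\ p_0^{(l,j)}\,\norm{\nabla F(\wts_j)}_2^2 \ \ge\ p_0\,\norm{\nabla F(\wts_j)}_2^2 .
\]
Taking expectations of both sides and applying the tower property yields $\E{\norm{\nabla F(\wts_{\tau(l,j)})}_2^2}\ge p_0\,\E{\norm{\nabla F(\wts_j)}_2^2}$, which is exactly the claim.

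The only part that needs care — more a bookkeeping issue than a genuine obstacle — is the measurability setup: one must verify that $\wts_j$ (and hence each candidate $\wts_t$ with $t\le j$) is $\mathcal{H}$-measurable, so it can be treated as a constant under the conditional expectation, while $\tau(l,j)$ still has the stated conditional law given $\mathcal{H}$. This is precisely what the phrase ``given all the past delays and all the previous $\wts$'' encodes in the hypothesis, so once the filtration is chosen correctly the remaining steps are immediate from nonnegativity of $\norm{\cdot}_2^2$ and the tower property.
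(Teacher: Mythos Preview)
Your argument is correct and follows essentially the same route as the paper's proof: condition on the event $\tau(l,j)=j$, drop the remaining nonnegative contributions, and use $p_0^{(l,j)}\ge p_0$. The paper does this in one line via the law of total expectation, whereas you make the conditioning on the history $\mathcal{H}$ and the tower property explicit; your version is in fact slightly more careful about why $\wts_j$ can be treated as $\mathcal{H}$-measurable while $\tau(l,j)$ retains its conditional law, a point the paper leaves implicit.
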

\begin{proof}
By the law of total expectation,
\begin{align*}
\E{||\nabla F(\wts_{\tau(l,j)})||_2^2} &= p_0^{(l,j)} \E{||\nabla F(\wts_{\tau(l,j)})||_2^2|\tau(j)= j}    
+ (1-p_0^{(l,j)}) \E{||\nabla F(\wts_{\tau(l,j)})||_2^2|\tau(j) \neq j} \nonumber \\
&  \geq p_0 \E{||\nabla F(\wts_{j})||_2^2}. \hspace{2.2cm} 
\end{align*}
\end{proof}

For the exponential distribution, $p_0$ is equal to $\frac{1}{P}$ as we discuss in \Cref{lem:p_0}. For non-exponential distributions, it is a constant in $[0,1]$. For some special classes of distributions like new-longer-than-used (new-shorter-than-used) as defined in \Cref{defn:new-longer-than-used}, we can formally show that $p_0$ lies in $[0,\frac{1}{P}] $ ($[\frac{1}{P},1]$) respectively. The following \Cref{lem:p_0} below provides bounds on $p_0$.

\begin{lem}[Bounds on $p_0$]
\label{lem:p_0}
Define $p_0=\inf_{j}p_0^{(j)}$, \textit{i.e.} the largest constant such that $p_0 \leq p_0^{(j)} \ \forall \ j$.
\begin{itemize}
\item For exponential computation times, $p_0^{(j)} = \frac{1}{P}$ for all $j$ and is thus invariant of $j$ and $p_0=\frac{1}{P}$.
\item For new-longer-than-used (See \Cref{defn:new-longer-than-used}) computation times, $p_0^{(j)} \leq \frac{1}{P} $ and thus $p_0 \leq \frac{1}{P} $.
\item For new-shorter-than-used computation times, $p_0^{(j)} \geq \frac{1}{P} $ and thus $p_0 \geq \frac{1}{P} $.
\end{itemize}
\end{lem}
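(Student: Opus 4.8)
The plan is to compute or bound the conditional probability $p_0^{(j)}$ that the learner updating at iteration $j$ read the most recent parameter $\wts_j$, i.e. that $\tau(j)=j$, conditioned on the full history of past delays and parameters. For fully asynchronous SGD ($K=1$), exactly one learner finishes and triggers the update at iteration $j$; the event $\tau(j)=j$ means that this same learner also triggered the previous update (iteration $j-1$) and immediately started recomputing on the fresh parameter, so that no other learner was ``in flight'' with an older read. Equivalently, among the $P$ learners, the one that finishes next is the one who started most recently. I would set this up by conditioning on the state of the system just after iteration $j-1$: at that instant one learner has just been reset to start a fresh computation (call its computation time $X_{\text{new}}$), while the other $P-1$ learners have been computing for some elapsed times $a_1,\dots,a_{P-1}\ge 0$ determined by the history. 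The event $\{\tau(j)=j\}$ is then the event that the freshly started learner finishes before all the others, i.e.
\[
p_0^{(j)} = \Pr\!\left(X_{\text{new}} \le \min_{i=1,\dots,P-1}\big(X_i - a_i \,\big|\, X_i > a_i\big)\right),
\]
where all $X$'s are i.i.d.\ copies of the service distribution and the $i$-th competitor's residual time is its service time conditioned on having survived past its elapsed age $a_i$.

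Next I would evaluate this for the three cases. For exponential service times, the memoryless property makes each residual time $X_i - a_i \mid X_i > a_i$ again exponential with the same rate $\mu$, independent of $a_i$; hence all $P$ times in the competition (the fresh one and the $P-1$ residuals) are i.i.d.\ exponentials, so by symmetry the fresh learner wins with probability exactly $\frac{1}{P}$, giving $p_0^{(j)}=\frac1P$ for every $j$ and thus $p_0=\frac1P$. For the general cases I would invoke \Cref{defn:new-longer-than-used}: ``new-longer-than-used'' means a fresh unit stochastically dominates any used (conditioned-on-age) residual lifetime, i.e. $\Pr(X > t) \le \Pr(X - a > t \mid X > a)$ for all $t, a \ge 0$; this makes each residual $X_i - a_i \mid X_i > a_i$ stochastically \emph{larger} than a fresh copy, so replacing them by fresh i.i.d.\ copies can only \emph{increase} the chance that $X_{\text{new}}$ is the minimum, and in that all-fresh comparison the probability is $\frac1P$ by symmetry; hence $p_0^{(j)}\le\frac1P$. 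The ``new-shorter-than-used'' case is the exact mirror: residuals are stochastically smaller than fresh copies, so $X_{\text{new}}$ wins with probability at least $\frac1P$, giving $p_0^{(j)}\ge\frac1P$. Taking the infimum over $j$ transfers each bound to $p_0$.

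The main obstacle is making the conditioning rigorous: the elapsed ages $a_1,\dots,a_{P-1}$ of the in-flight learners are themselves random and correlated with the past history, so I need to argue that the stochastic-dominance comparison holds \emph{conditionally} on any realization of those ages (and of the history), and only then average out. The clean way is to condition on the entire history $\mathcal{H}_{j-1}$, which fixes the ages $a_i$ as constants, establish the inequality pointwise in $a_i$ using the new-longer/shorter-than-used property applied coordinatewise, and then take expectations over $\mathcal{H}_{j-1}$ — the direction of the inequality is preserved. A secondary subtlety is that in the exponential case one should note the competition may also involve a learner that has just been cancelled/reset differently under the precise algorithm semantics, but since all residuals reduce to fresh $\exp(\mu)$ regardless, the symmetry argument is unaffected. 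I would also remark that one must be slightly careful that ``fresh'' learner's clock truly starts at the update instant (no residual carryover), which is exactly the $K=1$ asynchronous update semantics described after \cref{eq:kasync}.
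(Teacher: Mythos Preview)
Your approach is essentially the paper's: freeze the system state right after an update, identify one ``fresh'' learner (age $0$) and $P-1$ learners with positive ages, and compare residual lifetimes to fresh ones via the memoryless property (exponential case) or stochastic dominance (NLU/NSU cases). The conditioning-then-averaging plan you outline is exactly the right way to make the argument rigorous, and your conclusions $p_0^{(j)}=\frac1P$, $\le\frac1P$, $\ge\frac1P$ are the correct ones.

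However, your treatment of the new-longer-than-used case has the inequality reversed. Per \Cref{defn:new-longer-than-used}, NLU means
\[
\Pr(X - a > t \mid X > a)\;\le\;\Pr(X > t),
\]
i.e.\ the residual is stochastically \emph{smaller} than a fresh copy (``new is longer than used''), which is the opposite of the displayed formula you wrote and of your claim that residuals are ``stochastically larger.'' With the correct direction, the $P-1$ aged competitors have \emph{shorter} residual times than fresh copies, so the fresh learner $X_{\text{new}}$ is \emph{less} likely to be the minimum than in the all-fresh symmetric race, yielding $p_0^{(j)}\le\frac1P$. Your sentence ``replacing them by fresh i.i.d.\ copies can only increase the chance that $X_{\text{new}}$ is the minimum'' is actually correct under the right premise (residuals smaller $\Rightarrow$ replacing by larger fresh copies helps $X_{\text{new}}$), but as written it contradicts your own preceding claim that residuals are larger. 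Fix the direction of the dominance and the paragraph becomes internally consistent; the NSU case then mirrors it cleanly.
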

The proof is provided in \Cref{subsec:proof_lem_p_0}.

For $K$-batch-async, the update rule is same as $K$-async except that the index $l$ denotes the index of the mini-batch. Thus, the error analysis will be exactly similar. Our analysis can also be extended to non-convex $F(\wts)$ as we show in \Cref{subsec:non_convex}.

Now let us compare with $K$-sync SGD. We observe that the analysis of $K$-sync SGD is same as serial SGD with mini-batch size $Km$. Thus,
\begin{lem}[Error of $K$-sync]
 \cite{bottou2016optimization}
Suppose that the objective $F(\wts)$ is $c$-strongly convex and learning rate $\eta \leq \frac{1}{2\lips(\frac{M_G}{Km}+ 1)}$. Then, the error after $J$ iterations of $K$-sync SGD is
\begin{align*}
\E{F(\wts_{\iters}) - F^*}&\leq \frac{ \eta \lips \sigma^2}{2 c (K\mb)} +  (1- \eta c )^{J} \left( F(\wts_0) - F^* - \frac{ \eta \lips \sigma^2}{2 c (K\mb)} \right).
\end{align*}
\label{lem:error ksync}
\end{lem}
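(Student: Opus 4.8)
The plan is to treat $K$-sync SGD exactly as serial mini-batch SGD with an effective mini-batch of $K\mb$ samples and to mirror the standard descent argument of \cite{bottou2016optimization}. Write $\bar g_j \overset{\text{def}}{=} \frac{1}{K}\sum_{l=1}^{K} g(\wts_j,\xi_{l,j})$ for the averaged gradient that drives the update $\wts_{j+1} = \wts_j - \eta\,\bar g_j$. Since the per-learner runtimes $X_i$ are independent of the randomly drawn mini-batches, the identity of the $K$ learners that finish first carries no information about the contents of their mini-batches; hence, conditioned on $\wts_j$, the $K$ stochastic gradients $g(\wts_j,\xi_{l,j})$ are i.i.d., each unbiased for $\nabla F(\wts_j)$ with variance at most $\frac{\sigma^2}{\mb} + \frac{M_G}{\mb}\norm{\nabla F(\wts_j)}_2^2$ by the model's unbiasedness and bounded-variance assumptions. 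Averaging $K$ of them therefore gives $\E{\bar g_j \mid \wts_j} = \nabla F(\wts_j)$ and, by independence,
$$\E{\norm{\bar g_j}_2^2 \mid \wts_j} \;\leq\; \norm{\nabla F(\wts_j)}_2^2 + \frac{1}{K}\left(\frac{\sigma^2}{\mb} + \frac{M_G}{\mb}\norm{\nabla F(\wts_j)}_2^2\right).$$

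Next I would invoke $\lips$-smoothness, $F(\wts_{j+1}) \leq F(\wts_j) + \nabla F(\wts_j)^{\top}(\wts_{j+1}-\wts_j) + \frac{\lips}{2}\norm{\wts_{j+1}-\wts_j}_2^2$, substitute the update rule, and take the conditional expectation given $\wts_j$. Using the two facts above and collecting the $\norm{\nabla F(\wts_j)}_2^2$ terms yields
$$\E{F(\wts_{j+1}) \mid \wts_j} \;\leq\; F(\wts_j) - \eta\left(1 - \frac{\eta\lips}{2}\Big(1+\tfrac{M_G}{K\mb}\Big)\right)\norm{\nabla F(\wts_j)}_2^2 + \frac{\eta^2\lips\sigma^2}{2K\mb}.$$
The hypothesis $\eta \leq \frac{1}{2\lips\left(\frac{M_G}{K\mb}+1\right)}$ forces $\frac{\eta\lips}{2}\big(1+\frac{M_G}{K\mb}\big) \leq \frac14$, so the parenthesized factor is at least $\frac12$ and the bound simplifies to $\E{F(\wts_{j+1}) \mid \wts_j} \leq F(\wts_j) - \frac{\eta}{2}\norm{\nabla F(\wts_j)}_2^2 + \frac{\eta^2\lips\sigma^2}{2K\mb}$. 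Strong convexity \cref{eq:strong-convexity} gives $\norm{\nabla F(\wts_j)}_2^2 \geq 2c(F(\wts_j)-F^*)$; subtracting $F^*$ and taking total expectations then produces the one-step contraction
$$\E{F(\wts_{j+1})-F^*} \;\leq\; (1-\eta c)\,\E{F(\wts_j)-F^*} + \frac{\eta^2\lips\sigma^2}{2K\mb}.$$

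Finally I would unroll this linear recursion. Writing $a_j := \E{F(\wts_j)-F^*}$ and using its fixed point $a^{\star} := \frac{\eta\lips\sigma^2}{2cK\mb}$, the recursion reads $a_{j+1}-a^{\star} \leq (1-\eta c)(a_j-a^{\star})$, so $a_J \leq a^{\star} + (1-\eta c)^{J}(a_0-a^{\star})$ with $a_0 = F(\wts_0)-F^*$ — exactly the claimed bound. Here $1-\eta c \in [0,1)$ because $\eta c \leq \eta\lips \leq \tfrac12$ (using the standard fact $c \leq \lips$), so the geometric factor behaves correctly when the recursion is iterated. The only step that calls for genuine care rather than bookkeeping is the first one: justifying that selecting the $K$ earliest-finishing learners introduces no bias or correlation into the gradients they contribute — this independence of runtimes and sampled data is precisely what legitimizes collapsing $K$-sync onto serial $K\mb$-batch SGD, after which the rest is the textbook computation.
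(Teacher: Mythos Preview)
Your proposal is correct and matches the paper's approach: the paper simply states that ``the analysis of $K$-sync SGD is same as serial SGD with mini-batch size $K\mb$'' and cites \cite{bottou2016optimization}, and you have written out precisely that reduction and the standard descent-plus-PL argument behind it. Your explicit remark that the independence of runtimes $X_i$ from the sampled mini-batches is what makes the $K$ earliest gradients unbiased and i.i.d.\ is a point the paper leaves implicit, but otherwise the arguments coincide.
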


\begin{figure}[t]
\centerline{\includegraphics[height=4cm]{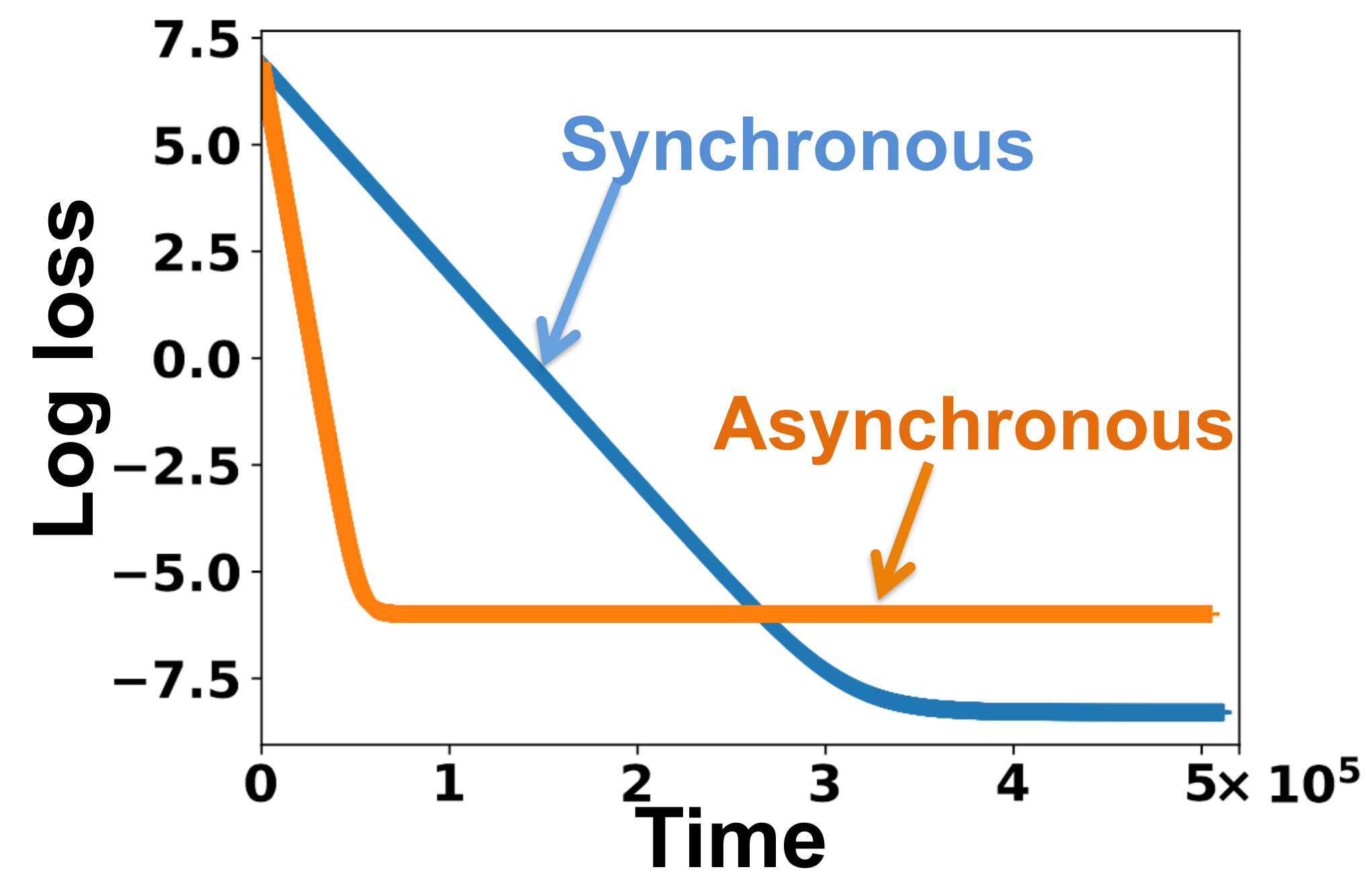}}
\caption{Theoretical error-runtime trade-off for Sync and Async-SGD with same $\eta$. Async-SGD has faster decay with time but a higher error floor.}
\label{fig:theoreticalsyncAsync}
\end{figure}

\textit{Can stale gradients win the race?}
For the same $\eta$, observe that the error given by \Cref{thm:error kasync} decays at the rate $(1 - \eta c(1-\gamma + \frac{p_0}{2}))$ for $K$-async or $K$-batch-async SGD while for $K$-sync, the decay rate with number of iterations is $(1 - \eta c)$. Thus, depending on the values of $\gamma$ and $p_0$, the decay rate of $K$-async or $K$-batch-async SGD can be faster or slower than $K$-sync SGD. The decay rate of $K$-async or $K$-batch-async SGD is faster if $\frac{p_0}{2}>\gamma $. As an example, one might consider an exponential or new-shorter-than-used service time where $p_0 \geq \frac{1}{P}$ and $\gamma$ can be made smaller by increasing $K$. It might be noted that asynchronous SGD can still be faster than synchronous SGD with respect to wall clock time even if its decay rate with respect to number of iterations is lower as every iteration is much faster in asynchronous SGD (Roughly $P\log{P}$ times faster for exponential service times).

The maximum allowable learning rate for synchronous SGD is $\max\{ \frac{1}{c}, \frac{1}{2L (\frac{M_G}{Pm}+1)}  \}$ which can be much higher than that for asynchronous SGD,\textit{i.e.}, $\max\{ \frac{1}{c(1-\gamma + \frac{p_0}{2})}, \frac{1}{2L (\frac{M_G}{m}+1)}  \}$. Similarly the error-floor for synchronous is $ \frac{\eta L\sigma^2}{2c Pm} $ as compared to asynchronous whose error floor is $ \frac{\eta L\sigma^2}{2c(1-\gamma + \frac{p_0}{2})m}$.

In \Cref{fig:theoreticalsyncAsync}, we compare the theoretical trade-offs between synchronous ($K=P$ in \Cref{lem:error ksync}) and asynchronous SGD ($K=1$ in \Cref{thm:error kasync}). Async-SGD converges very quickly, but to a higher floor. \Cref{fig:MNIST_time} shows the same comparison on the MNIST dataset, along with $K$-batch-async SGD. 

\begin{figure}[t]
\centerline{\includegraphics[height=4cm]{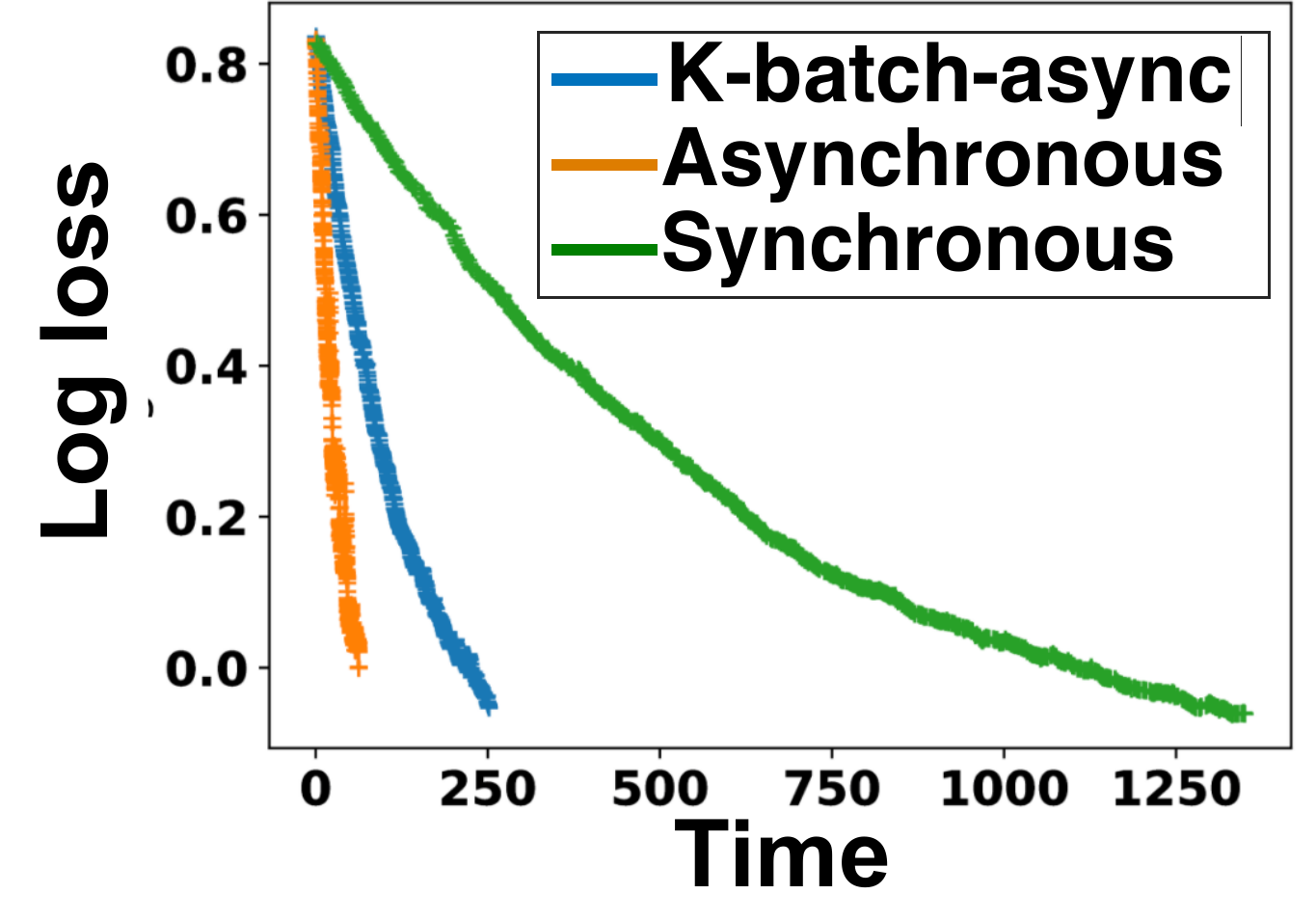}}
\caption{Error-runtime trade-off comparison of different SGD variants for logistic regression on MNIST, with $X_i \sim \exp(1)$, $P=8$, $K=4$, $\eta=0.01$ and $m=1$. $K$-batch-async gives intermediate performance, between Async and sync-SGD. (Details of setup provided in \Cref{sec:simulation_setup}.)
}
\label{fig:MNIST_time}
\end{figure}
\subsection{VARIABLE LEARNING RATE FOR STALENESS COMPENSATION}
The staleness of the gradient is random, and can vary across iterations. Intuitively, if the gradient is less stale, we want to weigh it more while updating the parameter $\wts$, and if it is more stale we want to scale down its contribution to the update. With this motivation, we propose the following condition on the learning rate at different iterations.
\begin{equation}
\eta_j \E{ ||\wts_{j}- \wts_{\tau(j)} ||_2^2} \leq C 
\label{eq:learning rate condition}
\end{equation}
for a constant $C$. This condition is also inspired from our error analysis in \Cref{thm:error kasync}, because it helps remove the assumption $\E{|| \nabla F(\wts_{j}) - \nabla F(\wts_{\tau(j)})||_2^2 } \leq \gamma \E{|| \nabla F(\wts_{j}) ||_2^2 }$. Using \cref{eq:learning rate condition}, we obtain the following convergence result.

\begin{thm}
Suppose the learning rate in the $j$-th iteration $\eta_j \leq 1/2L(\frac{M_G}{m}+1) $, and $$\eta_j \E{||\wts_j - \wts_{\tau(j)} ||_2^2 } \leq C  $$ for some constant $C$. Then, we have
\begin{align}
\E{F(\wts_{J})} - F^* &\leq \Delta + (\E{F(\wts_{0})}- F^*) \prod_{j=1}^{J}(1-\rho_j) \nonumber
\end{align}
where $\rho_j=\eta_j(1+ \frac{p_0}{2})c$, and the error floor  $\Delta= \Delta_J + (1-\rho_J)\Delta_{J-1} +  \dots + \prod_{j=1}^{J}(1-\rho_j)\Delta_{0} $, where $\Delta_j=  \frac{\eta_j^2 L\sigma^2}{2m} +  \frac{CL^2}{2} $.  
\label{thm:variable learning rate}
\end{thm}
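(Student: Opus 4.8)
The plan is to follow the template behind \Cref{thm:error kasync}, but carrying the iteration-dependent step size $\eta_j$ through the argument and using the learning-rate condition \cref{eq:learning rate condition} in place of the a priori staleness bound on gradients. I specialize to $K=1$, so the update is $\wts_{j+1} = \wts_{j} - \eta_{j}\, g(\wts_{\tau(j)},\xi_{j})$ with $\tau(j)\le j$. Starting from $L$-smoothness of $F$,
\[
F(\wts_{j+1}) \le F(\wts_{j}) - \eta_{j}\langle \nabla F(\wts_{j}),\, g(\wts_{\tau(j)},\xi_{j})\rangle + \tfrac{L\eta_{j}^{2}}{2}\,||g(\wts_{\tau(j)},\xi_{j})||_2^2 ,
\]
I would condition on the history $\mathcal{F}_{j}$ (all past parameters and delays, which fixes $\wts_{j}$, $\wts_{\tau(j)}$, and $\tau(j)$) and use that $\xi_j$ is the fresh mini-batch drawn at this step. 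The conditional unbiasedness assumption then replaces the stochastic gradient in the inner product by $\nabla F(\wts_{\tau(j)})$, while the conditional variance bound together with unbiasedness gives $\E{||g(\wts_{\tau(j)},\xi_{j})||_2^2\mid\mathcal{F}_{j}} \le \tfrac{\sigma^2}{m} + \big(\tfrac{M_G}{m}+1\big)||\nabla F(\wts_{\tau(j)})||_2^2$.

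Next I would expand $\langle \nabla F(\wts_{j}),\nabla F(\wts_{\tau(j)})\rangle$ through the polarization identity $\langle a,b\rangle = \tfrac12(||a||_2^2 + ||b||_2^2 - ||a-b||_2^2)$ and apply $L$-smoothness once more to bound $||\nabla F(\wts_{j}) - \nabla F(\wts_{\tau(j)})||_2^2 \le L^2||\wts_{j}-\wts_{\tau(j)}||_2^2$. Collecting terms, the coefficient of $||\nabla F(\wts_{\tau(j)})||_2^2$ equals $\tfrac{\eta_{j}}{2}\big(L\eta_{j}(\tfrac{M_G}{m}+1)-1\big)$, which the hypothesis $\eta_{j}\le \tfrac{1}{2L(M_G/m+1)}$ makes at most $-\tfrac{\eta_{j}}{4}$; being negative, this term is retained rather than discarded. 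Taking full expectations, the staleness residual $\tfrac{\eta_{j}L^2}{2}\,\E{||\wts_{j}-\wts_{\tau(j)}||_2^2}$ is controlled by \cref{eq:learning rate condition} to at most $\tfrac{CL^2}{2}$, and combined with the noise term $\tfrac{\eta_{j}^{2}L\sigma^2}{2m}$ these sum to exactly $\Delta_{j}$.

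It remains to reduce this to a scalar recursion for $e_{j}:=\E{F(\wts_{j})}-F^{*}$. Strong convexity \cref{eq:strong-convexity} gives $\E{||\nabla F(\wts_{j})||_2^2}\ge 2c\,e_{j}$, and \Cref{lem:delay} followed by strong convexity gives $\E{||\nabla F(\wts_{\tau(j)})||_2^2}\ge p_0\,\E{||\nabla F(\wts_{j})||_2^2}\ge 2cp_0\,e_{j}$, so the $-\tfrac{\eta_{j}}{2}\E{||\nabla F(\wts_{j})||_2^2}$ and $-\tfrac{\eta_{j}}{4}\E{||\nabla F(\wts_{\tau(j)})||_2^2}$ contributions combine to $-\eta_{j}c\big(1+\tfrac{p_0}{2}\big)\,e_{j} = -\rho_{j}\,e_{j}$. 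This yields the one-step inequality $e_{j+1}\le (1-\rho_{j})\,e_{j} + \Delta_{j}$; unrolling it over the $J$ iterations and collecting the telescoping products of $(1-\rho_{j})$ factors against the $\Delta_{j}$ terms gives the claimed bound, the index bookkeeping being routine. I expect the only delicate point to be the conditioning in the first step --- justifying that the fresh mini-batch $\xi_j$ is independent of $\mathcal{F}_{j}$ even though $\tau(j)$ is random and correlated with the trajectory --- which is exactly what the conditional forms of the unbiasedness and variance assumptions were introduced for in \Cref{sec:system model}; everything else mirrors the proof of \Cref{thm:error kasync} with the constants now indexed by $j$.
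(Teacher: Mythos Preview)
Your proposal is correct and follows essentially the same route as the paper's proof: both start from $L$-smoothness, decompose the inner product via the polarization identity, use $L$-smoothness again together with the condition $\eta_j\,\E{\|\wts_j-\wts_{\tau(j)}\|_2^2}\le C$ to absorb the staleness term into $\tfrac{CL^2}{2}$, invoke the step-size bound to make the coefficient of $\|\nabla F(\wts_{\tau(j)})\|_2^2$ at most $-\eta_j/4$, apply \Cref{lem:delay} and strong convexity, and unroll the resulting scalar recursion. The only cosmetic difference is that you condition on $\mathcal{F}_j$ first and then polarize the deterministic inner product $\langle\nabla F(\wts_j),\nabla F(\wts_{\tau(j)})\rangle$, whereas the paper polarizes $\langle\nabla F(\wts_j),\mathbf{v}_j\rangle$ first and then invokes \Cref{lem:bias-variance}; the two orderings yield identical inequalities.
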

The proof is provided in \Cref{subsec:variable learning rate}. In our analysis of Asynchronous SGD, we observe that the term $\frac{\eta}{2}\E{|| \nabla F(\wts_{j}) - \nabla F(\wts_{\tau(j)})||_2^2 }$ is the most difficult to bound. For fixed learning rate, we had assumed that $\E{|| \nabla F(\wts_{j}) - \nabla F(\wts_{\tau(j)})||_2^2 }$ is bounded by $\gamma ||\nabla F(\wts_{j}) ||_2^2$. However, if we impose the condition \cref{eq:learning rate condition} on $\eta$, we do not require this assumption. Our proposed condition actually provides a bound for the staleness term as follows:
\begin{align}
 \frac{\eta_j}{2}&\E{|| \nabla F(\wts_{j}) - \nabla F(\wts_{\tau(j)})||_2^2 }  \leq \frac{\eta_j L^2}{2}\E{|| \wts_{j} - \wts_{\tau(j)}||_2^2 } \leq \frac{CL^2}{2}.
\label{eq:stalenessbound}
\end{align}

\begin{figure}[t]
\centerline{\includegraphics[height=4cm]{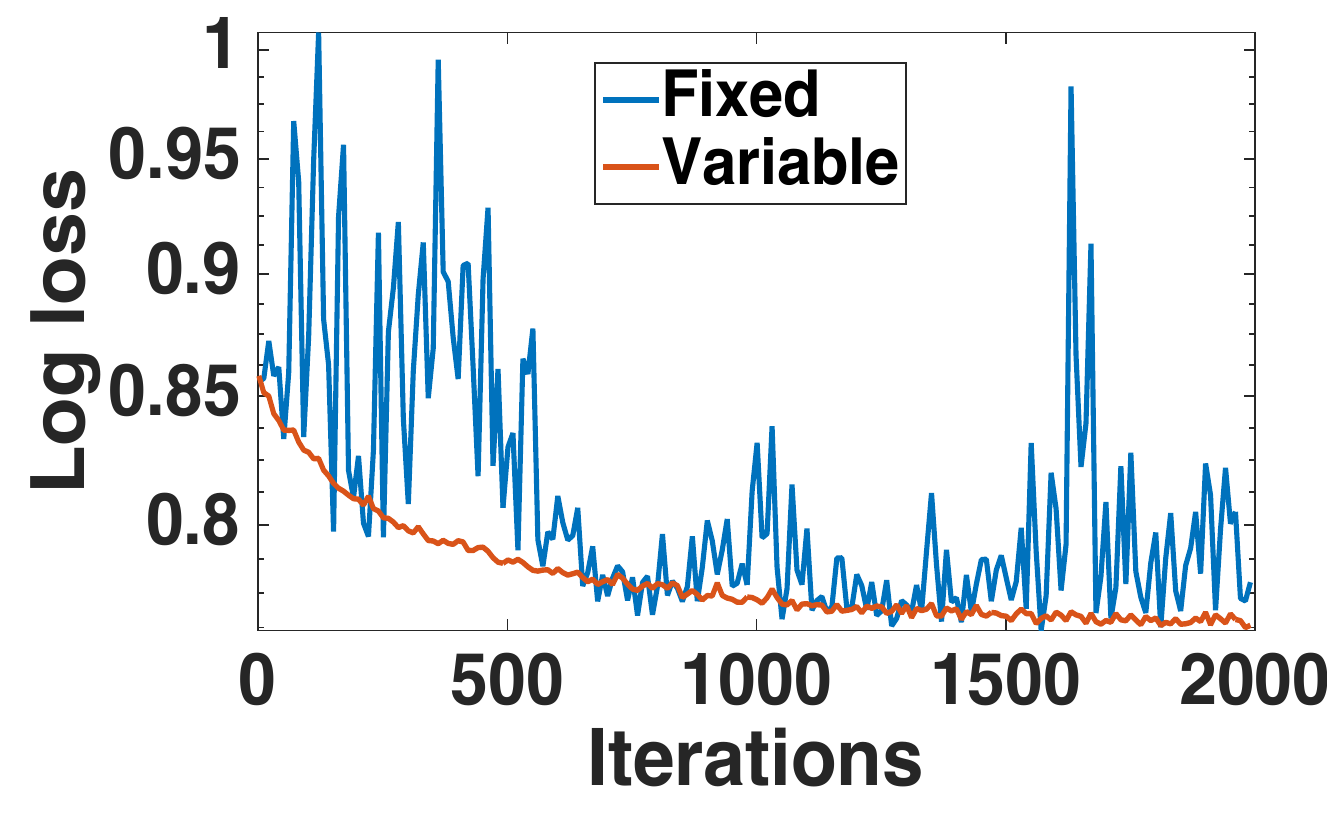}}
\caption{Async-SGD on CIFAR10 dataset, with $X \sim \exp{20}$, mini-batch size $m=250$ and $P=40$ learners. We compare fixed $\eta=0.01$, and the variable schedule given in \eqref{eqn:var_eta} for $\eta_{max}=0.01$ and $C=0.005\eta_{max}$. Observe that the proposed schedule can give fast convergence, and also maintain stability, while the fixed $\eta$ algorithm becomes unstable.
}
\label{fig:var_eta}
\end{figure}

\noindent \textbf{Proposed Algorithmic Modification} Inspired by this analysis, we propose the learning rate schedule,
\begin{equation}
\eta_j = \min\left\{ \frac{C}{||\wts_j-\wts_{\tau(j)} ||_2^2}, \eta_{max}
\right\} \label{eqn:var_eta}
\end{equation}
where $\eta_{max}$ is a suitably large ceiling on learning rate. It ensures stability when the first term in \eqref{eqn:var_eta} becomes large due to the staleness $||\wts_j-\wts_{\tau(j)} ||_2$ being small. The $C$ is chosen of the same order as the desired error floor. To implement this schedule, the PS needs to store the last read model parameters for every learner. In \Cref{fig:var_eta} we illustrate how this schedule can stabilize asynchronous SGD. We also show simulation results that characterize the performance of this algorithm in comparison with naive asynchronous SGD with fixed learning rate.

\begin{rem}
\label{rem:variable}
The idea of variable learning rate is related to the idea of momentum tuning in \cite{mitliagkas2016asynchrony, zhang2017yellowfin} and may have a similar effect of stabilizing the convergence of asynchronous SGD. However, learning rate tuning is arguably more general since asynchrony results in a momentum term in the gradient update (as shown in \cite{mitliagkas2016asynchrony, zhang2017yellowfin}) only under the assumption that the staleness process is geometric and independent of $\wts$.
\end{rem}

\section{RUNTIME ANALYSIS}
\label{sec:runtime}



In this section, we provide our analysis of the expected runtime of different variants of SGD. These lemmas are then used in the proofs of \Cref{thm:runtime1} and \Cref{thm:runtime2}. 
\subsection{RUNTIME OF $K$-SYNC SGD}
\begin{lem}[Runtime of $K$-sync SGD]
\label{lem:runtime ksync}
The expected runtime per iteration for $K$-sync SGD is,
\begin{align}
\E{T} &=  \E{X_{K:\learners}} 
\end{align}
where $X_{K:\learners}$ is the $K^{th}$ order statistic of $P$ i.i.d.\ random variables $X_1, X_2, \dots , X_{\learners}$.
\end{lem}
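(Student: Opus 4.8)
The plan is to argue directly from the definition of $K$-sync SGD together with the i.i.d.\ assumption on the per–mini-batch computation times; no heavy machinery is needed. The key structural observation is that in $K$-sync SGD every iteration begins with all $P$ learners in the same state: once the PS has collected $K$ gradients it updates the parameter and broadcasts the new vector $\wts_{j+1}$ to all $P$ learners, each of which then draws a fresh mini-batch and starts computing its gradient from scratch. In particular, the partial computation done by a straggling learner that was cancelled at the end of the previous iteration is discarded and does not carry over into the next iteration.

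Concretely, I would let $X_1^{(j)}, \dots, X_P^{(j)}$ denote the times taken by the $P$ learners to compute their gradients during iteration $j$. By the assumption that the $X_i$ are i.i.d.\ across mini-batches and learners, these are i.i.d.\ copies of $X$, and independent across iterations $j$ as well. The PS performs the $j$-th update as soon as it has received $K$ gradients, i.e.\ at the instant the $K$-th-fastest learner finishes; since all learners started simultaneously at the beginning of the iteration, this instant equals $X_{K:P}^{(j)}$, the $K$-th order statistic of $X_1^{(j)}, \dots, X_P^{(j)}$. Hence the runtime of iteration $j$ is $T_j = X_{K:P}^{(j)}$, and the $T_j$ are i.i.d.\ with the distribution of $X_{K:P}$. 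Taking expectations yields $\E{T} = \E{X_{K:P}}$, as claimed.

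I do not expect a genuine obstacle here; the single point that warrants care is making explicit that cancellation of stragglers followed by re-broadcast of the updated parameter re-synchronizes all learners at the start of each iteration, so that the per-iteration runtime is truly an i.i.d.\ copy of the $K$-th order statistic and is not coupled to the previous iteration through the stragglers' already-elapsed computation time. This is exactly where the ``i.i.d.\ across mini-batches'' part of the assumption (as opposed to merely i.i.d.\ across learners) is invoked.
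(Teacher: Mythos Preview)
Your proposal is correct and follows essentially the same approach as the paper: both argue that since all $P$ learners restart together at the beginning of each iteration with fresh i.i.d.\ computation times, the iteration ends when the $K$-th fastest finishes, giving $\E{T}=\E{X_{K:P}}$. You are in fact more careful than the paper in making explicit why the stragglers' elapsed time does not couple consecutive iterations and where the ``i.i.d.\ across mini-batches'' assumption enters.
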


\begin{proof}[Proof of \Cref{lem:runtime ksync}]
We assume that the $P$ learners have an i.i.d.\ computation times. When all the learners start together, and we wait for the first $K$ out of $P$ i.i.d.\ random variables to finish, the expected computation time for that iteration is $\E{X_{K:P}}$, where $X_{K:P}$ denotes the $K$-th statistic of $P$ i.i.d.\ random variables $X_1,X_2,\dots,X_P$. 
\end{proof}
\textcolor{black}{Thus, for a total of $J$ iterations, the expected runtime is given by $J\E{X_{K:P}}$.}

\begin{rem} For  $X_i \sim \exp(\mu)$, the expected runtime per iteration is given by, $$\E{T} =\frac{1}{\mu} \sum_{i=P-K+1}^P \frac{1}{i} \approx \frac{1}{\mu} \left( \frac{ \log{\frac{P}{P-K}}}{\mu} \right)
$$ where the last step uses an approximation from \cite{sheldon2002first}. For justification, the reader is referred to \Cref{subsec:runtime_K_statistic}.
\end{rem}

\subsection{RUNTIME OF $K$-BATCH-SYNC SGD}
The expected runtime of $K$-batch-sync SGD is not analytically tractable in general, but for $X_i\sim \exp(\mu)$, the runtime per iteration is distributed as $Erlang(K, P\mu)$. Refer to \Cref{subsec:runtime_K_batch_sync} for explanation. Thus, for $K$-batch-sync SGD, the expected time per iteration is given by, $$\E{T}=  \frac{K}{P\mu}. $$

\subsection{RUNTIME OF $K$-BATCH-ASYNC SGD}
\begin{lem}[Runtime of $K$-batch-async SGD]
\label{lem:runtime kbatch}
The expected runtime per iteration for $K$-batch-async SGD in the limit of large number of iterations is given by:
\begin{equation}
\E{T} = \frac{K\E{X}}{P}.
\end{equation}
\end{lem}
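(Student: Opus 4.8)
The plan is to use the renewal-reward theorem. Set up the process by which the PS receives gradients: since in $K$-batch-async SGD every learner, upon finishing a mini-batch, immediately fetches the current parameter and begins a fresh mini-batch computation, each of the $P$ learners generates its own renewal process of gradient arrivals, with i.i.d.\ inter-arrival times distributed as $X$. Superposing these $P$ independent renewal processes gives a single point process at the PS whose points are the individual gradient completions. The key observation is that a parameter update happens exactly once every $K$ such completions, so the expected time per iteration is $K$ times the expected inter-arrival time of the superposed process in steady state.

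First I would let $N(t)$ denote the total number of mini-batch completions across all $P$ learners in $[0,t]$. By the elementary renewal theorem applied to each learner's individual renewal process, $\E{N_i(t)}/t \to 1/\E{X}$ as $t\to\infty$, and hence by linearity $N(t)/t \to P/\E{X}$ almost surely (and in expectation). Next, since the PS performs one update per $K$ completions, the number of iterations completed by time $t$ is $M(t) = \lfloor N(t)/K \rfloor$, so $M(t)/t \to P/(K\E{X})$. The expected runtime per iteration in the limit of a large number of iterations is then the reciprocal of this long-run rate, giving $\E{T} = K\E{X}/P$, which is the claim.

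The step I expect to require the most care is justifying the interchange of limits implicit in ``expected runtime per iteration in the limit of large number of iterations'' — that is, passing from the almost-sure convergence $M(t)/t \to P/(K\E{X})$ to a statement about $\E{T_j}$ averaged over iterations, or equivalently showing that the time of the $j$-th update divided by $j$ converges to $K\E{X}/P$. This follows from the strong law for renewal processes (the $j$-th update time is the $(jK)$-th order arrival in the superposition, and one can bound it between partial sums of the individual learners' inter-arrival times), together with a uniform integrability argument to upgrade almost-sure convergence to convergence of expectations; one mild assumption needed here is $\E{X} < \infty$. The superposition of renewal processes is not itself a renewal process, so I would avoid claiming that directly and instead work with the counting functions $N_i(t)$ and their sum, where the elementary renewal theorem applies cleanly to each summand.
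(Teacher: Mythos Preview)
Your proposal is correct and follows essentially the same approach as the paper: apply the elementary renewal theorem to each learner's process $N_i(t)$, sum to get the aggregate rate $P/\E{X}$ of gradient pushes, and then invert and multiply by $K$ since each iteration consumes $K$ pushes. If anything, your treatment is more careful than the paper's --- you flag the reciprocal/limit-interchange step and the fact that the superposition is not itself a renewal process, whereas the paper simply asserts the final rate and inverts it without further comment.
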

Unlike the results for the synchronous variants, this result on average runtime per iteration holds only in the limit of large number of iterations. To prove the result we use ideas from renewal theory. For a brief background on renewal theory, the reader is referred to \Cref{subsec:runtime_K_batch_async}. 

\begin{proof}[Proof of \Cref{lem:runtime kbatch}]
For the $i$-th learner, let $\{N_i(t), t>0\}$ be the number of times the $i$-th learner pushes its gradient to the PS over in time $t$. The time between two pushes is an independent realization of $X_i$. Thus, the inter-arrival times $X_i^{(1)}, X_i^{(2)},\dots$ are i.i.d.\ with mean inter-arrival time $\E{X_i}$. Using the elementary renewal theorem \cite[Chapter 5]{gallager2013stochastic} we have,
\begin{equation}
\lim_{t \to \infty} \frac{\E{N_i(t)}}{t} = \frac{1}{\E{X_i}}.
\end{equation}
Thus, the rate of gradient pushes by the $i$-th learner is $1/\E{X_i}$. As there are $P$ learners, we have a superposition of $P$ renewal processes and thus the average rate of gradient pushes to the PS is 
\begin{equation}
\lim_{t \to \infty} \sum_{i=1}^P\frac{\E{N_i(t)}}{t} = \sum_{i=1}^P \frac{1}{\E{X_i}} = \frac{P}{\E{X}}.
\end{equation}

Every $K$ pushes are one iteration. Thus, the expected runtime per iteration or effectively the expected time for $K$ pushes is given by
$
 \E{T}= \frac{K\E{X}}{P}.
$ \end{proof}

\noindent Thus, for a total of $J$ iterations, the average runtime can be approximated as
$ \frac{JK\E{X}}{P}$ when $J$ is large. Note that Fully-Synchronous SGD is actually $K$-sync SGD with $K=P$, \textit{i.e.}, waiting for all the $P$ learners to finish. On the other hand, Fully-Asynchronous SGD is actually $K$-batch-async with $K=1$. Now, we provide the proofs of \Cref{thm:runtime1} and \Cref{coro:syncAsync} respectively, that provide a comparison between these two variants.

\begin{proof}[\textbf{Proof of \Cref{thm:runtime1}}] By taking the ratio of the expected runtimes per iteration in \Cref{lem:runtime ksync} with $K=P$ and \Cref{lem:runtime kbatch} with $K=1$, we get the result in \Cref{thm:runtime1}. 
\end{proof}


\begin{proof}[\textbf{Proof of \Cref{coro:syncAsync}}] The expectation of the maximum of $P$ i.i.d.\ $X_i \sim \exp(\mu)$ is $\E{X_{P:P}}=\sum_{i=1}^P \frac{1}{i\mu} \approx \frac{\log{P}}{\mu}$ \cite{sheldon2002first}. This can be substituted in \Cref{thm:runtime1} to get \Cref{coro:syncAsync}.
\end{proof}

\subsection{RUNTIME OF $K$-ASYNC SGD}
The expected runtime per iteration of $K$-async SGD is not analytically tractable for non-exponential $X_i$, but we obtain an upper bound on it for a class of distributions called the ``new-longer-than-used'' distributions, as defined below.

\begin{defn}[New-longer-than-used]
\label{defn:new-longer-than-used}
A random variable is said to have a new-longer-than-used distribution if the following holds for all $t, u \geq 0$:
$$
\Pr(U>u+t| U>t) \leq \Pr(U>u).
$$
\end{defn}
Most of the continuous distributions we encounter like normal, exponential, gamma, beta are new-longer-than-used. Alternately, the hyper exponential distribution is new-shorter-than-used and it satisfies $
\Pr(U>u+t | U>t) \geq \Pr(U>u)$ for all $t, u \geq 0$.

\begin{lem}[Runtime of $K$-async SGD]
\label{lem:runtime kasync}
Suppose that each $X_i$ has a new-longer-than-used distribution. Then, the expected \textcolor{black}{runtime per iteration} for $K$-async is upper-bounded as
\begin{align}
\E{T} & \leq   \E{X_{K:\learners}} \label{eqn:T_k_async}
\end{align}
where $X_{K:\learners}$ is the $K^{th}$ order statistic of $P$ i.i.d.\ random variables $X_1, X_2, \dots , X_{\learners}$. 
\end{lem}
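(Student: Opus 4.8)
The plan is to keep track, at the start of each iteration, of which learners carry a ``fresh'' computation time (distributed as $X$) and which carry only a ``residual'' time, and then to use the new-longer-than-used assumption to show that each residual is stochastically dominated by a fresh copy of $X$.

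\textbf{Bookkeeping.} In $K$-async SGD the PS updates as soon as $K$ of the $P$ learners have pushed a gradient; those $K$ learners then fetch $\wts_{j+1}$ and restart, while the remaining $P-K$ learners keep computing on their current mini-batch (which is precisely what produces staleness). Hence at the start of iteration $1$ all $P$ learners are fresh, and at the start of every subsequent iteration exactly $K$ learners are fresh, their remaining times being i.i.d.\ copies of $X$, while each of the other $P-K$ learners has already been running for some history-dependent elapsed time $t_i$. Since a learner that finishes stays idle until the iteration ends, the iteration runtime equals the $K$-th order statistic of the $P$ remaining times $Y_1,\dots,Y_P$.

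\textbf{Dominating the residuals.} A learner that has run for time $t$ has remaining time $R$ with $\Pr(R>u)=\Pr(X>t+u\mid X>t)\le\Pr(X>u)$ by \Cref{defn:new-longer-than-used}, so $R$ is stochastically dominated by a fresh $X$. Conditioning on the history up to the start of the iteration, the $X_i$'s are independent across learners and mini-batches, so the $P-K$ residuals are conditionally independent and each conditionally dominated by $X$, while the $K$ fresh times are conditionally i.i.d.\ $\sim X$. A quantile (Strassen-type) coupling then lets me build, conditionally on that history, i.i.d.\ variables $X_1',\dots,X_P'\sim X$ with $Y_i\le X_i'$ almost surely for every $i$; as the law of $(X_1',\dots,X_P')$ does not depend on the history, they are unconditionally i.i.d.\ $\sim X$. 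Therefore $Y_{K:P}\le X'_{K:P}$ almost surely, and taking expectations gives $\E{T}\le\E{X_{K:P}}$ for every iteration (with equality for the first iteration, where all learners are fresh).

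\textbf{Main obstacle.} The delicate step is the second one: making rigorous that, at the start of each iteration, the $P-K$ partially-elapsed computation times are conditionally independent given the past and each stochastically dominated by a fresh $X$ — i.e.\ cleanly invoking the new-longer-than-used property together with the independence of the $X_i$ across learners and mini-batches, and packaging it as an almost-sure coupling. Everything after that is routine: $Y_i\le X_i'$ for all $i$ forces $Y_{K:P}\le X'_{K:P}$, and one takes expectations. It is worth noting that the inequality is generally strict for iterations past the first, since the residuals are genuinely smaller than fresh copies; this is also why the $K$-async runtime never exceeds the $K$-sync runtime of \Cref{lem:runtime ksync}.
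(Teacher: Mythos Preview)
Your proposal is correct and follows essentially the same route as the paper's proof: both identify the iteration runtime as the $K$-th order statistic of the $P$ remaining times $Y_1,\dots,Y_P$, invoke the new-longer-than-used property to show each $Y_i$ is stochastically dominated by a fresh copy of $X$, and then compare order statistics. The only minor technical difference is that the paper obtains $\E{Y_{K:P}}\le\E{X_{K:P}}$ by replacing one $Y_i$ at a time with an $X_i$ (using that the $K$-th order statistic is coordinatewise nondecreasing and that stochastic dominance transfers through increasing functions), whereas you package the same monotonicity via a single Strassen-type coupling; both arguments are standard and equivalent here.
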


The proof of this lemma is provided in \Cref{subsec:runtime_K_async}.

We provided a comparison of the expected runtimes of $K$-async and $K$-batch-async  SGD variants in \Cref{thm:runtime2}, for the special case of exponential computation times. Here, we provide the proof of \Cref{thm:runtime2}.

\begin{proof}[\textbf{Proof of \Cref{thm:runtime2}}] For the exponential $X_i$, equality holds in \eqref{eqn:T_k_async} in \Cref{lem:runtime kasync}, as we justify in \Cref{subsec:runtime_K_async_exp}.  The expectation can be derived as $\E{X_{K:P}}=\sum_{i=P-K+1}^P \frac{1}{i\mu} \approx \frac{\log{(P/P-K)}}{\mu}$. For exponential $X_i$, the expected runtime per iteration for $K$-batch-async is given by $\E{T}  =  \frac{K\E{X}}{P} =  \frac{K}{\mu P }$ from \Cref{lem:runtime kbatch}.
\end{proof}
 \begin{figure}[t]
\centerline{\includegraphics[height=4cm]{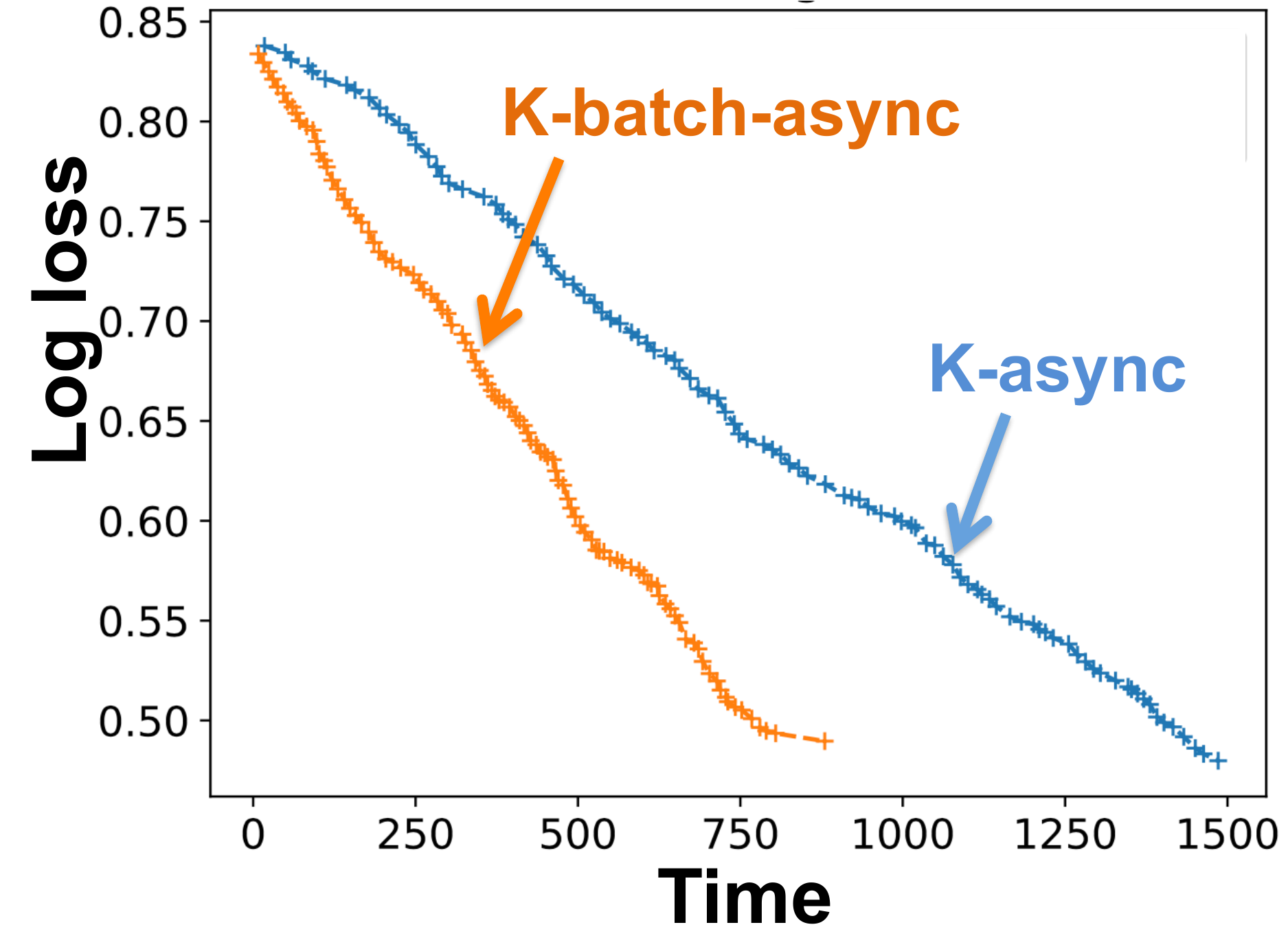}}
\caption{Error-Runtime Trade-off on MNIST Dataset: Comparison of $K$-async with $K$-batch-async under exponential computation time with $X_i \sim \exp(1)$. As derived theoretically, the $K$-batch-async has a sharper fall with time as compared to $K$-async even though the error floor attained is similar. (Details in \Cref{sec:simulation_setup}.)}
\label{fig:async_batch_async}
\end{figure}
\noindent In  \Cref{fig:async_batch_async}, we pictorially illustrate the expected error-runtime trade-offs of $K$-async with $K$-batch-async SGD.






\section{CONCLUSIONS}
\label{sec:conclusion}
The speed of distributed SGD depends on the error reduction per iteration, as well as the runtime per iteration. This paper presents a novel runtime analysis of synchronous and asynchronous SGD, and their variants for any general distribution on the wall-clock time of each learner. When juxtaposed with the error analysis, we get error-runtime trade-offs that can be used to compare different SGD algorithms. We also give a new analysis of asynchronous SGD by relaxing some commonly made assumptions and also propose a novel learning rate schedule to compensate for gradient staleness.

In the future we plan to explore methods to gradually increase synchrony, so that we can achieve fast convergence as well as low error floor. We are also looking into the use of local updates to minimize the frequency of communication between the PS and learners, that is closely related to \cite{zhang2016parallel,yin2017gradient,zhou2017convergence,zhang2015deep}.

\subsection*{Acknowledgements} The authors thank Mark Wegman, Pulkit Grover and Jianyu Wang for their suggestions and feedback.




\bibliographystyle{unsrt}

\bibliography{sample}

\begin{thebibliography}{10}

\bibitem{dean2012large}
Jeffrey Dean et~al.
\newblock Large scale distributed deep networks.
\newblock In {\em Advances in Neural Information Processing Systems}, pages
  1223--1231, 2012.

\bibitem{dean2013tail}
Jeffrey Dean and Luiz~Andr{\'e} Barroso.
\newblock The tail at scale.
\newblock {\em Communications of the ACM}, 56(2):74--80, 2013.

\bibitem{boyd2004convex}
Stephen Boyd and Lieven Vandenberghe.
\newblock {\em Convex optimization}.
\newblock Cambridge university press, 2004.

\bibitem{robbins1951stochastic}
Herbert Robbins and Sutton Monro.
\newblock A stochastic approximation method.
\newblock {\em The annals of mathematical statistics}, pages 400--407, 1951.

\bibitem{dekel2012optimal}
Ofer Dekel, Ran Gilad-Bachrach, Ohad Shamir, and Lin Xiao.
\newblock Optimal distributed online prediction using mini-batches.
\newblock {\em Journal of Machine Learning Research}, 13(1):165--202, 2012.

\bibitem{li2014efficient}
Mu~Li, Tong Zhang, Yuqiang Chen, and Alexander~J. Smola.
\newblock Efficient mini-batch training for stochastic optimization.
\newblock In {\em ACM SIGKDD International Conference on Knowledge Discovery
  and Data Mining}, pages 661--670, 2014.

\bibitem{bottou2016optimization}
L{\'e}on Bottou, Frank~E. Curtis, and Jorge Nocedal.
\newblock Optimization methods for large-scale machine learning.
\newblock {\em arXiv:1606.04838}, 2016.

\bibitem{duchi2011adaptive}
John Duchi, Elad Hazan, and Yoram Singer.
\newblock Adaptive subgradient methods for online learning and stochastic
  optimization.
\newblock {\em Journal of Machine Learning Research}, 2, July 2011.

\bibitem{kingma2015adam}
Diederik~P. Kingma and Jimmy Ba.
\newblock Adam: {A} method for stochastic optimization.
\newblock {\em International Conference on Learning Representations (ICLR)},
  2015.

\bibitem{johnson2013accelerating}
Rie Johnson and Tong Zhang.
\newblock Accelerating stochastic gradient descent using predictive variance
  reduction.
\newblock In {\em Advances in Neural Information Processing Systems}, pages
  315--323, 2013.

\bibitem{roux2012stochastic}
Nicolas~L Roux, Mark Schmidt, and Francis~R Bach.
\newblock A stochastic gradient method with an exponential convergence rate for
  finite training sets.
\newblock In {\em Advances in Neural Information Processing Systems}, pages
  2663--2671, 2012.

\bibitem{nguyen2017sarah}
Lam~M Nguyen, Jie Liu, Katya Scheinberg, and Martin Tak{\'a}{\v{c}}.
\newblock Sarah: A novel method for machine learning problems using stochastic
  recursive gradient.
\newblock In {\em International Conference on Machine Learning}, pages
  2613--2621, 2017.

\bibitem{ruder2016overview}
Sebastian Ruder.
\newblock An overview of gradient descent optimization algorithms.
\newblock {\em arXiv preprint arXiv:1609.04747}, 2016.

\bibitem{harlap2016addressing}
Aaron Harlap, Henggang Cui, Wei Dai, Jinliang Wei, Gregory~R. Ganger,
  Phillip~B. Gibbons, Garth~A. Gibson, and Eric~P. Xing.
\newblock Addressing the straggler problem for iterative convergent parallel
  ml.
\newblock In {\em ACM Symposium on Cloud Computing (SoCC)}, pages 98--111,
  2016.

\bibitem{joshi2014delay}
Gauri Joshi, Yanpei Liu, and Emina Soljanin.
\newblock On the delay-storage trade-off in content download from coded
  distributed storage systems.
\newblock {\em IEEE Journal on Selected Areas in Communications},
  32(5):989--997, 2014.

\bibitem{wang2015using}
Da~Wang, Gauri Joshi, and Gregory Wornell.
\newblock Using straggler replication to reduce latency in large-scale parallel
  computing.
\newblock {\em ACM SIGMETRICS Performance Evaluation Review}, 43(3):7--11,
  2015.

\bibitem{joshi2015queues}
Gauri Joshi, Emina Soljanin, and Gregory Wornell.
\newblock Queues with redundancy: Latency-cost analysis.
\newblock {\em ACM SIGMETRICS Performance Evaluation Review}, 43(2):54--56,
  2015.

\bibitem{joshi2017efficient}
Gauri Joshi, Emina Soljanin, and Gregory Wornell.
\newblock Efficient redundancy techniques for latency reduction in cloud
  systems.
\newblock {\em ACM Transactions on Modeling and Performance Evaluation of
  Computing Systems}, 2(12), may 2017.

\bibitem{lee2017speeding}
Kangwook Lee, Maximilian Lam, Ramtin Pedarsani, Dimitris Papailiopoulos, and
  Kannan Ramchandran.
\newblock Speeding up distributed machine learning using codes.
\newblock {\em IEEE Transactions on Information Theory}, 64(3):1514--1529,
  2018.

\bibitem{tandon2017gradient}
Rashish Tandon, Qi~Lei, Alexandros~G. Dimakis, and Nikos Karampatziakis.
\newblock Gradient coding: Avoiding stragglers in distributed learning.
\newblock In {\em International Conference on Machine Learning}, pages
  3368--3376, 2017.

\bibitem{dutta2016short}
Sanghamitra Dutta, Viveck Cadambe, and Pulkit Grover.
\newblock Short-dot: Computing large linear transforms distributedly using
  coded short dot products.
\newblock In {\em Advances In Neural Information Processing Systems}, pages
  2100--2108, 2016.

\bibitem{halbawi2017improving}
Wael Halbawi, Navid Azizan-Ruhi, Fariborz Salehi, and Babak Hassibi.
\newblock Improving distributed gradient descent using reed-solomon codes.
\newblock {\em arXiv preprint arXiv:1706.05436}, 2017.

\bibitem{yang2017coded}
Yaoqing Yang, Pulkit Grover, and Soummya Kar.
\newblock Coded distributed computing for inverse problems.
\newblock In {\em Advances in Neural Information Processing Systems}, pages
  709--719, 2017.

\bibitem{yang2016fault}
Yaoqing Yang, Pulkit Grover, and Soummya Kar.
\newblock Fault-tolerant distributed logistic regression using unreliable
  components.
\newblock In {\em Communication, Control, and Computing (Allerton)}, pages
  940--947. IEEE, 2016.

\bibitem{yu2017polynomial}
Qian Yu, Mohammad Maddah-Ali, and Salman Avestimehr.
\newblock Polynomial codes: an optimal design for high-dimensional coded matrix
  multiplication.
\newblock In {\em Advances in Neural Information Processing Systems}, pages
  4406--4416, 2017.

\bibitem{karakus2017encoded}
Can Karakus, Yifan Sun, and Suhas Diggavi.
\newblock Encoded distributed optimization.
\newblock In {\em IEEE International Symposium on Information Theory (ISIT)},
  pages 2890--2894, 2017.

\bibitem{karakus2017straggler}
Can Karakus, Yifan Sun, Suhas Diggavi, and Wotao Yin.
\newblock Straggler mitigation in distributed optimization through data
  encoding.
\newblock In {\em Advances in Neural Information Processing Systems}, pages
  5440--5448, 2017.

\bibitem{charles2017approximate}
Zachary Charles, Dimitris Papailiopoulos, and Jordan Ellenberg.
\newblock Approximate gradient coding via sparse random graphs.
\newblock {\em arXiv preprint arXiv:1711.06771}, 2017.

\bibitem{li2017terasort}
Songze Li, Sucha Supittayapornpong, Mohammad~Ali Maddah-Ali, and Salman
  Avestimehr.
\newblock Coded terasort.
\newblock In {\em IEEE International Parallel and Distributed Processing
  Symposium Workshops (IPDPSW)}, pages 389--398, 2017.

\bibitem{fahim2017optimal}
Mohammad Fahim, Haewon Jeong, Farzin Haddadpour, Sanghamitra Dutta, Viveck
  Cadambe, and Pulkit Grover.
\newblock On the optimal recovery threshold of coded matrix multiplication.
\newblock In {\em Communication, Control, and Computing (Allerton)}, pages
  1264--1270. IEEE, 2017.

\bibitem{ye2018communication}
Min Ye and Emmanuel Abbe.
\newblock Communication-computation efficient gradient coding.
\newblock {\em arXiv preprint arXiv:1802.03475}, 2018.

\bibitem{li2018fundamental}
Songze Li, Mohammad~Ali Maddah-Ali, Qian Yu, and A~Salman Avestimehr.
\newblock A fundamental tradeoff between computation and communication in
  distributed computing.
\newblock {\em IEEE Transactions on Information Theory}, 64(1):109--128, 2018.

\bibitem{NewsletterPaper}
Viveck Cadambe and Pulkit Grover.
\newblock Codes for {D}istributed {C}omputing: {A} {T}utorial.
\newblock {\em IEEE Information Theory Society Newsletter}, 67(4):3--15,
  December 2017.

\bibitem{DNNPaperISIT}
Sanghamitra Dutta, Ziqian Bai, Haewon Jeong, Tze~Meng Low, and Pulkit Grover.
\newblock {A Unified Coded Deep Neural Network Training Strategy based on
  Generalized PolyDot codes}.
\newblock In {\em IEEE International Symposium on Information Theory (ISIT) -
  To Appear}, 2018.

\bibitem{mallick2018rateless}
Ankur Mallick, Malhar Chaudhari, and Gauri Joshi.
\newblock Rateless codes for near-perfect load balancing in distributed
  matrix-vector multiplication.
\newblock {\em arXiv preprint arXiv:1804.10331}, 2018.

\bibitem{gupta2016model}
Suyog Gupta, Wei Zhang, and Fei Wang.
\newblock Model accuracy and runtime tradeoff in distributed deep learning: A
  systematic study.
\newblock In {\em International Conference on Data Mining}, pages 171--180,
  2016.

\bibitem{cipar2013solving}
James Cipar, Qirong Ho, Jin~Kyu Kim, Seunghak Lee, Gregory~R. Ganger, Garth
  Gibson, Kimberly Keeton, and Eric Xing.
\newblock Solving the straggler problem with bounded staleness.
\newblock In {\em Workshop on Hot Topics in Operating Systems}, 2013.

\bibitem{cui2014exploiting}
Henggang Cui, James Cipar, Qirong Ho, Jin~Kyu Kim, Seunghak Lee, Abhimanu
  Kumar, Jinliang Wei, Wei Dai, Gregory~R Ganger, Phillip~B Gibbons, et~al.
\newblock Exploiting bounded staleness to speed up big data analytics.
\newblock In {\em USENIX Annual Technical Conference ({ATC})}, pages 37--48,
  2014.

\bibitem{ho2013more}
Qirong Ho, James Cipar, Henggang Cui, Seunghak Lee, Jin~Kyu Kim, Phillip~B
  Gibbons, Garth~A Gibson, Greg Ganger, and Eric~P Xing.
\newblock More effective distributed ml via a stale synchronous parallel
  parameter server.
\newblock In {\em Advances in Neural Information Processing Systems}, pages
  1223--1231, 2013.

\bibitem{tsitsiklis1986distributed}
John Tsitsiklis, Dimitri Bertsekas, and Michael Athans.
\newblock Distributed asynchronous deterministic and stochastic gradient
  optimization algorithms.
\newblock {\em IEEE Transactions on Automatic Control}, 31(9):803--812, 1986.

\bibitem{lian2015asynchronous}
Xiangru Lian, Yijun Huang, Yuncheng Li, and Ji~Liu.
\newblock Asynchronous parallel stochastic gradient for nonconvex optimization.
\newblock In {\em Advances in Neural Information Processing Systems}, pages
  2737--2745, 2015.

\bibitem{mitliagkas2016asynchrony}
Ioannis Mitliagkas, Ce~Zhang, Stefan Hadjis, and Christopher R{\'e}.
\newblock Asynchrony begets momentum, with an application to deep learning.
\newblock In {\em Allerton Conference on Communication, Control, and
  Computing}, pages 997--1004. IEEE, 2016.

\bibitem{recht2011hogwild}
Benjamin Recht, Christopher Re, Stephen Wright, and Feng Niu.
\newblock Hogwild: A lock-free approach to parallelizing stochastic gradient
  descent.
\newblock In {\em Advances in Neural Information Processing Systems}, pages
  693--701, 2011.

\bibitem{agarwal2011distributed}
Alekh Agarwal and John~C Duchi.
\newblock Distributed delayed stochastic optimization.
\newblock In {\em Advances in Neural Information Processing Systems}, pages
  873--881, 2011.

\bibitem{mania2017perturbed}
Horia Mania, Xinghao Pan, Dimitris Papailiopoulos, Benjamin Recht, Kannan
  Ramchandran, and Michael~I Jordan.
\newblock Perturbed iterate analysis for asynchronous stochastic optimization.
\newblock {\em SIAM Journal on Optimization}, 27(4):2202--2229, 2017.

\bibitem{chaturapruek2015asynchronous}
Sorathan Chaturapruek, John~C Duchi, and Christopher R{\'e}.
\newblock Asynchronous stochastic convex optimization: the noise is in the
  noise and sgd don't care.
\newblock In {\em Advances in Neural Information Processing Systems}, pages
  1531--1539, 2015.

\bibitem{zhang2016staleness}
Wei Zhang, Suyog Gupta, Xiangru Lian, and Ji~Liu.
\newblock Staleness-aware async-sgd for distributed deep learning.
\newblock In {\em International Joint Conference on Artificial Intelligence},
  pages 2350--2356. AAAI Press, 2016.

\bibitem{peng2016arock}
Zhimin Peng, Yangyang Xu, Ming Yan, and Wotao Yin.
\newblock Arock: an algorithmic framework for asynchronous parallel coordinate
  updates.
\newblock {\em SIAM Journal on Scientific Computing}, 38(5):A2851--A2879, 2016.

\bibitem{hannah2017more}
Robert Hannah and Wotao Yin.
\newblock More iterations per second, same quality--why asynchronous algorithms
  may drastically outperform traditional ones.
\newblock {\em arXiv preprint arXiv:1708.05136}, 2017.

\bibitem{hannah2016unbounded}
Robert Hannah and Wotao Yin.
\newblock On unbounded delays in asynchronous parallel fixed-point algorithms.
\newblock {\em Journal of Scientific Computing}, pages 1--28, 2016.

\bibitem{sun2017asynchronous}
Tao Sun, Robert Hannah, and Wotao Yin.
\newblock Asynchronous coordinate descent under more realistic assumptions.
\newblock In {\em Advances in Neural Information Processing Systems}, pages
  6183--6191, 2017.

\bibitem{leblond2017asaga}
R{\'e}mi Leblond, Fabian Pedregosa, and Simon Lacoste-Julien.
\newblock Asaga: Asynchronous parallel saga.
\newblock In {\em International Conference on Artificial Intelligence and
  Statistics (AISTATS)}, 2017.

\bibitem{sedgewick2011algorithms}
Robert Sedgewick and Kevin Wayne.
\newblock {\em Algorithms}.
\newblock Addison-Wesley Professional, 2011.

\bibitem{zhang2017yellowfin}
Jian Zhang, Ioannis Mitliagkas, and Christopher R{\'e}.
\newblock Yellowfin and the art of momentum tuning.
\newblock {\em arXiv preprint arXiv:1706.03471}, 2017.

\bibitem{chen2016revisiting}
Jianmin Chen, Rajat Monga, Samy Bengio, and Rafal J{\'{o}}zefowicz.
\newblock Revisiting distributed synchronous {SGD}.
\newblock {\em CoRR}, abs/1604.00981, 2016.

\bibitem{sheldon2002first}
Ross Sheldon.
\newblock {\em A first course in probability}.
\newblock Pearson Education India, 2002.

\bibitem{gallager2013stochastic}
Robert~G. Gallager.
\newblock {\em Stochastic Processes: Theory for Applications}.
\newblock Cambridge University Press, 1st edition, 2013.

\bibitem{zhang2016parallel}
Jian Zhang, Christopher De~Sa, Ioannis Mitliagkas, and Christopher R{\'e}.
\newblock Parallel sgd: When does averaging help?
\newblock {\em arXiv preprint arXiv:1606.07365}, 2016.

\bibitem{yin2017gradient}
Dong Yin, Ashwin Pananjady, Max Lam, Dimitris Papailiopoulos, Kannan
  Ramchandran, and Peter Bartlett.
\newblock Gradient diversity empowers distributed learning.
\newblock {\em arXiv preprint arXiv:1706.05699}, 2017.

\bibitem{zhou2017convergence}
Fan Zhou and Guojing Cong.
\newblock On the convergence properties of a $ k $-step averaging stochastic
  gradient descent algorithm for nonconvex optimization.
\newblock {\em arXiv preprint arXiv:1708.01012}, 2017.

\bibitem{zhang2015deep}
Sixin Zhang, Anna~E Choromanska, and Yann LeCun.
\newblock Deep learning with elastic averaging sgd.
\newblock In {\em Advances in Neural Information Processing Systems}, pages
  685--693, 2015.

\bibitem{kreps1990course}
David~M Kreps.
\newblock {\em A course in microeconomic theory}, volume~41.
\newblock JSTOR, 1990.

\bibitem{lecun1998mnist}
Yann LeCun.
\newblock The mnist database of handwritten digits.
\newblock {\em http://yann. lecun. com/exdb/mnist/}, 1998.

\bibitem{krizhevsky2009learning}
Alex Krizhevsky and Geoffrey Hinton.
\newblock Learning multiple layers of features from tiny images.
\newblock {\em Technical report, University of Toronto}, 2009.

\bibitem{krizhevsky2012imagenet}
Alex Krizhevsky, Ilya Sutskever, and Geoffrey~E Hinton.
\newblock Imagenet classification with deep convolutional neural networks.
\newblock In {\em Advances in Neural Information Processing Systems}, pages
  1097--1105, 2012.

\end{thebibliography}









\clearpage

\appendix
\section{STRONG CONVEXITY DISCUSSION}
\label{sec:strong_convexity}
\begin{defn}[Strong-Convexity]
A function $h(\mathbf{u})$ is defined to be $c$-strongly convex, if the following holds for all $\mathbf{u}_1$ and $\mathbf{u}_2$ in the domain:
\begin{equation*}
h(\mathbf{u}_2) \geq h(\mathbf{u}_1) + [\nabla h(\mathbf{u}_1)]^T(\mathbf{u}_2-\mathbf{u}_1) + \frac{c}{2} ||\mathbf{u}_2-\mathbf{u}_1 ||_2^2
\end{equation*}
\end{defn}

For strongly convex functions, the following result holds for all $\mathbf{u}$ in the domain of $h(.)$.
\begin{equation}
2c(h(\mathbf{u})- h^*) \leq ||\nabla h(\mathbf{u}) ||_2^2
\end{equation}
The proof is derived in \cite{bottou2016optimization}. For completeness, we give the sketch here.
\begin{proof}
Given a particular $\mathbf{u}$, let us define the quadratic function as follows:
$$q(\mathbf{u}')= h(\mathbf{u}) + \nabla h(\mathbf{u})^T (\mathbf{u}' - \mathbf{u}) + \frac{c}{2} ||\mathbf{u}'-\mathbf{u}  ||_2^2  $$
Now, $q(\mathbf{u}')$ is minimized at $\mathbf{u}' = \mathbf{u} - \frac{1}{c} \nabla h(\mathbf{u}) $ and the value is $h(\mathbf{u}) - \frac{1}{2c} ||\nabla h(\mathbf{u})   ||_2^2$. Thus, from the definition of strong convexity we now have,
\begin{align*}
h^* &\geq h(\mathbf{u}) + \nabla h(\mathbf{u})^T (\mathbf{u}' - \mathbf{u}) + \frac{c}{2} ||\mathbf{u}'-\mathbf{u}  ||_2^2 \nonumber \\
&\geq h(\mathbf{u}) - \frac{1}{2c} ||\nabla h(\mathbf{u})   ||_2^2 \ \ [\text{minimum value of } q(\mathbf{u}')].
\end{align*}
\end{proof}

\section{RUNTIME ANALYSIS PROOFS}
\label{subsec:runtime_K_sync}
Here we provide all the remaining proofs and supplementary information for the results in \Cref{sec:runtime}.

\subsection{Runtime of $K$-sync SGD}
\label{subsec:runtime_K_statistic}
\textbf{$K$-th statistic of exponential distributions:}
Here we give a sketch of why the $K$-th order statistic of $P$ exponentials scales as $\log (P/P-K)$. A detailed derivation can be obtained in \cite{sheldon2002first}. Consider $P$ i.i.d.\ exponential distributions with parameter $\mu$. The minimum $X_{1:P}$ of $P$ independent exponential random variables with parameter $\mu$ is exponential with parameter $P \mu$. Conditional on $X_{1:P}$, the second smallest value $X_{2:P}$ is distributed like the sum of $X_{1:P}$ and an independent exponential random variable with parameter $(P-1)\mu $. And so on, until the $K$-th smallest value $X_{K:P}$ which is distributed like the sum of $X_{(K-1):P}$ and an independent exponential random variable with parameter $(P-K+1)\mu$. Thus, $$X_{K:P}=Y_P+Y_{P-1}+ \dots+Y_{P-K+1} $$
where the random variables $Y_i$s are independent and exponential with parameter $i \mu$. Thus,
$$\E{X_{K:P}} = \sum_{i=P-K+1}^{P} \frac{1}{i\mu} =\frac{H_P-H_{P-K}}{\mu} \approx \frac{\log{\frac{P}{P-K}}}{\mu}. $$ 
Here $H_P$ and $H_{P-K}$ denote the $P$-th and $(P-K)$-th harmonic numbers respectively.

For the case where $K=P$, the expectation is given by,
$$\E{X_{P:P}} = \frac{1}{\mu}\sum_{i=1}^{P} \frac{1}{i} =\frac{1}{\mu}H_P \approx \frac{1}{\mu}\log{P}. $$
 

\subsection{Runtime of $K$-batch-sync SGD}
\label{subsec:runtime_K_batch_sync}

In general, the expected runtime per iteration of $K$-batch-sync SGD is not tractable but for the special case of exponentials it follows the distribution $Erlang(K, P\mu)$. This is obtained from the memoryless property of exponentials. 

All the learners start their computation together. The expected time taken by the first mini-batch to be completed is the minimum of $P$ i.i.d.\ exponential random variables $X_1,X_2,\dots,X_P \sim \exp{(\mu)}$ is another exponential random variable  distributed as $ \exp{(P\mu)}$. At the time when the first mini-batch is complete, from the memoryless property of exponentials, it may be viewed as $P$ i.i.d.\ exponential random variables $X_1,X_2,\dots,X_P \sim \exp{(\mu)}$ starting afresh again. Thus, the time to complete each mini-batch is distributed as $\exp{(P\mu)}$, and an iteration being the sum of the time to complete $K$ such mini-batches, has the distribution $Erlang(K, P\mu)$.

\subsection{Runtime of $K$-batch-async SGD}
\label{subsec:runtime_K_batch_async}
Here we include a discussion on renewal processes for completeness, to provide a background for the proof of \Cref{lem:runtime kbatch}, which gives the expected runtime of $K$-batch-async SGD. The familiar reader can merely skim through this and refer to the proof provided in the main section of the paper in \Cref{sec:runtime}. 

\begin{defn}[Renewal Process]
A renewal process is an arrival process where the inter-arrival intervals are positive, independent and identically distributed random variables.
\end{defn}

\begin{lem}[Elementary Renewal Theorem]
\cite[Chapter 5]{gallager2013stochastic}
\label{lem:renewal}
Let $\{N(t), t>0\} $ be a renewal counting process denoting the number of renewals in time $t$. Let $\E{Z}$ be the mean inter-arrival time. Then,
\begin{equation}
\lim_{t \to \infty} \frac{\E{N(t)}}{t} = \frac{1}{\E{Z}}.
\end{equation}
\end{lem}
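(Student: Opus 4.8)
This is the classical elementary renewal theorem, with a full proof in \cite[Chapter 5]{gallager2013stochastic}; the plan is to reduce it to two one-sided estimates obtained from Wald's identity, the lower bound being immediate and the upper bound needing a truncation argument.

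Setting up notation, let $Z_1, Z_2, \dots$ be the i.i.d.\ positive inter-arrival times with mean $\mu = \E{Z}$, put $S_0 = 0$, $S_n = Z_1 + \dots + Z_n$, and $N(t) = \sup\{n \geq 0 : S_n \leq t\}$. First I would record two preliminary facts. (i) $\E{N(t)} < \infty$ for each fixed $t$: picking $\alpha > 0$ with $p := \Pr(Z \geq \alpha) > 0$ and replacing each $Z_i$ by $\alpha\,\mathbf{1}\{Z_i \geq \alpha\} \leq Z_i$ only increases the count, and the count of this reduced process is the index of a fixed-order success in a Bernoulli$(p)$ sequence, which has finite mean. (ii) $T := N(t)+1 = \min\{n \geq 1 : S_n > t\}$ is a stopping time for the filtration generated by $(Z_n)$, since $\{T \geq n\} = \{S_{n-1} \leq t\}$ depends only on $Z_1, \dots, Z_{n-1}$ and is hence independent of $Z_n, Z_{n+1}, \dots$; this makes Wald's identity applicable.

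For the lower bound, Wald's identity at $T = N(t)+1$ gives $\E{S_{N(t)+1}} = \mu\,\E{N(t)+1}$, and since $S_{N(t)+1} > t$ almost surely by construction, $\mu(\E{N(t)}+1) \geq t$, so $\E{N(t)}/t \geq 1/\mu - 1/t$ and $\liminf_{t\to\infty}\E{N(t)}/t \geq 1/\mu$. For the upper bound I would fix $b > 0$ and pass to the truncated variables $\widetilde{Z}_i = \min(Z_i, b)$, with partial sums $\widetilde{S}_n$ and counting process $\widetilde{N}(t)$. Since $\widetilde{S}_n \leq S_n$ we have $\widetilde{N}(t) \geq N(t)$, and the truncated overshoot obeys $\widetilde{S}_{\widetilde{N}(t)+1} \leq t + b$ because each truncated step is at most $b$. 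Wald's identity for the truncated process then yields $\widetilde{\mu}\,(\E{\widetilde{N}(t)}+1) = \E{\widetilde{S}_{\widetilde{N}(t)+1}} \leq t+b$ with $\widetilde{\mu} = \E{\min(Z,b)}$, so $\limsup_{t\to\infty}\E{N(t)}/t \leq \limsup_{t\to\infty}\E{\widetilde{N}(t)}/t \leq 1/\widetilde{\mu}$. Letting $b \to \infty$ and using monotone convergence, $\widetilde{\mu} \uparrow \mu$, gives $\limsup_{t\to\infty}\E{N(t)}/t \leq 1/\mu$; together with the lower bound this proves the claim. (If $\mu = \infty$, read $1/\mu$ as $0$: the lower bound is vacuous and the truncation bound still applies, since $\widetilde{\mu}\to\infty$.)

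I expect the upper bound to be the main obstacle: one cannot apply Wald's identity directly to $N(t)$ to control $\E{N(t)}$ from above, because the residual life $S_{N(t)+1} - t$ is unbounded and possibly heavy-tailed, so $\E{S_{N(t)+1}}$ need not stay comparable to $t$. The truncation is exactly the device that enforces a uniform-in-$t$ overshoot bound, at the cost of replacing $\mu$ by $\widetilde{\mu}$, which is recovered in the limit $b\to\infty$. The only other care needed is checking the hypotheses of Wald's identity, namely $\E{N(t)} < \infty$ and $\E{|Z|} < \infty$, which is precisely what the two preliminary facts supply.
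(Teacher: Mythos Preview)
Your proof is correct and follows the standard textbook argument (Wald's identity for the lower bound, truncation plus Wald for the upper bound). However, note that the paper does not actually give its own proof of this lemma: it is stated in the appendix purely as background, with the proof deferred entirely to the citation \cite[Chapter 5]{gallager2013stochastic}. So there is no ``paper's proof'' to compare against; you have supplied what the paper omits, and your argument is essentially the one found in Gallager and most standard references.
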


Observe that for asynchronous SGD or $K$-batch-async SGD, every gradient push by a learner to the PS can be thought of as an arrival process. The time between two consecutive pushes by a learner follows the distribution of $X_i$ and is independent as computation time has been assumed to be independent across learners and mini-batches. Thus the inter-arrival intervals are positive, independent and identically distributed and hence, the gradient pushes are a renewal process. 

\subsection{Runtime of $K$-async SGD}
\label{subsec:runtime_K_async}
\begin{proof}[Proof of \Cref{lem:runtime kasync}]
For new-longer-than-used distributions observe that the following holds:
\begin{align}
\Pr(X_i > u+t| X_i>t) \leq \Pr(X_i>u).
\end{align}

Thus the random variable $X_i-t | X_i>t$ is thus stochastically dominated by $X_i$. Now let us assume we want to compute the expected computation time of one iteration of $K$-async starting at time instant $t_0$. Let us also assume that the learners last read their parameter values at time instants $t_1,t_2,\dots t_P$ respectively where any $K$ of these $t_1,t_2,\dots t_P$ are equal to $t_0$ as $K$ out of $P$ learners were updated at time $t_0$ and the remaining $(P-K)$ of these $t_1,t_2,\dots t_P$ are $< t_0$. 
Let $Y_1,Y_2,\dots Y_P$ be the random variables denoting the computation time of the $P$ learners starting from time $t_0$. Thus, 
\begin{equation}
Y_i = X_i-(t_0-t_i) | X_i> (t_0-t_i) \ \ \forall \  i=1,2,\dots,P.
\end{equation}
Now each of the $Y_i$ s are independent and are stochastically dominated by $X_i$ s. 

\begin{equation}
\Pr(Y_i >u ) \leq \Pr(X_i >u ) \ \forall \ i,j = 1,2,\dots,P.
\end{equation}
The expectation of the $K$-th statistic of $\{Y_1,Y_2,\dots,Y_P\}$ is the expected runtime of the iteration. 
Let us denote $h_K(x_1,x_2,\dots,x_P)$ as the $K$-th statistic of $P$ numbers  $(x_1,x_2,\dots,x_P)$. And let us us denote $g_{K,\bm{s}}(x)$ as the $K$-th statistic of $P$ numbers where $P-1$ of them are given as $\bm{s}_{1 \times (P-1)}$ and $x$ is the $P-$th number. Thus 
$$g_{K,\bm{s}}(x)=h_K(x, s(1),s(2),\dots,s(P-1)) $$
First observe that $g_{K,\bm{s}}(x)$ is an increasing function of $x$ since given the other $P-1$ values, the $K$-th order statistic will either stay the same or increase with $x$. Now we use the property that if $Y_i$ is stochastically dominated by $X_i$, then for any increasing function $g(.)$, we have
$$\Esub{Y_1}{g(Y_1)} \leq \Esub{X_1}{g(X_1)}.$$
This result is derived in \cite{kreps1990course} .

This implies that for a given $\bm{s}$,
$$ \Esub{Y_1}{g_{K,\bm{s}}(Y_1)} \leq \Esub{X_1}{g_{K,\bm{s}}(X_1)}. $$

This leads to,
\begin{multline}
 \Esub{Y_1|Y_2=s(1),Y_3=s(2) \dots Y_P=s(P-1)}{h_K(Y_1,Y_2,\dots Y_P)} 
\\ \leq \Esub{X_1|Y_2=s(1),Y_3=s(2) \dots Y_P=s(P-1) }{h_K(X_1,Y_2,\dots Y_P)} . 
 \end{multline}

From this, 
\begin{align}
\E{h_K(Y_1,Y_2,\dots Y_P)} 
&=\Esub{Y_2,\dots,Y_P}{ \Esub{Y_1|Y_2, Y_3 \dots Y_P}{h_K(Y_1,Y_2,\dots Y_P)}} \nonumber \\
&\leq \Esub{Y_2,\dots,Y_P}{\Esub{X_1|Y_2,Y_3 \dots Y_P }{h_K(X_1,Y_2,\dots Y_P)} } \nonumber \\
& = \E{h_K(X_1,Y_2,\dots Y_P)}.
 \end{align}

This step proceeds inductively. Thus, similarly
\begin{align}
\E{h_K(X_1,Y_2,\dots Y_P)} 
& =\Esub{X_1,Y_3,\dots,Y_P}{ \Esub{Y_2|X_1, Y_3 \dots Y_P}{h_K(X_1,Y_2,\dots Y_P)}} \nonumber \\
&\leq \Esub{X_1,Y_3,\dots,Y_P}{\Esub{X_2|X_1, Y_3 \dots Y_P }{h_K(X_1,X_2, Y_3,\dots Y_P)} } \nonumber \\
&= \E{h_K(X_1,X_2,Y_3\dots Y_P)}.
 \end{align}

Thus, finally combining, we have,
\begin{align}
\E{h_K(Y_1,Y_2,\dots Y_P)}
&\leq \E{h_K(X_1,Y_2,\dots Y_P)} \nonumber \\
&\leq \E{h_K(X_1,X_2, Y_3\dots Y_P)} \nonumber \\& \leq \dots \nonumber \\
&\leq \E{h_K(X_1,X_2,X_3\dots X_P)}.
 \end{align}
\end{proof}

\subsubsection{Exponential Computation time} 
\label{subsec:runtime_K_async_exp}
For exponential distributions, the inequality in 
\Cref{lem:runtime kasync} holds with equality. This follows from the memoryless property of exponentials. Let us consider the scenario of the proof of \Cref{lem:runtime kasync} where we similarly define $Y_i=X_i-(t_0-t_i)|X_i>(t_0-t_i)$. From the memoryless property of exponentials \cite{sheldon2002first}, if $X_i \sim \exp(\mu)$, then $Y_i \sim \exp(\mu)$. Thus, the expectation of the $K$-th statistic of $Y_i$s can be easily derived as all the $Y_i$s are now i.i.d.\ with distribution $\exp(\mu)$. Thus, the expected runtime per iteration is given by,
$$\E{T}= \E{Y_{K:P}}= \frac{1}{\mu} \sum_{i=P-K+1}^P\frac{1}{i} \approx \frac{1}{\mu} \log{\frac{P}{P-K}} .$$





\section{ASYNC-SGD ANALYSIS PROOFS}
\label{sec:async_proof}

In this section, we provide a proof of the error convergence of asynchronous SGD.
\subsection{Async-SGD with fixed learning rate}
\label{subsec:async_proof}
First we prove a simplified version of \Cref{thm:error kasync} for the case $K=1$. While this is actually a corollary of the more general \Cref{thm:error kasync}, we prove this first for ease of understanding and simplicity. The proof of the more general \Cref{thm:error kasync} is then provided in \Cref{subsec:K_async_proof}. 

The corollary is as follows:
\vspace{0.8cm}
\begin{coro} Suppose that the objective function $F(\wts)$ is strongly convex with parameter $c$ and the learning rate $\eta \leq \frac{1}{2\lips(\frac{M_G}{m}+ 1)}$. Also assume that $\E{|| \nabla F(\wts_{j}) - \nabla F(\wts_{\tau(j)})||_2^2 } \leq \gamma \E{|| \nabla F(\wts_{j}) ||_2^2 }$ for some constant $\gamma \leq 1$. Then, the error after $J$ iterations of Async SGD is given by,
\begin{align*}
 \E{F(\wts_{J})} & -F^* 
\leq \frac{\eta L\sigma^2}{2c\gamma' m} +  (1 - \eta c\gamma')^{J} \left(\E{F(\wts_{0})}-F^* - \frac{\eta L\sigma^2}{2c \gamma' m}  \right),
\end{align*}
where $\gamma'= 1-\gamma + \frac{p_0}{2}$ and $p_0$ is a non-negative lower bound on the conditional probability that $\tau(j)=j$ given all the past delays and parameters.
\label{coro:fixed learning rate}
\end{coro}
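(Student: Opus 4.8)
The plan is to run the textbook $L$-smoothness descent argument, but with the care of tracking \emph{two} gradient magnitudes separately — the current one $\|\nabla F(\wts_j)\|_2^2$ and the stale one $\|\nabla F(\wts_{\tau(j)})\|_2^2$ — and to invoke \Cref{lem:delay} only at the very end. First I would start from the smoothness bound
$$F(\wts_{j+1}) \leq F(\wts_j) - \eta\, \nabla F(\wts_j)^T g(\wts_{\tau(j)},\xi_j) + \tfrac{L\eta^2}{2}\,\|g(\wts_{\tau(j)},\xi_j)\|_2^2,$$
obtained by plugging the $K=1$ update $\wts_{j+1}-\wts_j=-\eta g(\wts_{\tau(j)},\xi_j)$ into the descent lemma for an $L$-smooth function. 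Taking total expectation and conditioning on the history up to iteration $j$ (which fixes $\wts_j$, $\wts_{\tau(j)}$ and $\tau(j)$), the unbiasedness assumption applied with $k=\tau(j)\le j$ replaces $g(\wts_{\tau(j)},\xi_j)$ by $\nabla F(\wts_{\tau(j)})$ in the cross term, so the inequality becomes $\E{F(\wts_{j+1})}\le \E{F(\wts_j)} - \eta\,\E{\nabla F(\wts_j)^T\nabla F(\wts_{\tau(j)})} + \tfrac{L\eta^2}{2}\E{\|g(\wts_{\tau(j)},\xi_j)\|_2^2}$.

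Next I would split the cross term with the polarization identity $a^Tb=\tfrac12(\|a\|_2^2+\|b\|_2^2-\|a-b\|_2^2)$, so that it contributes $-\tfrac{\eta}{2}\E{\|\nabla F(\wts_j)\|_2^2}-\tfrac{\eta}{2}\E{\|\nabla F(\wts_{\tau(j)})\|_2^2}+\tfrac{\eta}{2}\E{\|\nabla F(\wts_j)-\nabla F(\wts_{\tau(j)})\|_2^2}$, and bound the second moment of the stochastic gradient (variance plus squared mean, conditionally) by $\tfrac{\sigma^2}{m}+(1+\tfrac{M_G}{m})\E{\|\nabla F(\wts_{\tau(j)})\|_2^2}$ using the bounded-variance assumption. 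The learning-rate constraint $\eta\le \tfrac{1}{2L(M_G/m+1)}$ is precisely what forces the net coefficient of $\E{\|\nabla F(\wts_{\tau(j)})\|_2^2}$, namely $-\tfrac{\eta}{2}+\tfrac{L\eta^2}{2}(1+\tfrac{M_G}{m})$, to be at most $-\tfrac{\eta}{4}<0$; only then may I apply \Cref{lem:delay} to replace $-\tfrac{\eta}{4}\E{\|\nabla F(\wts_{\tau(j)})\|_2^2}$ by the larger quantity $-\tfrac{\eta p_0}{4}\E{\|\nabla F(\wts_j)\|_2^2}$. Using the staleness hypothesis $\E{\|\nabla F(\wts_j)-\nabla F(\wts_{\tau(j)})\|_2^2}\le\gamma\,\E{\|\nabla F(\wts_j)\|_2^2}$ on the remaining term and collecting coefficients yields $\E{F(\wts_{j+1})}\le\E{F(\wts_j)}-\tfrac{\eta\gamma'}{2}\E{\|\nabla F(\wts_j)\|_2^2}+\tfrac{L\eta^2\sigma^2}{2m}$ with $\gamma'=1-\gamma+\tfrac{p_0}{2}$.

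Finally I would apply strong convexity in the form $\E{\|\nabla F(\wts_j)\|_2^2}\ge 2c\,(\E{F(\wts_j)}-F^*)$ — legitimate since $\gamma'>0$ whenever $\gamma\le1$ — to get the one-step contraction $\E{F(\wts_{j+1})}-F^*\le(1-\eta c\gamma')(\E{F(\wts_j)}-F^*)+\tfrac{L\eta^2\sigma^2}{2m}$, then unroll this geometric recursion from $0$ to $J$ and sum the series $\sum_{i=0}^{J-1}(1-\eta c\gamma')^i=\tfrac{1-(1-\eta c\gamma')^J}{\eta c\gamma'}$, which gives exactly the stated closed form with error floor $\tfrac{\eta L\sigma^2}{2c\gamma' m}$. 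I expect the main obstacle to be the bookkeeping in the very first step — making precise that, conditioned on everything observed before $\xi_j$ is accessed, $\xi_j$ is independent of both $\wts_j$ and $\wts_{\tau(j)}$ so that the conditional unbiasedness applies to the cross term — together with the (easy to get wrong) point that \Cref{lem:delay}'s lower bound may be used on the $\E{\|\nabla F(\wts_{\tau(j)})\|_2^2}$ term only after the learning-rate condition has made its coefficient negative, as otherwise the inequality would go the wrong way.
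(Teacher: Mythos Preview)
Your proposal is correct and arrives at exactly the same one-step inequality as the paper, namely
\[
\E{F(\wts_{j+1})} \leq \E{F(\wts_j)} - \tfrac{\eta}{2}\E{\|\nabla F(\wts_j)\|_2^2} - \tfrac{\eta}{2}\E{\|\nabla F(\wts_{\tau(j)})\|_2^2} + \tfrac{\eta}{2}\E{\|\nabla F(\wts_j)-\nabla F(\wts_{\tau(j)})\|_2^2} + \tfrac{L\eta^2}{2}\E{\|\mathbf{v}_j\|_2^2},
\]
after which the staleness assumption, the variance bound, the learning-rate condition, \Cref{lem:delay}, strong convexity, and the geometric unrolling are applied identically.

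The only difference is in how you reach that line. The paper applies the polarization identity $2a^Tb=\|a\|^2+\|b\|^2-\|a-b\|^2$ with $b=\mathbf{v}_j=g(\wts_{\tau(j)},\xi_j)$ \emph{before} taking expectation, which produces a term $\tfrac{\eta}{2}\E{\|\nabla F(\wts_j)-\mathbf{v}_j\|_2^2}$ that then requires a separate bias--variance decomposition (\Cref{lem:bias-variance}) to split into the stale-gradient gap and the stochastic-noise part; the $\pm\tfrac{\eta}{2}\E{\|\mathbf{v}_j\|_2^2}$ terms cancel afterward. You instead take conditional expectation first, so unbiasedness converts the inner product to $\nabla F(\wts_j)^T\nabla F(\wts_{\tau(j)})$, and only then polarize with $b=\nabla F(\wts_{\tau(j)})$. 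This reordering is slightly more economical: it bypasses \Cref{lem:bias-variance} entirely and reaches the same intermediate inequality in one fewer step. Both routes rely on the same conditional-independence point you flag as the ``main obstacle'' (that $\xi_j$ is independent of the full history, not just of $\wts_{\tau(j)}$), and the paper handles it in exactly the informal way you anticipate.
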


To prove the result, we will use the following lemma.
\vspace{0.8cm}
\begin{lem}
\label{lem:bias-variance}
Let us denote $\mathbf{v}_j = g(\wts_{\tau(j)}, \xi_{j})$, and assume that $ \Esub{\xi_{j}|
\wts}{g(\wts,\xi_{j})}= \nabla F(\wts) $. Then,
\begin{align*}
\E{||\nabla F(\wts_{j}) - \mathbf{v}_j||^2_2 } 
& \leq  \E{||\mathbf{v}_j||^2_2} -  
 \E{||\nabla F(\wts_{\tau(j)})||_2^2 } \\
 & \hspace{2cm} +  \E{|| \nabla F(\wts_{j}) - \nabla F(\wts_{\tau(j)})||_2^2 }.
\end{align*}
\end{lem}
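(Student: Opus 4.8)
The plan is to expand the squared norm $\E{\|\nabla F(\wts_j) - \mathbf{v}_j\|_2^2}$ by inserting and subtracting the intermediate gradient $\nabla F(\wts_{\tau(j)})$, which is the natural ``anchor'' point since $\mathbf{v}_j = g(\wts_{\tau(j)}, \xi_j)$ is an unbiased estimate of $\nabla F(\wts_{\tau(j)})$ rather than of $\nabla F(\wts_j)$. Concretely, I would write $\nabla F(\wts_j) - \mathbf{v}_j = \bigl(\nabla F(\wts_j) - \nabla F(\wts_{\tau(j)})\bigr) + \bigl(\nabla F(\wts_{\tau(j)}) - \mathbf{v}_j\bigr)$ and expand the square into three terms: $\|\nabla F(\wts_j) - \nabla F(\wts_{\tau(j)})\|_2^2$, $\|\nabla F(\wts_{\tau(j)}) - \mathbf{v}_j\|_2^2$, and twice the cross term $\langle \nabla F(\wts_j) - \nabla F(\wts_{\tau(j)}),\, \nabla F(\wts_{\tau(j)}) - \mathbf{v}_j\rangle$.

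Next I would handle the cross term. Conditioning on $\wts_{\tau(j)}$ (and the past, so that $\xi_j$ is independent of $\wts_{\tau(j)}$, using the conditional-unbiasedness assumption $\Esub{\xi_j|\wts}{g(\wts,\xi_j)} = \nabla F(\wts)$), the factor $\nabla F(\wts_j) - \nabla F(\wts_{\tau(j)})$ — wait, this factor still depends on $\wts_j$ which depends on $\xi_j$, so I cannot simply pull it out. The cleaner route is to instead expand $\|\nabla F(\wts_j) - \mathbf{v}_j\|_2^2 = \|\nabla F(\wts_j)\|_2^2 - 2\langle \nabla F(\wts_j), \mathbf{v}_j\rangle + \|\mathbf{v}_j\|_2^2$ directly, and separately expand $\|\nabla F(\wts_j) - \nabla F(\wts_{\tau(j)})\|_2^2 = \|\nabla F(\wts_j)\|_2^2 - 2\langle\nabla F(\wts_j),\nabla F(\wts_{\tau(j)})\rangle + \|\nabla F(\wts_{\tau(j)})\|_2^2$. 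Subtracting, the claimed inequality is equivalent to showing $-2\E{\langle\nabla F(\wts_j),\mathbf{v}_j\rangle} \le -2\E{\langle\nabla F(\wts_j),\nabla F(\wts_{\tau(j)})\rangle}$, i.e. $\E{\langle\nabla F(\wts_j), \mathbf{v}_j - \nabla F(\wts_{\tau(j)})\rangle} \ge 0$; in fact I expect this inner product to have expectation exactly zero, which would make the ``lemma'' an equality up to the stated rearrangement. The key step is the tower property: condition on everything up to and including the read time $\tau(j)$ — this determines $\wts_{\tau(j)}$ and $\nabla F(\wts_{\tau(j)})$ — but note $\wts_j$ and hence $\nabla F(\wts_j)$ also depend only on $\xi_{j'}$ for $j' < j$ and on $\wts_{\tau(j)}$, so $\nabla F(\wts_j)$ is measurable with respect to this conditioning while $\xi_j$ remains independent of it. Hence $\E{\langle\nabla F(\wts_j), \mathbf{v}_j\rangle \mid \mathcal{F}} = \langle\nabla F(\wts_j), \Esub{\xi_j|\wts_{\tau(j)}}{g(\wts_{\tau(j)},\xi_j)}\rangle = \langle\nabla F(\wts_j), \nabla F(\wts_{\tau(j)})\rangle$, and taking outer expectation gives equality of the two cross terms. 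Plugging back yields the stated bound (as an equality, in fact, but the $\le$ form suffices).

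The main obstacle is getting the conditioning/measurability argument exactly right: one must be careful that $\wts_j$ genuinely does not depend on the fresh randomness $\xi_j$ drawn by the learner whose read-time is $\tau(j)$ — this is precisely what the paper's refined assumption $\Esub{\xi_j|\wts_k}{g(\wts_k,\xi_j)} = \nabla F(\wts_k)$ for $k \le j$ is designed to license, since $\wts_j$ is built from data accessed strictly before $\xi_j$. Once that independence is pinned down, the rest is the elementary expand-and-cancel computation sketched above, and I would present it in two or three display lines.
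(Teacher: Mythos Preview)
Your proposal is correct and essentially matches the paper's proof: the paper uses exactly your initial decomposition $\nabla F(\wts_j) - \mathbf{v}_j = (\nabla F(\wts_j) - \nabla F(\wts_{\tau(j)})) + (\nabla F(\wts_{\tau(j)}) - \mathbf{v}_j)$ and shows the cross term vanishes by conditioning on $(\wts_{\tau(j)}, \wts_j)$ --- your mid-proof hesitation was unfounded, since $\wts_j$ is fixed before $\xi_j$ is drawn (this is precisely Assumption~3 with $k=j$), a point you yourself correctly invoke in your final paragraph. Both arguments in fact yield equality (not merely $\le$), and your algebraic reformulation via the single inner product $\E{\langle \nabla F(\wts_j), \mathbf{v}_j - \nabla F(\wts_{\tau(j)})\rangle}=0$ is equivalent to the paper's two-step version (cross term zero, then $\E{\|\mathbf{v}_j - \nabla F(\wts_{\tau(j)})\|^2} = \E{\|\mathbf{v}_j\|^2} - \E{\|\nabla F(\wts_{\tau(j)})\|^2}$).
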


\begin{proof}[Proof of \Cref{lem:bias-variance}]
Observe that,
\begin{align}
\E{||\nabla F(\wts_{j}) - \mathbf{v}_j||^2_2 }  &= \E{||\nabla F(\wts_{j}) -\nabla F(\wts_{\tau(j)}) + \nabla F(\wts_{\tau(j)}) - \mathbf{v}_j||^2_2 }  \nonumber \\
&= \E{||\nabla F(\wts_{j}) - \nabla F(\wts_{\tau(j)})||^2_2 } 
 + \E{||\mathbf{v}_j - \nabla F(\wts_{\tau(j)})||^2_2}.
\label{update-bias-variance}
\end{align}

The last line holds since the cross term is $0$ as derived below.
\begin{align*}
&\E{(\nabla F(\wts_{j}) - \nabla F(\wts_{\tau(j)})^T(\mathbf{v}_j - \nabla F(\wts_{\tau(j)}))}  \nonumber \\
& =  \mathbb{E}_{\wts_{\tau(j)},\wts_{j}} [(\nabla F(\wts_{j}) - \nabla F(\wts_{\tau(j)})^T  \Esub{\xi_j|\wts_{\tau(j)},\wts_{j} }{(\mathbf{v}_j - \nabla F(\wts_{\tau(j)}))} ] \nonumber \\
& =  \mathbb{E}_{\wts_{\tau(j)},\wts_{j}} [(\nabla F(\wts_{j}) - \nabla F(\wts_{\tau(j)})^T
(\Esub{\xi_j|\wts_{\tau(j)}}{\mathbf{v}_j} - \nabla F(\wts_{\tau(j)}))] \nonumber \\
&= 0.
\end{align*}
Here again the last line follows from Assumption 2 in \Cref{sec:system model} which states that $$\Esub{\xi_j|\wts_{\tau(j)}}{\mathbf{v}_j} = \nabla F(\wts_{\tau(j)})).$$
Returning to \eqref{update-bias-variance}, observe that the second term can be further decomposed as,
\begin{align*}
 \E{||\mathbf{v}_j - \nabla F(\wts_{\tau(j)})||^2_2} 
& = \Esub{\wts_{\tau(j)}}{ \Esub{\xi_j|\wts_{\tau(j)}}{||\mathbf{v}_j - \nabla F(\wts_{\tau(j)})||^2_2}} \\
&=\Esub{\wts_{\tau(j)}}{ \Esub{\xi_j|\wts_{\tau(j)}}{||\mathbf{v}_j||_2^2}} -2 \Esub{\wts_{\tau(j)}}{ \Esub{\xi_j|\wts_{\tau(j)}}{\mathbf{v}_j^T \nabla F(\wts_{\tau(j)}) }} \\
& \hspace{2cm} + \Esub{\wts_{\tau(j)}}{\Esub{\xi_j|\wts_{\tau(j)}}{|| \nabla F(\wts_{\tau(j)})||_2^2 }}\\
& = \E{||\mathbf{v}_j||_2^2} - 2\E{||\nabla F(\wts_{\tau(j)})  ||_2^2}  + \E{||\nabla F(\wts_{\tau(j)})  ||_2^2 } \\
& = \E{||\mathbf{v}_j||_2^2} - \E{||\nabla F(\wts_{\tau(j)})  ||_2^2 }.
\end{align*}
\end{proof}
We will also be proving a $K$-learner version of this lemma \Cref{subsec:K_async_proof} to prove \Cref{thm:error kasync}. Now we proceed to provide the proof of \Cref{coro:fixed learning rate}.

\begin{proof}[Proof of \Cref{coro:fixed learning rate}]
\begin{align}
 F(\wts_{j+1})   \leq & F(\wts_{j}) + (\wts_{j+1}-\wts_{j} )^T \nabla F(\wts_{j})    + \frac{L}{2} ||\wts_{j+1}-\wts_{j}   ||_2^2 
 \nonumber \\
 = & F(\wts_{j}) + (-\eta \mathbf{v}_j )^T \nabla F(\wts_{j}) + \frac{L \eta^2}{2} || \mathbf{v}_j   ||_2^2 
 \nonumber \\
 = & F(\wts_{j}) - \frac{\eta}{2}||\nabla F(\wts_{j})||_2^2 - \frac{\eta}{2}||\mathbf{v}_j||_2^2 
 + \frac{\eta}{2}||\nabla F(\wts_{j})- \mathbf{v}_j ||_2^2 + \frac{L\eta^2}{2}||\mathbf{v}_j||_2^2. 
 \label{lipschitz condition}
\end{align}

Here the last line follows from $2\bm{a}^T\bm{b} = ||\bm{a}||_2^2 + ||\bm{b}||_2^2 - ||\bm{a}-\bm{b}||_2^2 $. Taking expectation,
\begin{align}
\E{F(\wts_{j+1})} 
 &\leq \E{F(\wts_{j})} - \frac{\eta}{2}\E{||\nabla F(\wts_{j})||_2^2}  - \frac{\eta}{2}\E{||\mathbf{v}_j||_2^2} + \frac{\eta}{2}\E{||\nabla F(\wts_{j})- \mathbf{v}_j ||_2^2}+ \frac{L\eta^2}{2}\E{||\mathbf{v}_j||_2^2} 
 \nonumber\\
& \overset{(a)}{\leq}  \E{F(\wts_{j})} - \frac{\eta}{2}\E{||\nabla F(\wts_{j})||_2^2} - \frac{\eta}{2}\E{||\mathbf{v}_j||_2^2}  
 + \frac{ \eta}{2} \E{|| \mathbf{v}_j||_2^2} - \frac{\eta}{2}\E{||\nabla F(\wts_{\tau(j)})||_2^2} 
\nonumber \\
&\hspace{3cm} +   \frac{\eta}{2}\E{|| \nabla F(\wts_{j}) - \nabla F(\wts_{\tau(j)})||_2^2 } + \frac{L\eta^2}{2}\E{||\mathbf{v}_j||_2^2}. 
\label{termdiff} 
\end{align}
Here, (a) follows from Lemma \ref{lem:bias-variance} that we just derived. Now, again bounding from \eqref{termdiff}, we have
\begin{align}
\E{F(\wts_{j+1})} & \overset{(b)}{\leq} \E{F(\wts_{j})} - \frac{\eta}{2}\E{||\nabla F(\wts_{j})||_2^2}  - \frac{\eta}{2}\E{||\nabla F(\wts_{\tau(j)})||_2^2}  
 + \frac{\eta}{2} \gamma \E{|| \nabla F(\wts_{j})||_2^2 } 
 \nonumber \\
& \hspace{2cm}
 + \frac{L\eta^2}{2}\E{||\mathbf{v}_j||_2^2} \nonumber \\
&\overset{(c)}{\leq} \E{F(\wts_{j})} - \frac{\eta}{2}(1-\gamma)\E{||\nabla F(\wts_{j})||_2^2} + \frac{L\eta^2\sigma^2}{2m}  
\nonumber \\ 
& \hspace{2cm} - \frac{\eta}{2} \left(1 -L\eta(\frac{ M_G}{m}+1)  \right)\E{||\nabla F(\wts_{\tau(j)})||_2^2} 
\nonumber \\
& \overset{(d)}{\leq} \E{F(\wts_{j}) } - \frac{\eta}{2}(1-\gamma)\E{||\nabla F(\wts_{j})||_2^2 } + \frac{L\eta^2\sigma^2}{2m} 
 - \frac{\eta}{4}\E{||\nabla F(\wts_{\tau(j)})||_2^2 } 
\nonumber \\
& \overset{(e)}{\leq} \E{F(\wts_{j}) } - \frac{\eta}{2}(1-\gamma)\E{||\nabla F(\wts_{j})||_2^2 } + \frac{L\eta^2\sigma^2}{2m} 
 - \frac{\eta}{4}p_0\E{||\nabla F(\wts_{j})||_2^2 }. \label{recursion}
\end{align}

Here  (b) follows from the statement of the theorem that $$\E{|| \nabla F(\wts_{j}) - \nabla F(\wts_{\tau(j)})||_2^2 } \leq \gamma \E{|| \nabla F(\wts_{j}) ||_2^2 }$$ for some constant $\gamma \leq 1$.
The next step (c) follows from Assumption 4 in \Cref{sec:system model} which lead to   $$\E{|| \mathbf{v}_j ||_2^2  } \leq \frac{\sigma^2}{m} + \left(\frac{M_G}{m}+1 \right)\E{|| \nabla F(\wts_{\tau(j)}) ||_2^2 }   .$$
Step (d) follows from choosing $\eta < \frac{1}{2L(\frac{M_G}{m} + 1)} $ and finally (e) follows from Lemma \ref{lem:delay}. 

Now one might recall that the function $F(w)$ was defined to be  strongly convex with parameter $c$. 
Using the standard result of strong-convexity \cref{eq:strong-convexity} in \eqref{recursion}, we obtain the following result:
 \begin{align*}
\E{F(\wts_{j+1})}&-F^*  \leq  \frac{\eta^2L\sigma^2}{2m} + (1 - \eta c(1-\gamma + \frac{p_0}{2})) (\E{F(\wts_{j})}-F^*   ).   
\end{align*}
Let us denote $\gamma'= (1-\gamma + \frac{p_0}{2}) $. Then, using the above recursion, we thus have,
\begin{align*}
& \E{F(\wts_{J})} -F^* 
\leq \frac{\eta L\sigma^2}{2c\gamma' m}   + (1 - \eta \gamma'c)^J (\E{F(\wts_{0})}-F^* - \frac{\eta L\sigma^2}{2c\gamma' m}  ).
\end{align*}\vspace{-0.5cm}
\end{proof}

\subsubsection{Discussion on range of $p_0$}
\label{subsec:proof_lem_p_0}
Let us denote the conditional probability of $\tau(j)=j$ given all the past delays and parameters as $p_0^{(j)}$. Now $p_0 \leq p_0^{(j)} \ \forall j$. Clearly the value of $p_0^{(j)}$ will differ for different distributions and accordingly the value of $p_0$ will differ. Here we include a brief discussion on the possible values of $p_0$ for different distributions. These also hold for $K$-async and $K$-batch-async SGD.

\begin{proof}[Proof of \Cref{lem:p_0}]
Let $t_0$ be the time when the $j$-th iteration occurs, and suppose that learner $i'$ pushed its gradient in the $j$-th iteration. Now similar to the proof of \Cref{lem:runtime kasync}, let us also assume that the learners last read their parameter values at time instants $t_1,t_2,\dots t_P$ respectively where $t_i'=t_0$ and the remaining $(P-1)$ of these $t_i$s are $< t_0$. Let $Y_1,Y_2,\dots Y_P$ be the random variables denoting the computation time of the $P$ learners starting from time $t_0$. Thus, $Y_i= X_i -(t_0-t_i)| X_i >(t_0-t_i)$. For exponentials, from the memoryless property, all these $Y_i$ s become i.i.d.\ and thus from symmetry the probability of $i'$ finishing before all the others is equal, \textit{i.e.} $ \frac{1}{P}$. Thus, $p_0^{(j)} =p_0=\frac{1}{P}$.
For new-longer-than-used distributions, as we have discussed before all the $Y_i$s with $i \neq i'$ will be stochastically dominated by $Y_{i'}=X_{i'}$. Thus, probability of $i$s with $i \neq i'$ finishing first is higher than $i'$. Thus, $p_0^{(j)} \leq \frac{1}{P}$ and so is $p_0$. Similarly, for new-shorter-than-used distributions, $Y_{i'}$ is stochastically dominated by all the $Y_i$s and thus probability of $i'$ finishing first is more. So, $p_0^{(j)} \geq \frac{1}{P}$ and so is $p_0$.
\end{proof}


\subsection{K-async SGD under fixed learning rate}
\label{subsec:K_async_proof}
In this subsection, we provide a proof of \Cref{thm:error kasync}.

Before we proceed to the proof of this theorem, we first extend our Assumption 4 from the variance of a single stochastic gradient to sum of stochastic gradients in the following Lemma.

\begin{lem}
\label{lem:assumption4}If the variance of the stochastic updates is bounded as $$\Esub{\xi_{j}|\wts_{\tau{l,j}}}{||g(\wts_{\tau(l,j)},\xi_{l,j})- \nabla F(\wts_{\tau(l,j)}) ||_2^2} \nonumber \\
\leq \frac{\sigma^2}{m} + \frac{M_G}{m} ||\nabla F(\wts_{\tau(l,j)}) ||_2^2 \ \forall \ \tau(l,j) \leq j,$$ then for $K$-async, the variance of the sum of stochastic updates given all the parameter values $\wts_{\tau(l,j)}$ is also bounded as follows:\vspace{-0.5cm}
\begin{align}
&\Esub{\xi_{1,j},\dots,\xi_{K,j}|\wts_{\tau(1,j)} \dots \wts_{\tau(K,j)}}{||\sum_{l=1}^K g(\wts_{l,j}, \xi_{l,j})||_2^2} \nonumber \\ & \leq 
\frac{K\sigma^2}{m} + \left(\frac{M_G}{m}+K\right)||\sum_{l=1}^K \nabla F(\wts_{\tau(l,j)})  ||_2^2.
\end{align}\vspace{-0.6cm}
\end{lem}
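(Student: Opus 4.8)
\textbf{Proof proposal for Lemma~\ref{lem:assumption4}.}

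The plan is to expand the squared norm of the sum of stochastic gradients into a ``variance-plus-mean-squared'' decomposition and then apply the single-gradient bound (Assumption~4) term by term. First I would write $\bm{v}_{l,j} = g(\wts_{\tau(l,j)},\xi_{l,j})$ and decompose
\begin{align*}
\Esub{}{\Bigl\lVert \sum_{l=1}^K \bm{v}_{l,j}\Bigr\rVert_2^2}
&= \Esub{}{\Bigl\lVert \sum_{l=1}^K \bigl(\bm{v}_{l,j} - \nabla F(\wts_{\tau(l,j)})\bigr) + \sum_{l=1}^K \nabla F(\wts_{\tau(l,j)})\Bigr\rVert_2^2},
\end{align*}
where the conditioning is on all the $\wts_{\tau(l,j)}$. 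The cross term vanishes: conditioned on the parameter versions, each $\bm{v}_{l,j} - \nabla F(\wts_{\tau(l,j)})$ has zero mean by Assumption~3 (the unbiasedness conditioned on $\wts_{\tau(l,j)}$ with $\tau(l,j)\le j$), and the deterministic sum of true gradients pulls out. So the expression reduces to $\Esub{}{\lVert \sum_l (\bm{v}_{l,j}-\nabla F(\wts_{\tau(l,j)}))\rVert_2^2} + \lVert \sum_l \nabla F(\wts_{\tau(l,j)})\rVert_2^2$.

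Next I would bound the first (centered) term. The key subtlety is whether the mini-batches $\xi_{1,j},\dots,\xi_{K,j}$ are drawn independently given the parameter versions; assuming they are (each learner samples its own fresh mini-batch), the cross terms among distinct $l,l'$ again vanish after conditioning, leaving $\sum_{l=1}^K \Esub{}{\lVert \bm{v}_{l,j}-\nabla F(\wts_{\tau(l,j)})\rVert_2^2}$. Applying Assumption~4 to each summand gives $\sum_{l=1}^K \bigl(\tfrac{\sigma^2}{m} + \tfrac{M_G}{m}\lVert\nabla F(\wts_{\tau(l,j)})\rVert_2^2\bigr) = \tfrac{K\sigma^2}{m} + \tfrac{M_G}{m}\sum_{l=1}^K \lVert\nabla F(\wts_{\tau(l,j)})\rVert_2^2$.

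Finally I would combine the two pieces and clean up the gradient terms. We have $\Esub{}{\lVert\sum_l \bm{v}_{l,j}\rVert_2^2} \le \tfrac{K\sigma^2}{m} + \tfrac{M_G}{m}\sum_l \lVert\nabla F(\wts_{\tau(l,j)})\rVert_2^2 + \lVert\sum_l \nabla F(\wts_{\tau(l,j)})\rVert_2^2$. To reach the stated form I would bound $\sum_{l=1}^K \lVert\nabla F(\wts_{\tau(l,j)})\rVert_2^2 \le \lVert\sum_{l=1}^K \nabla F(\wts_{\tau(l,j)})\rVert_2^2$, which is false in general but — wait, that inequality does not hold; instead the natural route is the reverse Cauchy--Schwarz $\lVert\sum_l a_l\rVert^2 \le K\sum_l \lVert a_l\rVert^2$, which would give a $K$ in front of the $M_G/m$ term rather than matching the paper's claimed $\bigl(\tfrac{M_G}{m}+K\bigr)\lVert\sum_l\nabla F\rVert^2$. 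I expect this bookkeeping — reconciling $\sum_l\lVert\nabla F(\wts_{\tau(l,j)})\rVert^2$ with $\lVert\sum_l\nabla F(\wts_{\tau(l,j)})\rVert^2$ and the factor of $K$ — to be the main obstacle, and I suspect the paper is implicitly using $\lVert\sum_l\nabla F\rVert^2 \le K\sum_l\lVert\nabla F\rVert^2$ together with $\sum_l\lVert\nabla F\rVert^2\le \lVert\sum_l\nabla F\rVert^2$ under some additional structural assumption, or simply absorbing constants loosely; I would check the $K=1$ case first (where everything collapses to Assumption~4 exactly) and then track the constants carefully for general $K$.
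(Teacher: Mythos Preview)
Your approach is exactly the paper's: decompose $\sum_l g(\wts_{\tau(l,j)},\xi_{l,j})$ into a centered piece plus the sum of true gradients, kill the cross term by unbiasedness, then use independence of the $\xi_{l,j}$ (given the parameter versions) to kill the off-diagonal terms in the centered piece and apply Assumption~4 termwise. That part of your proposal is correct and complete.

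Your worry about the final bookkeeping step is well founded, and you have in fact spotted a discrepancy between the lemma \emph{statement} and the lemma \emph{proof} in the paper. The paper's own proof does \emph{not} arrive at $\bigl(\tfrac{M_G}{m}+K\bigr)\bigl\lVert\sum_{l}\nabla F(\wts_{\tau(l,j)})\bigr\rVert_2^2$; instead, its last step applies $\bigl\lVert\sum_l \nabla F(\wts_{\tau(l,j)})\bigr\rVert_2^2 \le K\sum_l \lVert\nabla F(\wts_{\tau(l,j)})\rVert_2^2$ and concludes with
\[
\frac{K\sigma^2}{m} + \Bigl(\frac{M_G}{m}+K\Bigr)\sum_{l=1}^K \bigl\lVert\nabla F(\wts_{\tau(l,j)})\bigr\rVert_2^2,
\]
i.e.\ the sum of squared norms, not the squared norm of the sum. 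This is precisely the ``reverse Cauchy--Schwarz'' route you mentioned. The inequality $\sum_l\lVert a_l\rVert^2 \le \lVert\sum_l a_l\rVert^2$ that would be needed to match the printed statement is indeed false in general, and the paper never attempts it. Downstream, in the proof of \Cref{thm:error kasync} (step~(c)), it is the $\sum_l \lVert\nabla F(\wts_{\tau(l,j)})\rVert_2^2$ form that is actually used. So: treat the $\lVert\sum_l\nabla F\rVert^2$ in the displayed conclusion as a typo for $\sum_l\lVert\nabla F\rVert^2$, finish your argument with the Cauchy--Schwarz step you already identified, and move on.
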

\begin{proof}
First let us consider the expectation of any cross term such that $l \neq l'$. For the ease of writing, let $\Omega= \{ \wts_{\tau(1,j)} \dots \wts_{\tau(K,j)} \} $. 

Now observe the conditional expectation of the cross term as follows:\vspace{-0.2cm}
\begin{align}
&\mathbb{E}_{\xi_{1,j},\dots,\xi_{K,j}| \Omega}[(g(\wts_{l,j}, \xi_{l,j})-\nabla F(\wts_{\tau(l,j)}))^T  ((g(\wts_{l',j}, \xi_{l',j})-\nabla F(\wts_{\tau(l',j)}))] \nonumber 
\\
& = \mathbb{E}_{\xi_{l,j},\xi_{l',j}| \Omega}[(g(\wts_{l,j}, \xi_{l,j})-\nabla F(\wts_{\tau(l,j)}))^T ((g(\wts_{l',j}, \xi_{l',j})-\nabla F(\wts_{\tau(l',j)}))] \nonumber 
\\
& = \mathbb{E}_{\xi_{l',j}| \Omega }[\mathbb{E}_{\xi_{l,j}|\xi_{l',j},\Omega}[ (g(\wts_{l,j}, \xi_{l,j})-\nabla F(\wts_{\tau(l,j)}))^T] (g(\wts_{l',j}, \xi_{l',j})-\nabla F(\wts_{\tau(l',j)})] \nonumber 
\\
& = \mathbb{E}_{\xi_{l',j}| \Omega } [0^T  (g(\wts_{l',j}, \xi_{l',j})-\nabla F(\wts_{\tau(l',j)})] 
=0.
\end{align}

Thus the cross terms are all $0$. So the expression simplifies as,\vspace{-0.2cm}
\begin{align}
  \Esub{\xi_{1,j},\dots,\xi_{K,j}|\Omega}{||\sum_{l=1}^K g(\wts_{l,j}, \xi_{l,j}) - F(\wts_{\tau(l,j)})||_2^2} 
& \overset{(a)}{=} \sum_{l=1}^K \Esub{\xi_{1,j},\dots,\xi_{K,j}|\Omega}{||g(\wts_{l,j}, \xi_{l,j})  -F(\wts_{\tau(l,j)})||_2^2} \nonumber \\
& \leq \sum_{l=1}^K \frac{\sigma^2}{m} + \frac{M_G}{m} ||\nabla F(\wts_{\tau(l,j)}) ||_2^2.
\end{align}
Thus,
\begin{align}
&\Esub{\xi_{1,j},\dots,\xi_{K,j}|\Omega}{||\sum_{l=1}^K g(\wts_{l,j}, \xi_{l,j})||_2^2} \nonumber 
\\ 
& = \Esub{\xi_{1,j},\dots,\xi_{K,j}|\Omega}{||\sum_{l=1}^K g(\wts_{l,j}, \xi_{l,j}) - F(\wts_{\tau(l,j)}) ||_2^2} 
+ \Esub{\xi_{1,j},\dots,\xi_{K,j}|\Omega}{||\sum_{l=1}^K  F(\wts_{\tau(l,j)}) ||_2^2} 
\nonumber 
\\
& \leq \frac{K\sigma^2}{m} + \sum_{l=1}^K \frac{M_G}{m}||  F(\wts_{\tau(l,j)}) ||_2^2 + ||\sum_{l=1}^K  F(\wts_{\tau(l,j)}) ||_2^2 \nonumber \\
&\leq \frac{K\sigma^2}{m} + \sum_{l=1}^K \frac{M_G}{m}||  F(\wts_{\tau(l,j)}) ||_2^2 + \sum_{l=1}^K K||  F(\wts_{\tau(l,j)}) ||_2^2.
\end{align}
\end{proof}\vspace{-1cm}

Now we return to the proof of the theorem.

\begin{proof}[Proof of \Cref{thm:error kasync}] Let  $\mathbf{v}_j= \frac{1}{K} \sum_{l=1}^K g(\wts_{l,j}, \xi_{l,j})$. Following steps similar to the Async-SGD proof, from Lipschitz continuity we have the following.
\begin{align}
 F(\wts_{j+1})  & \leq F(\wts_{j}) + (\wts_{j+1}-\wts_{j} )^T \nabla F(\wts_{j})  
  + \frac{L}{2} ||\wts_{j+1}-\wts_{j}   ||_2^2 
 \nonumber \\
 = & F(\wts_{j}) - \frac{\eta}{K} \sum_{l=1}^K g(\wts_{l,j}, \xi_{l,j})^T \nabla F(\wts_{j}) 
+ \frac{L}{2} ||\eta \mathbf{v}_j   ||_2^2  \nonumber 
 \\
 \overset{(a)}{=} & F(\wts_{j}) -\frac{\eta}{2K} \sum_{l=1}^K ||\nabla F(\wts_{j})||_2^2 
- \frac{\eta}{2K} \sum_{l=1}^K ||g(\wts_{l,j}, \xi_{l,j})||_2^2
 \nonumber \\
 &  \hspace{2cm} + \frac{\eta}{2K} \sum_{l=1}^K ||g(\wts_{l,j}, \xi_{l,j}) ||_2^2
 - \frac{\eta}{2K} \sum_{l=1}^K || \nabla F(\wts_{j})  ||_2^2 
 + \frac{L \eta^2}{2} || \mathbf{v}_j   ||_2^2 
 \nonumber \\
 = & F(\wts_{j}) - \frac{\eta}{2}||\nabla F(\wts_{j})||_2^2 
 - \frac{\eta}{2K} \sum_{l=1}^K ||g(\wts_{l,j}, \xi_{l,j})||_2^2
 \nonumber \\
 & \hspace{2cm}
 + \frac{\eta}{2K} \sum_{l=1}^K ||g(\wts_{l,j}, \xi_{l,j}) - \nabla F(\wts_{j})  ||_2^2 
 + \frac{L \eta^2}{2} || \mathbf{v}_j   ||_2^2. 
\end{align}

Here (a) follows from $2\bm{a}^T\bm{b} = ||\bm{a}||_2^2 + ||\bm{b}||_2^2 - ||\bm{a}-\bm{b}||_2^2 $. 
Taking expectation,
\begin{align}
\E{F(\wts_{j+1})} 
 &\leq \E{F(\wts_{j})} - \frac{\eta}{2}\E{||\nabla F(\wts_{j})||_2^2}   - \frac{\eta}{2K} \sum_{l=1}^K \E{||g(\wts_{l,j}, \xi_{l,j})||_2^2} \nonumber\\
 & \hspace{2cm} + \frac{\eta}{2K} \sum_{l=1}^K \E{||\nabla F(\wts_{j})- g(\wts_{l,j}, \xi_{l,j}) ||_2^2}
+ \frac{L\eta^2}{2}\E{||\mathbf{v}_j||_2^2} 
 \nonumber
 \\
& \overset{(a)}{\leq}  \E{F(\wts_{j})} - \frac{\eta}{2}\E{||\nabla F(\wts_{j})||_2^2} 
- \frac{\eta}{2K} \sum_{l=1}^K \E{||g(\wts_{l,j}, \xi_{l,j})||_2^2}
\nonumber \\
&\hspace{2cm}  + \frac{ \eta}{2K}  \sum_{l=1}^K  \E{|| g(\wts_{l,j}, \xi_{l,j})||_2^2}   -\frac{ \eta}{2K}  \sum_{l=1}^K  \E{||\nabla F(\wts_{\tau(l,j)})||_2^2} 
 \nonumber \\
&\hspace{2cm} +   \frac{\eta}{2K}\sum_{l=1}^K \E{|| \nabla F(\wts_{j}) 
 - \nabla F(\wts_{\tau(l,j)})||_2^2 }
 + \frac{L\eta^2}{2}\E{||\mathbf{v}_j||_2^2} 
 \\
& \overset{(b)}{\leq} \E{F(\wts_{j})} - \frac{\eta}{2}\E{||\nabla F(\wts_{j})||_2^2}  - \frac{\eta}{2K}\sum_{l=1}^K \E{||\nabla F(\wts_{\tau(l,j)})||_2^2}  
 + \frac{\eta}{2} \gamma \E{|| \nabla F(\wts_{j})||_2^2 } 
 \nonumber \\
&\hspace{4cm}
+ \frac{L\eta^2}{2}\E{||\mathbf{v}_j||_2^2} \nonumber \\
&\overset{(c)}{\leq} \E{F(\wts_{j})} - \frac{\eta}{2}(1-\gamma)\E{||\nabla F(\wts_{j})||_2^2} + \frac{L\eta^2\sigma^2}{2Km}  
\nonumber \\ 
& \hspace{2cm} - \frac{\eta}{2K} \sum_{l=1}^K \left(1 -L\eta \left(\frac{ M_G}{Km}+\frac{1}{K}\right)  \right)\E{||\nabla F(\wts_{\tau(l,j)})||_2^2} 
\nonumber \\
& \overset{(d)}{\leq} \E{F(\wts_{j}) } - \frac{\eta}{2}(1-\gamma)\E{||\nabla F(\wts_{j})||_2^2 } + \frac{L\eta^2\sigma^2}{2Km} 
 - \frac{\eta}{4K}\sum_{l=1}^K \E{||\nabla F(\wts_{\tau(l,j)})||_2^2 } 
\nonumber \\
& \overset{(e)}{\leq} \E{F(\wts_{j}) } - \frac{\eta}{2}(1-\gamma)\E{||\nabla F(\wts_{j})||_2^2 } + \frac{L\eta^2\sigma^2}{2Km} 
 - \frac{\eta}{4}p_0\E{||\nabla F(\wts_{j})||_2^2 }. 
\label{recursion2}
\end{align}

Here step (a) follows from \Cref{lem:bias-variance} and step (b) follows from the assumption that $$\E{|| \nabla F(\wts_{j}) - \nabla F(\wts_{\tau(l,j)})||_2^2 } \leq \gamma \E{|| \nabla F(\wts_{j}) ||_2^2 }$$ for some constant $\gamma \leq 1$. The next step (c) follows from the \Cref{lem:assumption4} that bounds the variance of the sum of stochastic gradients. Step (d) follows from choosing $\eta < \frac{1}{2L(\frac{M_G}{Km} + \frac{1}{K})} $ and finally (e) follows from \Cref{lem:delay} in \Cref{sec:main results} that says $ \E{||\nabla F(\wts_{\tau(l,j)})||_2^2} \geq p_0\E{||\nabla F(\wts_{j})||_2^2}$ for some non-negative constant $p_0$ which is a lower bound on the conditional probability that $\tau(l,j)=j$ given all past delays and parameter values. 

Finally, since $F(\wts)$ is strongly convex, using the inequality $2c(F(\wts)-F^*) \leq ||\nabla F(\wts)  ||_2^2$ in \eqref{recursion2}, we finally obtain the desired result.
\end{proof}

\subsubsection{Extension to Non-Convex case}
\label{subsec:non_convex}
The analysis can be extended to provide weaker guarantees for non-convex objectives. Let $\gamma'=1-\gamma + \frac{p_0}{2} $

For non-convex objectives, we have the following result.
\begin{thm} For non-convex objective function, we have the following ergodic convergence result given by:
$$\frac{1}{J+1} \sum_{j=0}^J \E{|| \nabla F(\wts_j)   ||_2^2} \leq \frac{2(F(\wts_0)-F^*)}{(J+1) \eta \gamma' } +  \frac{L\eta\sigma^2}{Km\gamma'}$$
where $F^*=\min_{\wts} F(\wts)$.
\end{thm}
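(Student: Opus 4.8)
The plan is to reuse the recursion already established inside the proof of \Cref{thm:error kasync} and to stop one step short of where strong convexity is invoked. Recall that inequality \eqref{recursion2} was obtained using only the $L$-smoothness of $F$, \Cref{lem:bias-variance} (which uses unbiasedness), the staleness assumption $\E{|| \nabla F(\wts_{j}) - \nabla F(\wts_{\tau(l,j)})||_2^2 } \leq \gamma \E{|| \nabla F(\wts_{j}) ||_2^2 }$, \Cref{lem:assumption4} (bounded variance of the summed stochastic gradients), the learning-rate restriction $\eta \leq \frac{1}{2L(M_G/Km + 1/K)}$, and \Cref{lem:delay} (the $p_0$ argument) — none of which uses convexity of $F$. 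Hence the same chain of inequalities is valid for a general $L$-smooth $F$ and, after merging the two $\E{||\nabla F(\wts_{j})||_2^2}$ terms, gives
\begin{equation*}
\E{F(\wts_{j+1})} \leq \E{F(\wts_{j})} - \frac{\eta}{2}\left(1-\gamma+\frac{p_0}{2}\right)\E{||\nabla F(\wts_{j})||_2^2} + \frac{L\eta^2\sigma^2}{2Km},
\end{equation*}
that is, $\E{F(\wts_{j+1})} \leq \E{F(\wts_{j})} - \frac{\eta\gamma'}{2}\E{||\nabla F(\wts_{j})||_2^2} + \frac{L\eta^2\sigma^2}{2Km}$ with $\gamma' = 1-\gamma+\frac{p_0}{2}$.

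Next I would isolate the gradient-norm term, writing $\frac{\eta\gamma'}{2}\E{||\nabla F(\wts_{j})||_2^2} \leq \E{F(\wts_{j})} - \E{F(\wts_{j+1})} + \frac{L\eta^2\sigma^2}{2Km}$, and telescope over $j = 0,1,\dots,J$. The right-hand side collapses to $F(\wts_0) - \E{F(\wts_{J+1})} + (J+1)\frac{L\eta^2\sigma^2}{2Km}$, and since $F(\wts_{J+1}) \geq F^*$ deterministically we may bound $-\E{F(\wts_{J+1})} \leq -F^*$. Dividing through by $(J+1)\frac{\eta\gamma'}{2}$ then yields exactly
\begin{equation*}
\frac{1}{J+1}\sum_{j=0}^J \E{||\nabla F(\wts_j)||_2^2} \leq \frac{2(F(\wts_0)-F^*)}{(J+1)\eta\gamma'} + \frac{L\eta\sigma^2}{Km\gamma'}.
\end{equation*}

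The only point that genuinely needs care — and hence the "main obstacle," though it is a mild one — is verifying that every step (a)--(e) producing \eqref{recursion2} is indeed convexity-free, in particular that \Cref{lem:delay} and \Cref{lem:assumption4} rely solely on the unbiasedness and bounded-variance assumptions and on the probabilistic lower bound $p_0$, not on any curvature property of $F$. Once that is confirmed the remainder is a routine telescoping argument. A minor but worth-stating prerequisite is that $\gamma' > 0$, so that dividing by $\eta\gamma'$ is legitimate; this holds since $\gamma \leq 1$ and $p_0 \geq 0$ force $\gamma' \geq p_0/2 \geq 0$, with strict positivity whenever $\gamma < 1$ or $p_0 > 0$.
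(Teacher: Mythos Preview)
Your proposal is correct and follows essentially the same route as the paper: rearrange the convexity-free recursion \eqref{recursion2} to isolate $\E{||\nabla F(\wts_j)||_2^2}$, telescope over $j=0,\dots,J$, and bound $\E{F(\wts_{J+1})}\geq F^*$. Your write-up is in fact slightly more careful than the paper's, since you explicitly check that steps (a)--(e) use no curvature assumption and you note the requirement $\gamma'>0$ for the division to be valid.
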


\begin{proof}
Recall the recursion derived in the last proof in \eqref{recursion2}. After re-arrangement, we obtain the following:
\begin{align}
\E{||\nabla F(\wts_j)   ||_2^2} & \leq \frac{2(\E{F(\wts_{j})}-\E{F(\wts_{j+1}}))}{\eta \gamma'} + \frac{L\eta\sigma^2}{Km\gamma'}.
\end{align}

Taking summation from $j=0$ to $j=J$, we get,
\begin{align}
\frac{1}{J+1}  \sum_{j=0}^J \E{|| \nabla F(\wts_j)   ||_2^2}    & \leq\frac{2(\E{F(\wts_0)}-\E{F(\wts_J)})}{(J+1) \eta \gamma' } +  \frac{L\eta\sigma^2}{Km\gamma'} \nonumber \\
& \overset{(a)}{\leq}\frac{2(F(\wts_0)-F^*)}{(J+1) \eta \gamma' } +  \frac{L\eta\sigma^2}{Km\gamma'}.
\end{align}
Here (a) follows since we assume $\wts_0$ to be known and also from   $\E{F(\wts_J)} \geq F^*$.
\end{proof}


\subsection{Variable Learning Rate Schedule} 
\label{subsec:variable learning rate}
We propose a new heuristic for learning rate schedule that is more stable than fixed learning rate for asynchronous SGD. Our learning rate schedule is $
\eta_j = \min\left\{   \frac{C}{||\wts_j-\wts_{\tau(j)} ||_2^2}, \eta_{max}
\right\}
$, where $\eta_{max}$ is a suitably large value of learning rate beyond which the convergence diverges. This heuristic is inspired from the assumption in Theorem \ref{thm:variable learning rate} given by $\eta_j\E{ ||\wts_j -\wts_{\tau(j)} ||_2^2} \leq C$. In this section, we derive the accuracy trade-off mentioned in Theorem \ref{thm:variable learning rate} based on this assumption.

\begin{proof}[Proof of Theorem \ref{thm:variable learning rate}] Following steps similar to \eqref{lipschitz condition}, we first obtain the following:
\begin{align}
 F(\wts_{j+1})  
 &\leq  F(\wts_{j}) - \frac{\eta_j}{2}||\nabla F(\wts_{j})||_2^2 - \frac{\eta_j}{2}||\mathbf{v}_j||_2^2 
 \nonumber \\
& \hspace{2cm} + \frac{\eta_j}{2}||\nabla F(\wts_{j})- \mathbf{v}_t ||_2^2 + \frac{L\eta_j^2}{2}||\mathbf{v}_j||_2^2. 
\end{align}
Now taking expectation, we obtain the following result.
\begin{align}
\E{F(\wts_{j+1})}  
& 
\overset{(a)}{\leq}  \E{F(\wts_{j})} - \frac{\eta_j}{2}\E{||\nabla F(\wts_{j})||_2^2} - \frac{\eta_j}{2}\E{||\mathbf{v}_j||_2^2}  
 + \frac{ \eta_j}{2} \E{|| \mathbf{v}_j||_2^2}
 \nonumber \\
&\hspace{1.8cm}
- \frac{\eta_j}{2}\E{||\nabla F(\wts_{\tau(j)})||_2^2} 
 +   \frac{\eta_j}{2}\E{|| \nabla F(\wts_{j}) - \nabla F(\wts_{\tau(j)})||_2^2 }
 + \frac{L\eta_j^2}{2}\E{||\mathbf{v}_j||_2^2}
\nonumber \\
& 
\overset{(b)}{\leq} 
\E{F(\wts_{j})} - \frac{\eta_j}{2}\E{||\nabla F(\wts_{j})||_2^2}  
  - \frac{\eta_j}{2}\E{||\nabla F(\wts_{\tau(j)})||_2^2} + \frac{CL^2}{2}  + \frac{L\eta_j^2}{2}\E{||\mathbf{v}_j||_2^2} \nonumber \\
& 
\overset{(c)}{\leq}
\E{F(\wts_{j})} - \frac{\eta_j}{2}\E{||\nabla F(\wts_{j})||_2^2} +  \frac{CL^2}{2}+ \frac{L\eta_j^2\sigma^2}{2m} 
\nonumber \\
&
\hspace{1.8 cm}- \frac{\eta_j}{2} \left(1-L\eta_j(\frac{ M_G}{m}+1) \right)\E{||\nabla F(\wts_{\tau(j)})||_2^2} 
\nonumber \\
& \overset{(e)}{\leq}
\E{F(\wts_{j}) } - \frac{\eta_j}{2}\E{||\nabla F(\wts_{j})||_2^2 } 
  +  \frac{CL^2}{2}+ \frac{\eta_j^2L\sigma^2}{2m} - \frac{\eta_j}{4}\E{||\nabla F(\wts_{\tau(j)})||_2^2 }.
\end{align}
Here (a) follows from \cref{termdiff}, (b) follows from \cref{eq:stalenessbound}, (c) follows from Assumption 4 and (d) follows as $\eta_j \leq \frac{1}{2L(\frac{M_G}{m}+1 )}$. Let us define $\Delta_j= \frac{CL^2}{2}+ \frac{\eta_j^2L\sigma^2}{2m}  $. Thus, the recursion can be written as,
\begin{align}
\E{F(\wts_{j+1})}  &\leq \E{F(\wts_{j}) } - \frac{\eta_j}{2}\E{||\nabla F(\wts_{j})||_2^2 } 
- \frac{\eta_j}{4}\E{||\nabla F(\wts_{\tau(j)})||_2^2 } +  \Delta_j 
\nonumber \\
&\overset{(e)}{\leq}\E{F(\wts_{j})} - \frac{\eta_j}{2}(1+ \frac{p_0}{2})\E{||\nabla F(\wts_{j})||_2^2 }  + \Delta_j.
\end{align} 
Here (e) follows from \Cref{lem:delay}.
If the loss function $F(\wts)$ is strongly convex with parameter $c$, then for all $\wts$, we have 
$2c(F(\wts)-F^* ) \leq ||\nabla F(\wts)||_2^2$.
Using this result, we obtain
\begin{align}
 \E{F(\wts_{j+1})} -F^*  &\leq (1- \eta_j(1+ \frac{p_0}{2})c ) (\E{F(\wts_{j})} -F^*) 
 + \Delta_j 
\nonumber 
\\
& \leq (1- \eta_j(1+ \frac{p_0}{2})c )(1- \eta_{j-1}(1+ \frac{p_0}{2})c ) (\E{F(\wts_{j-1})} -F^*) \nonumber  \\  & \hspace{1cm} + (1- \eta_j(1+ \frac{p_0}{2})c )\Delta_{j-1} +  \Delta_j  \nonumber \\
& \leq (1-\rho_j)(1-\rho_{j-1})\dots(1-\rho_0)(\E{F(\wts_{0})}- F^*) + \Delta,
\end{align}
where $\rho_j=\eta_j(1+ \frac{p_0}{2})c$ and  $\Delta= \Delta_j + (1-\rho_j)\Delta_{j-1}+  \dots + (1-\rho_j)(1-\rho_{j-1})\dots(1-\rho_1)\Delta_{0} $.
\end{proof}

\section{SIMULATION SETUP DETAILS}
\label{sec:simulation_setup}
MNIST \cite{lecun1998mnist}: For the simulations on MNIST dataset, we first convert the $28\times 28$ images into single vectors of length $784$. We use a single layer of neurons followed by soft-max cross entropy with logits loss function. Thus effectively the parameters consist of a weight matrix $\bm{W}$ of size $784 \times 10$ and a bias vector $\bm{b}$ of size $1\times 10$. We use a regularizer of value $0.01$, mini-batch size $m=1$, and learning rate $\eta=0.01$. For implementation we used Tensorflow with Python3. Thus, the model is as follows:
\begin{verbatim}
X=tf.placeholder(tf.float32,[None,784])
Y=tf.placeholder(tf.float32,[None,10])
W=tf.Variable(tf.random_normal(shape=[784,10],
               stddev=0.01), name="weights")
b=tf.Variable(tf.random_normal(shape=[1,10],
               stddev=0.01),  name="bias")

logits=tf.matmul(X,W) + b
entropy=tf.nn.softmax_cross_entropy_with
         _logits(logits=logits,labels=Y) +
                   lamda*tf.square(tf.norm(W))

loss=tf.reduce_mean( entropy)
\end{verbatim}

For the run-time simulations, we generate random variables from the respective distributions in python to represent the computation times.

CIFAR10 \cite{krizhevsky2009learning}: For the CIFAR10 simulations,  similar to MNIST, we convert the images into vectors of length $1024$. We combine the three colour variants in the ratio $ [0.2989,0.5870,0.114] $ to generate a single vector of length $1024$ for every image. We use a single layer of neurons again followed by soft-max cross entropy with logits in tensorflow. Thus, the parameters consist of a weight matrix $\bm{W}$ of size $ 1024 \times 10$ and a bias vector $ \bm{b} $ of size $ 1 \times 10 $. We use a mini-batch size of $250$, regularizer of $0.05$.

We use a similar model as follows:
\begin{verbatim}
X=tf.placeholder(tf.float32,[None,1024])
Y=tf.placeholder(tf.float32,[None,10])
W=tf.Variable(tf.random_normal(shape=[1024,10],
            stddev= 0.01),name="weights")
b=tf.Variable(tf.random_normal(shape=[1,10],
            stddev = 0.01),name="bias")
            
logits=tf.matmul(X,W) +  b
entropy=tf.nn.softmax_cross_entropy_with
           _logits(logits=logits,labels=Y) +
           lamda*tf.square(tf.norm(W))
loss=tf.reduce_mean(entropy)
\end{verbatim}

The computation time as each learner is generated from exponential distribution.


\section{CHOICE OF HYPERPARAMETERS}
\label{sec:hyperparameters}
Our analysis techniques can also inform the choice of hyperparameters for synchronous and $K$-sync SGD.

\subsection{Varying $K$ in $K$-sync}
We first perform some simulations of $K$-sync SGD applied on the MNIST dataset. For the simulation set-up, we consider $8$ parallel learners with fixed mini-batch size $m=1$ and fixed learning rate $0.05$. The number of learners to wait for in $K$-sync, \textit{i.e.} $K$ is varied and the error-runtime trade-off is observed. The runtimes are generated from a shifted exponential distribution given by $X_i \sim m + \exp{\mu}$.

\begin{figure}[t]
\centerline{\includegraphics[height=4cm]{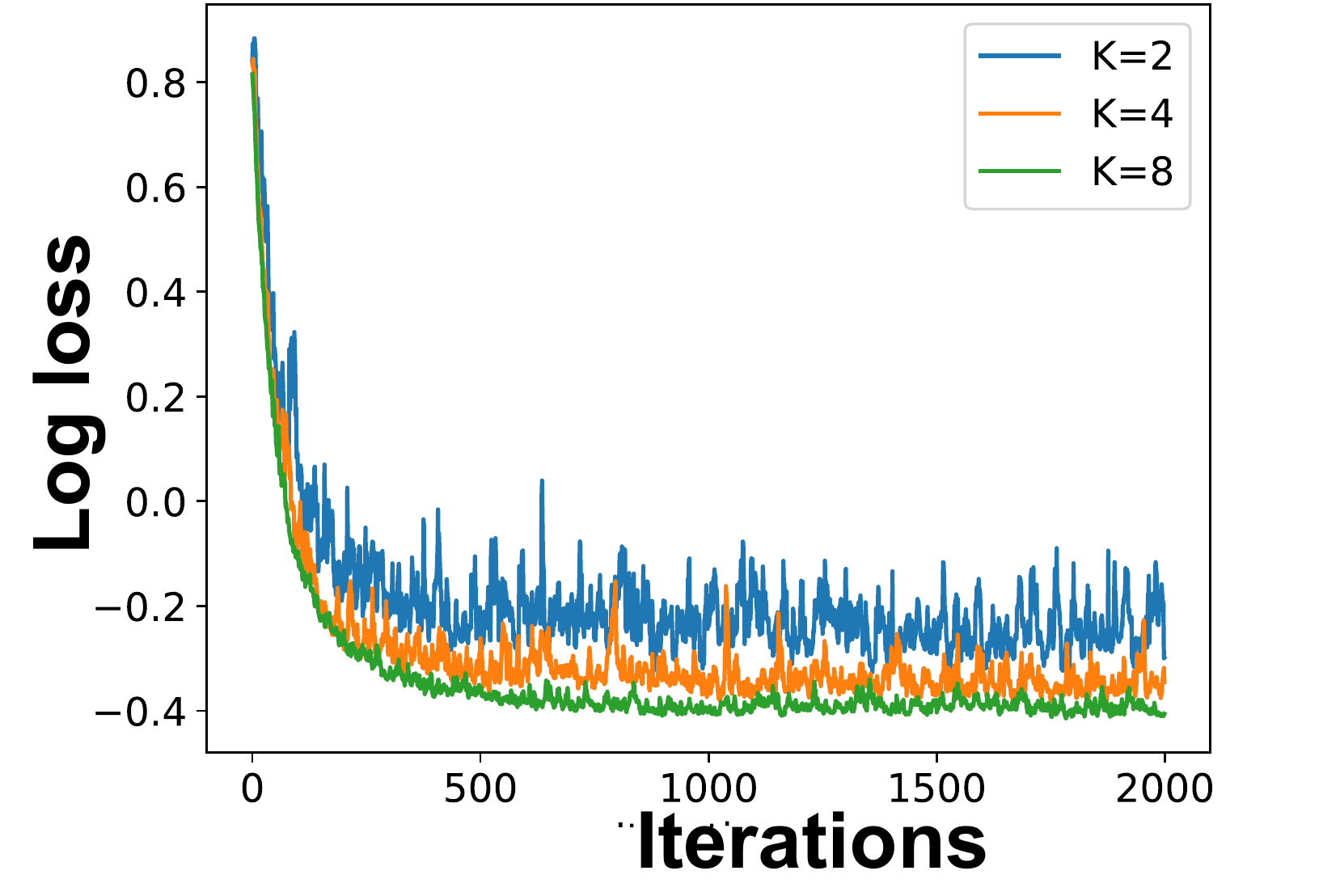}}
\caption{Error-Iterations tradeoff on MNIST dataset: Simulation of $K$-sync SGD for different values of $K$. Observe that accuracy improves with increasing $K$ which  means increasing effective batch size ($\eta=0.05$).  }
\label{fig:hyperparameter1}
\end{figure}

\begin{figure}[t]
\centerline{\includegraphics[height=4cm]{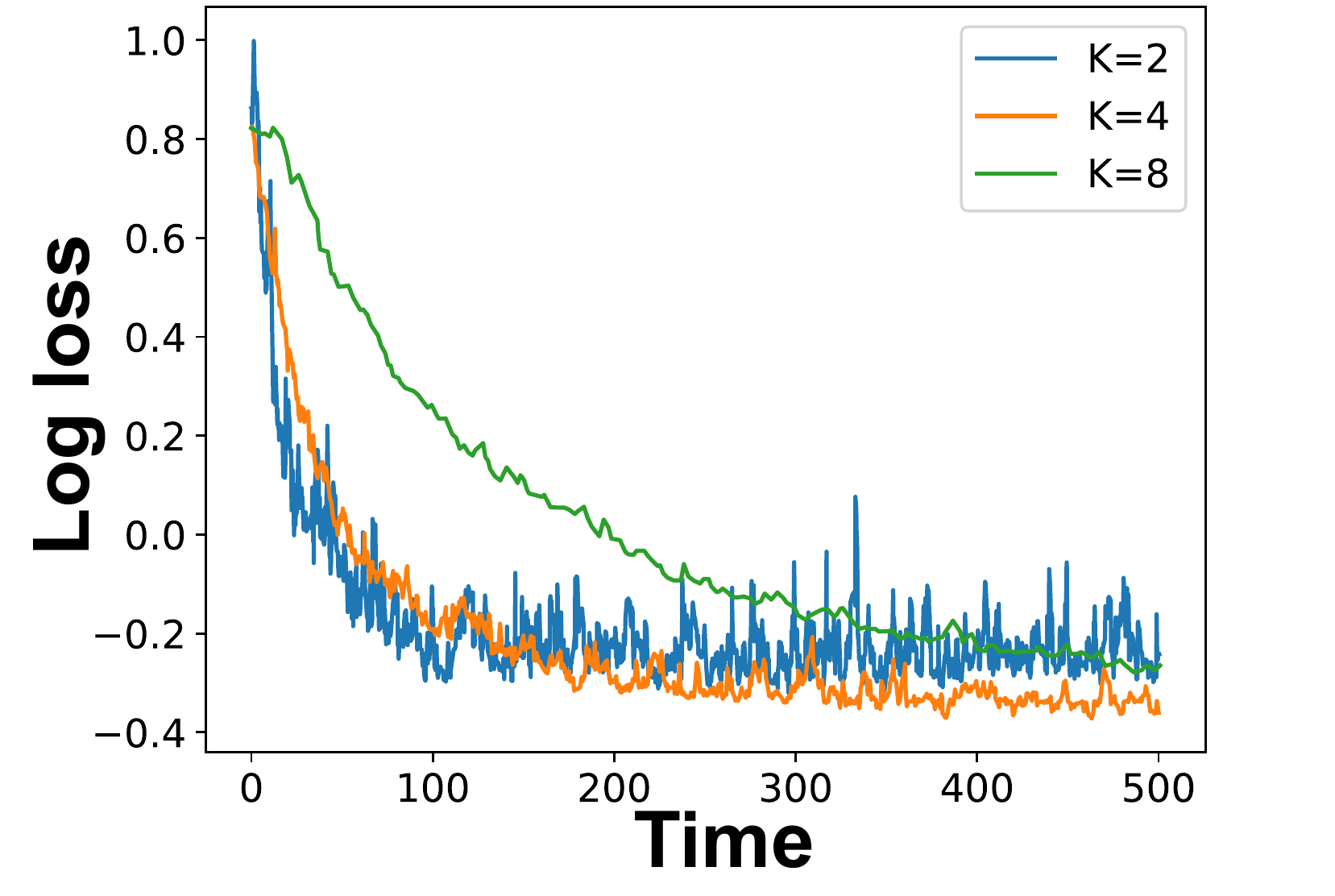}}
\caption{Error-Runtime tradeoff on MNIST dataset: Simulation of $K$-sync SGD for different values of $K$ ($\eta=0.05$).}\vspace{-0.3cm}
\label{fig:hyperparameter2}
\end{figure}

Observe that in the plot of error with the number of iterations in \Cref{fig:hyperparameter1}, the error improves with increasing $K$, which means increasing the effective mini-batch and reducing the variability in the gradient. However, if we look at the same error plotted against runtime (See \Cref{fig:hyperparameter2}) instead of the number of iterations, observe that increasing $K$ naively does not always lead to a better trade-off. As $K$ increases, the central PS has to wait for more learners to finish at every iteration, thus suffering from increased straggler effect. The best error-runtime trade-off is obtained at an intermediate $K=4$. Thus, the current analysis informs the optimal choice of $K$ to achieve a good error-runtime trade-off.

\subsection{Varying mini-batch $m$ }
We consider the training of Alexnet on ImageNet dataset \cite{krizhevsky2012imagenet} using $P=4$ learners. For this simulation, we perform fully synchronous SGD, \textit{i.e.} $K$-sync with $K=P=4$. We fix the learning rate and vary the mini-batch used for training. The runtimes are generated from a shifted exponential distribution given by $X_i \sim m + \exp{\mu}$, that depends on the mini-batch size. Intuitively, this distribution makes sense since to compute one mini-batch, a processor would atleast need a time $m$ (Work Complexity). However, due to delays, it has the additional exponential tail. The error-runtime trade-offs are observed in \Cref{fig:hyperparameter3} and \Cref{fig:hyperparameter4}.

\begin{figure}[ht]
\centerline{\includegraphics[height=4cm]{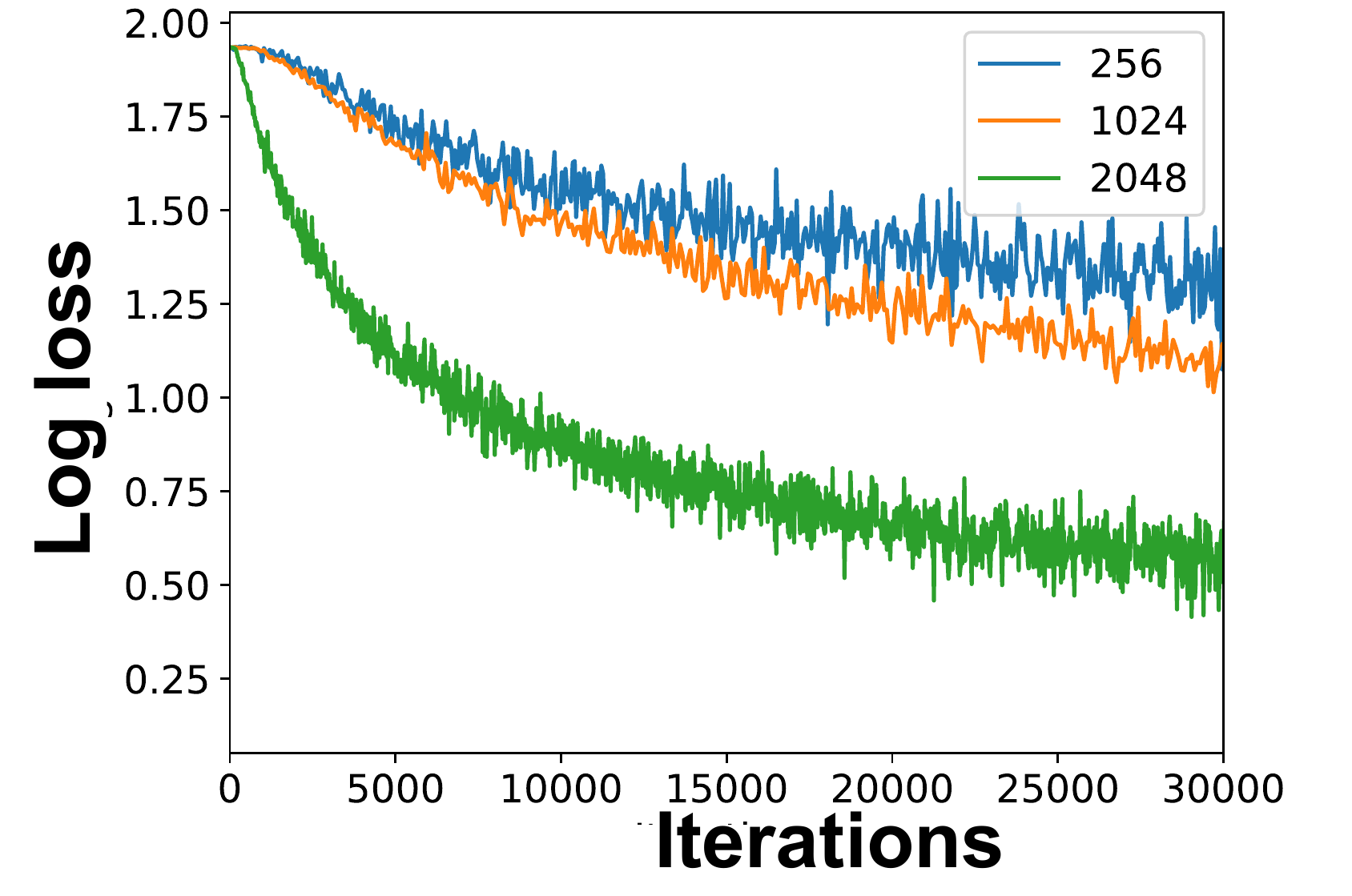}}
\caption{Error-Iterations tradeoff on IMAGENET dataset: Simulation of fully synchronous SGD ($K=P=4$) for different values of mini-batch $m$. Observe that accuracy improves with increasing $m$ which means increasing effective batch size.}
\label{fig:hyperparameter3}
\end{figure}

\begin{figure}[ht]
\centerline{\includegraphics[height=4cm]{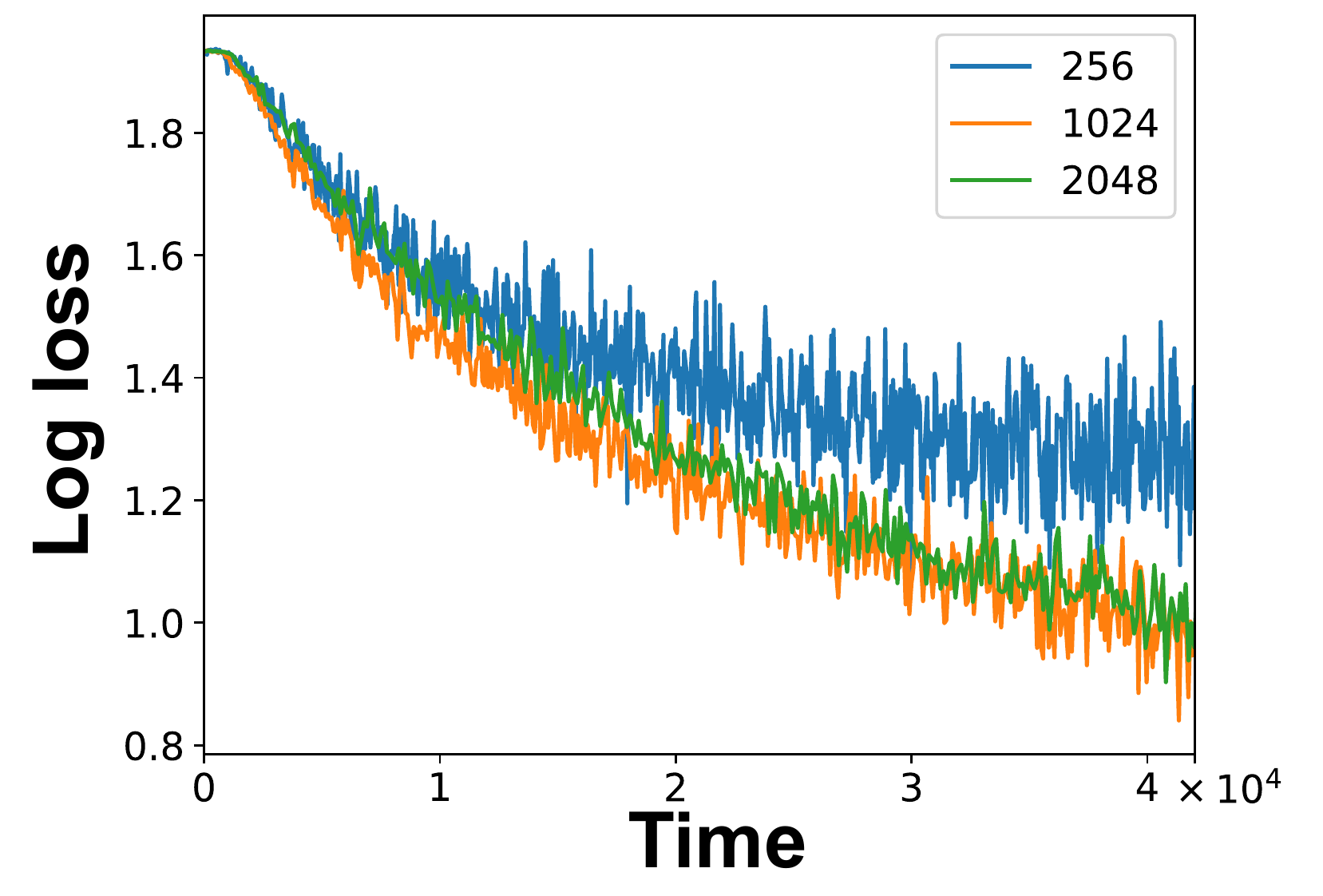}}
\caption{Error-Runtime tradeoff on IMAGENET dataset: Same simulation of fully synchronous SGD ($K=P=4$) for different values of mini-batch $m$ plotted against time. Observe that higher $m$ does not necessarily mean the best trade-off with runtime as higher mini-batch also has longer time.}
\label{fig:hyperparameter4}
\end{figure}

Again, observe that the plot of error with the number of iterations improves with the mini-batch size, as also expected from theory. However, increasing the mini-batch also changes the runtime distribution. Thus, when we plot the same error against expected runtime, we again observe that increasing the mini-batch size naively does not necessarily lead to the best trade-off. Instead, the best error-runtime trade-off is observed with an intermediate mini-batch value of $1024$. Thus, our analysis informs the choice of the optimal mini-batch.
\end{document}